\def\Figref#1{Figure~\ref{#1}}
\def\eqref#1{(\ref{#1})}
\def\1{\bm{1}}
\def\rvc{{\mathbf{c}}}
\def\rve{{\mathbf{e}}}
\def\rvu{{\mathbf{i}}}
\def\rvm{{\mathbf{m}}}
\def\rvu{{\mathbf{u}}}
\def\rvx{{\mathbf{x}}}
\def\rvy{{\mathbf{y}}}
\def\rvz{{\mathbf{z}}}
\DeclareMathAlphabet{\mathsfit}{\encodingdefault}{\sfdefault}{m}{sl}
\SetMathAlphabet{\mathsfit}{bold}{\encodingdefault}{\sfdefault}{bx}{n}
\def\gA{{\mathcal{A}}}
\def\gC{{\mathcal{C}}}
\def\gD{{\mathcal{D}}}
\def\gE{{\mathcal{E}}}
\def\gL{{\mathcal{L}}}
\def\gO{{\mathcal{O}}}
\def\gP{{\mathcal{P}}}
\def\gQ{{\mathcal{Q}}}
\def\gS{{\mathcal{S}}}
\def\gV{{\mathcal{V}}}
\newcommand{\E}{\mathbb{E}}
\newcommand{\R}{\mathbb{R}}
\DeclareMathOperator*{\argmin}{arg\,min}
\newcommand{\cat}{\mathrm{Cat}}
\newcommand{\mask}{\texttt{[MASK]}}
\newcommand{\xhat}{\hat{\rvx}}
\newcommand{\xbar}{\bar{\rvx}}
\newcommand{\xtilde}{\tilde{\rvx}}
\definecolor{lightblue}{RGB}{100,149,237} 
\theoremstyle{plain}
\newtheorem{theorem}{Theorem}[section]
\theoremstyle{definition}
\theoremstyle{remark}
\newtheorem{implication}[theorem]{Implication}
\icmltitlerunning{Test-Time Anchored Posterior Sampling}
\begin{document}

\twocolumn[
  \icmltitle{Test-Time Anchoring for Discrete Diffusion Posterior Sampling}



  \icmlsetsymbol{equal}{*}

  \begin{icmlauthorlist}
    \icmlauthor{Litu Rout}{yyy,comp}
    \icmlauthor{Andreas Lugmayr}{comp}
    \icmlauthor{Yasamin Jafarian}{comp}
    \icmlauthor{Srivatsan Varadharajan}{comp}
    \icmlauthor{Constantine Caramanis}{yyy}
    \icmlauthor{Sanjay Shakkottai}{yyy}
    \icmlauthor{Ira Kemelmacher-Shlizerman}{comp}
  \end{icmlauthorlist}

  \icmlaffiliation{yyy}{The University of Texas at Austin}
  \icmlaffiliation{comp}{Google}

  \icmlcorrespondingauthor{Litu Rout}{litu.rout@utexas.edu}
  \icmlkeywords{Machine Learning, ICML}

  \vskip 0.3in
]



\printAffiliationsAndNotice{}  

\begin{abstract}
  While continuous diffusion models have achieved remarkable success, discrete diffusion offers a unified framework for jointly modeling text and images. Beyond unification, discrete diffusion provides faster inference, finer control, and principled training-free guidance, making it well-suited for posterior sampling. Existing approaches to posterior sampling using discrete diffusion face severe challenges: derivative-free guidance yields sparse signals, continuous relaxations limit applicability, and split Gibbs samplers suffer from the curse of dimensionality. To overcome these limitations, we introduce Anchored Posterior Sampling (APS), built on two key innovations: \emph{quantized expectation} for gradient-like guidance in discrete embedding space, and \emph{anchored remasking} for adaptive decoding. APS achieves state-of-the-art performance among discrete diffusion samplers on both linear and nonlinear inverse problems across the standard image benchmarks. We demonstrate the generality of APS through training-free stylization and text-guided editing. We further apply APS to a large-scale diffusion language model, showing consistent improvement in question answering.
\end{abstract}

\section{Introduction}
\label{sec-intro}
Diffusion models have become the state-of-the-art across a wide range of generative tasks, including images~\citep{dalle,ldm,imagen,sd3,flux}, audio~\citep{audio,veo}, and video~\citep{video,sora,veo}. Most of this progress has been driven by \emph{continuous} diffusion models, where Gaussian noise is gradually added in pixel or latent space and then reversed by a learned denoiser~\citep{sohl2015deep,ddpm}. Recently, however, \emph{discrete} diffusion has emerged as a powerful alternative, showing superior performance in modeling categorical distributions such as text~\citep{sedd,mdlm,md4,llada,adlm} and images~\citep{md4,mmada}. Discrete diffusion further enables a unified framework for both image and text generation, supporting multimodal generation and editing.

Beyond unification, discrete diffusion offers several advantages over continuous diffusion that are particularly relevant for posterior sampling. First, it achieves \emph{faster inference}, often generating high-quality samples in significantly fewer reverse steps~\citep{md4,udlm,reddit}. Second, it provides \emph{finer control}: the model predicts a normalized categorical distribution per token (e.g., a pixel or a patch), which decouples different parts of the image, unlike Gaussian diffusion 
where the entire image is coupled.
Third, it enables \emph{training-free posterior sampling}: since the model outputs full conditional distributions at each step, these can be reweighted by the likelihood to yield a better posterior estimate~\citep{g2d2,sgdd}.
This property unlocks precise image editing and inverse problem solving without additional training (\S\ref{sec-exps}), motivating the use of discrete diffusion as a prior.

\begin{figure*}[!t]
  \centering
  \vspace{-2ex}
  \includegraphics[width=\linewidth, trim={0cm 0cm 0cm 0cm}]{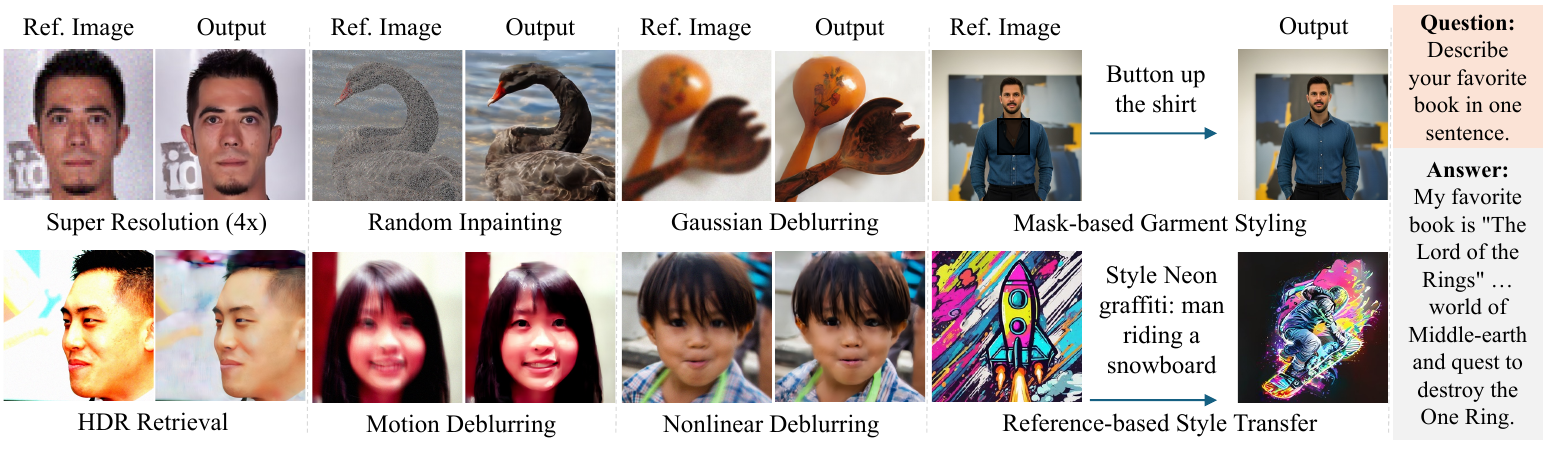} 
  \caption{We introduce Anchored Posterior Sampling (APS), built on two key innovations: (i) \emph{quantized expectation}, which provides gradient-like guidance in discrete embedding space, and (ii) \emph{anchored remasking}, which enables adaptive decoding. APS supports a variety of linear and nonlinear restoration tasks (left three columns), as well as garment styling and style transfer (fourth column).
  Furthermore, APS improves question answering by guiding a 7B-parameter diffusion language model without additional training (last column).
  }
  \label{fig:main}
  \vspace{-1ex}
\end{figure*}

State-of-the-art posterior samplers~\citep{stsl,p2l,daps} use continuous diffusion as a prior.
These approaches rely on guiding the reverse diffusion process using likelihood gradients in continuous latent spaces~\citep{psld,stsl,p2l,daps}.
This is infeasible for \emph{discrete} diffusion due to non-differentiability in token space. 
Derivative-free discrete methods~\citep{svdd} inspired by reinforcement learning provide weak guidance.
G2D2~\citep{g2d2} uses Gumbel-Softmax relaxation but it is limited to discrete tokens with continuous embeddings. SGDD~\citep{sgdd} introduces a split Gibbs sampler, but suffers from exponential complexity in sequence length~\citep{chewi2023log}. 
These methods unmask tokens in \emph{random order}, which is suboptimal compared to adaptive decoding strategies in language modeling~\citep{mmada,adlm}. 
These limitations underscore the need for a discrete diffusion posterior sampler with adaptive decoding, leveraging next-generation multimodal models~\citep{mmada,gemini}.

In this work, we take the first step towards leveraging multimodal discrete diffusion models~\citep{mmada,dream} for posterior sampling.
We introduce two key algorithmic innovations: (i) \emph{quantized expectation}, which provides gradient-like guidance in purely discrete embedding space by updating the full conditional probability table~(\S\ref{sec-quant-exp}), and (ii) \emph{anchored remasking}, an adaptive decoding strategy that unmasks important ``anchor'' tokens early during inference~(\S\ref{sec-anchored-remask}).
Together, these techniques enable discrete diffusion posterior sampling.
Like classifier-free guidance~\citep{cfg} and Tweedie-based posterior sampling~\citep{dps,psld} in continuous diffusion, our approach sacrifices asymptotic correctness in favor of scalable posterior inference.
As illustrated in \Figref{fig:main}, we show up to 35.82\% LPIPS and 10.94\% PSNR improvements on linear and nonlinear inverse problems including training-free stylization (\S\ref{sec-exps}) .
We further demonstrate generalization of our algorithm to diffusion language models achieving up to 21.99\% gain.

\textbf{Our contributions are summarized below.}
\vspace{-2ex}
\begin{itemize}[left=0pt, noitemsep]
    \item \textbf{Theoretical results:} we derive (i) a \emph{training} upper bound, $\gL_{\mathrm{DDPS}}$ (\textbf{Theorem~\ref{thm-train-ddps-main}}), that integrates measurements into the reverse diffusion process, and (ii) a \emph{test-time} bound, $\gL_{\mathrm{APS}}$ (\textbf{Theorem~\ref{thm-testtime-aps-main}}), that reuses a pretrained denoiser without expensive retraining per downstream task (\S\ref{sec-aps}). 
    \item \textbf{Quantized expectation:} a novel strategy to update \textit{all} entries of the conditional probability table, enabling strong \textit{gradient-like} guidance in discrete diffusion~(\S\ref{sec-quant-exp}).
    \item \textbf{Anchored remasking:} an adaptive decoding strategy to unmask ``anchor'' tokens early that better utilizes the model's capacity
    to decode remaining tokens~(\S\ref{sec-anchored-remask}).
    \item \textbf{Extensive evaluation} on \emph{linear} (super resolution, Gaussian deblurring, inpainting, motion deblur) and \emph{nonlinear} (HDR, nonlinear deblurring) inverse problems on FFHQ and ImageNet.
    We further demonstrate \emph{training-free stylization} and  \emph{diffusion language model guidance}, 
    highlighting flexibility beyond typical inverse problems~(\S\ref{sec-exps}).
\end{itemize}

\vspace{-2ex}
\section{Preliminaries}
\label{sec-prelim}
\vspace{-1ex}
\textbf{Visual Tokenizer.}  
A cornerstone of modern image tokenization is the VQ-VAE~\citep{vqvae}, which maps images into discrete codebook indices. It consists of an encoder $\gE: \R^{H\times W\times 3} \rightarrow \R^{h\times w\times d}$ that projects an image into a latent embedding of size $h\times w\times d$. The embeddings are reshaped into a sequence $\rve \in \R^{L\times d}$ of length $L=h\times w$. We denote images by $\xhat$. The encoder produces $\rve = \gE(\xhat)$ where each embedding $\rve^l$ for $l=1,\dots,L$ is quantized to the nearest codebook entry $\rvc_j \in \gC$, where $\gC \in \R^{K\times d}$ is a learned codebook:
\begin{align}
    \label{eq-vec-quant}
    \gQ_{\mathrm{vq}}(\rve^l) \coloneqq \rvc_j, \quad j = \argmin_{k \in \{1,\dots,K\}} \left\|\rve^l - \rvc_k \right\|_2.
\end{align}
Codebook entries may be continuous vectors $\rvc_j \in \R^d$ 
or discrete binary embeddings $\rvc_j \in \{-1,+1\}^d$ used in this paper. Binary embeddings are particularly appealing because prior studies~\citep{lfq} show that masked diffusion models degrade in generation quality as continuous vocabulary size grows, yet lookup-free quantization (LFQ) with binary embeddings achieves both strong generation and reconstruction quality. In LFQ, the codebook is not learned but obtained by thresholding the encoder output:
$\gQ_{\mathrm{lfq}}(\rve^l) \coloneqq \text{sign}(\rve^l),$
\begin{align}
    \label{eq-lfq}
    [\gQ_{\mathrm{lfq}}(\rve^l)]_i =
    \begin{cases}
        +1 & \text{if } \rve^l[i] > 0, \\
        -1 & \text{otherwise}.
    \end{cases}
\end{align}
Finally, each image is represented as a sequence of tokens $x = (x^1, \dots, x^L)$ corresponding to the selected indices from \eqref{eq-vec-quant} as $x^l = j$, or equivalently for LFQ the token index is obtained by $x^l = j = \sum_{i=1}^{d} 2^{i-1}\1_{\{\rve_l[i] > 0\}}$.
Equivalently, we represent each sequence as a mixture of one-hot vectors $\rvx = (\rvx^1, \dots, \rvx^L)$, where $\sum_{k=1}^K \rvx^l[k] = 1, \rvx^l[k] \geq 0$ and $\rvx^l[x^l] = 1$. This discrete token representation forms the basis of our masked diffusion posterior sampler.

\noindent {\em Notation:} We use `$\coloneqq$' to indicate architectural and parameterization choices, to distinguish from `$=$' which is an identity that follows from mathematical derivations.

\textbf{Masked diffusion}  
defines a generative model over the discrete state space $\gS = \gV^L$, where $\gV = \{1,\dots,K,K+1\}$ consisting of $K$ codebook indices and a special \texttt{[MASK]} token $\rvm$ corresponding to index $K+1$. Let $X=\{x^l\}_{l=1}^L \in \gS$ denote a sequence of tokens (or equivalently its one-hot representation $\rvx = \{\rvx^l\}_{l=1}^L$).
The data distribution is denoted by $q(\cdot)$ over $\gS$. The goal of masked diffusion is to learn a generative model that samples from $q(\cdot)$.
Masked diffusion models (MDMs)~\citep{d3pm,sedd,mdlm,md4} construct a discrete-time Markov chain with $T$ steps, parameterized by $\alpha_t$ with $t \in [0,1]$. The forward process gradually replaces each token with the \mask token:
$q(\rvz_t | \rvx) = \prod_{l=1}^{L} q(\rvz_t^l | \rvx),$
\begin{align}
    \label{eq-fwd}
    q(\rvz_t^l | \rvx) = \cat\!\left(\rvz_t^l;\alpha_t \rvx^l + (1-\alpha_t)\rvm\right),
\end{align}
where $\rvz_t^l$ is either preserved from $\rvx^l$ with probability $\alpha_t$ or replaced with $\rvm$.
The corresponding reverse process is parameterized by a neural network $p_\theta$, which predicts categorical distributions over tokens. For each token position $l$, the transition probability is
$p_\theta(\rvz_s^l|\rvz_t) \coloneqq q(\rvz_s^l | \rvz_t^l, \rvx_\theta(\rvz_t)) = $
\begin{align}
    \label{eq-rev}
    \begin{cases}
        \cat(\rvz_s^l; \rvz_t^l), & \rvz_t^l \neq \rvm, \\
        \cat\!\left(\rvz_s^l; \frac{\alpha_s-\alpha_t}{1-\alpha_t}\rvx_\theta(\rvz_t) + \frac{1-\alpha_s}{1-\alpha_t}\rvm \right), & \rvz_t^l = \rvm,
    \end{cases}
\end{align}
where $\rvx_\theta(\rvz_t)$ denotes the network prediction. 
Training minimizes the negative evidence lower bound (NELBO) by aligning the reverse transition $p_\theta(\rvz_s^l|\rvz_t)$ with the inference posterior $q(\rvz_s^l | \rvz_t^l, \rvx)$ derived from \eqref{eq-fwd}. Concretely, the training objective 
$\gL_{\mathrm{NELBO}}(\rvx;\theta)
\coloneqq 
\E_{Z_0 \sim q(\cdot | \rvx)}\big[-\log p_\theta(\rvx | Z_0)\big] 
+ $
\begin{equation}
\label{eq-nelbo}
\begin{aligned}
\sum_{i=1}^{T}\E_{Z_{t}\sim q(\cdot|\rvx)}\!\Big[
\!\frac{\alpha_{t} - \alpha_{s}}{1-\alpha_{t}}\!
\sum_{l=1}^{L}\! \log\langle \rvx^l_\theta(Z_{t}),\! \rvx^l \rangle
\1_{\{Z_{t}^l = \rvm\}}
\Big]
\end{aligned}
\end{equation}
where, for brevity, we drop $i$ from $t(i) = i/T$ and $s(i) = (i-1)/T$. 
The objective~\eqref{eq-nelbo} admits a score-based interpretation~\citep{sedd} and supports time-independent parameterization~\citep{ou2025your}, which simplifies training and improves scalability~\citep{smdm,llada}.
\section{Test-Time Anchored Posterior Sampling}
\label{sec-aps}

In posterior sampling, our goal is to construct a Markov chain to sample from
the posterior:
$ q(\rvx|\rvy) \propto q(\rvy|\rvx)\,q(\rvx), $
where $\rvy = \gA(\gD(\rvx)) + \sigma \varepsilon$ with measurement operator $\gA(\cdot)$, image decoder $\gD(\cdot)$, Gaussian noise $\varepsilon \sim \mathcal{N}(0,I)$, and standard deviation $\sigma$. When $\gA$ is linear the task reduces to a \emph{linear inverse problem}; otherwise a \emph{nonlinear inverse problem}. We approximate $q(\rvx|\rvy)$ with a tractable sampler $p_\varphi(\rvx|\rvy)$ using a masked diffusion model $p_\theta(\rvx)$ previously trained to approximate $q(\rvx)$.

To sample from the posterior $q(\cdot|\rvy)$, we construct a Markov chain with the joint distribution defined as:
$p_\varphi(\rvx,\rvz_{0:1}|\rvy)
= p_\varphi(\rvz_1|\rvy)\,p_\varphi(\rvx|\rvz_0,\rvy)\,\prod_{i=1}^{T} p_\varphi\!\left(\rvz_{s(i)}|\rvz_{t(i)},\rvy\right)$,
where $\rvz_{0:1} = \rvz_0, \rvz_{1/T}, \dots, \rvz_{1}$.
We parameterize measurement conditional transitions by tilting the unconditional transition~\eqref{eq-rev} with the likelihood of measurements given the current estimate:
$p_\varphi\!\left(\rvz_{s}|\rvz_{t},\rvy\right)
\coloneqq
\prod_{l=1}^{L} p_\varphi(\rvz_s^l| \rvz_t,\rvy), \text{where}
$
\begin{align}
\label{eq-aps-rev-per-token}
p_\varphi(\rvz_s^l| \rvz_t,\rvy)
\;{\propto}\;
q\!\big(\rvz_s^l|\rvz_t^l,\rvx_\varphi(\rvz_t)\big)\, q\!\big(\rvy|\rvx_\varphi(\rvz_t{;\rvz_s^l})\big).
\end{align}
We can compute the measurement likelihood $q(\rvy | \rvx)$ given a sequence of tokens $\rvx = (\rvx^1, \rvx^2, \dots, \rvx^L)$, where each $\rvx^l$ is a one-hot vector. 
During inference, we have access to the model output $\rvx_\varphi(\rvz_t)$ rather than the ground-truth sequence $\rvx$. 
We define the log likelihood given the model's output as 
$\log q(\rvy | \rvx_\varphi(\rvz_t)) \coloneqq \E_{\rvx \sim \rvx_\varphi(\rvz_t)}[\, \log q(\rvy | \rvx) \,]$. 
Similarly, $\log q(\rvy | \rvx_\varphi(\rvz_t; \rvz_s^l))$ denotes the same log likelihood, but with $\rvx^l$ replaced by $\rvz_s^l$ while keeping all other positions drawn from $\rvx_\varphi(\rvz_t)$.
We denote the normalizing factor in \eqref{eq-aps-rev-per-token} by $\mathcal{Z}^l_{\varphi}(\rvz_{t},\rvy)$.

\begin{table*}[!t]
\vspace{-2ex}
\centering
\caption{\textbf{Quantitative results on Super Resolution (4$\times$) and Gaussian Deblurring.} 
APS consistently outperforms prior discrete samplers (G2D2, SGDD, and SVDD-PM) and remains competitive with strong continuous diffusion baselines (shaded {\color{gray!70}gray}).}
\label{tab:sr-deblur-ffhq-imagenet}
\setlength{\tabcolsep}{6pt}
\resizebox{\textwidth}{!}{%
\begin{tabular}{l l cc cc cc cc}
\toprule
& & \multicolumn{4}{c}{(a) FFHQ} & \multicolumn{4}{c}{(b) ImageNet} \\
\cmidrule(lr){3-6} \cmidrule(lr){7-10}
Type & Method & \multicolumn{2}{c}{SR (4$\times$)} & \multicolumn{2}{c}{Deblur} & \multicolumn{2}{c}{SR (4$\times$)} & \multicolumn{2}{c}{Deblur} \\
& & LPIPS $\downarrow$ & PSNR $\uparrow$ & LPIPS $\downarrow$ & PSNR $\uparrow$ & LPIPS $\downarrow$ & PSNR $\uparrow$ & LPIPS $\downarrow$ & PSNR $\uparrow$ \\
\midrule
\rowcolor{gray!10}
Pixel & DPS   & 0.269 & 25.86 & 0.219 & 25.87 & 0.367 & 22.61 & 0.443 & 19.04 \\
      \rowcolor{gray!10}
      & DDRM  & 0.282 & 26.58 & 0.239 & 24.93 & 0.352 & 24.00 & 0.246 & 27.30 \\
      \rowcolor{gray!10}
      & DiffPIR & 0.260 & 26.64 & 0.236 & 27.36 & 0.371 & 23.18 & 0.355 & 22.80 \\
      \rowcolor{gray!10}
      & DAPS   & 0.177 & 29.07 & 0.165 & 29.19 & 0.276 & 25.89 & 0.253 & 26.15 \\
\rowcolor{gray!10}
Latent & PSLD & 0.276 & 27.62 & 0.304 & 27.37 & 0.332 & 24.43 & 0.365 & 24.04 \\
       \rowcolor{gray!10}
       & ReSample & 0.507 & 22.98 & 0.329 & 25.69 & 0.382 & 22.63 & 0.438 & 22.32 \\
       \rowcolor{gray!10}
       & LatentDAPS & 0.182 & 27.48 & 0.234 & 27.93 & 0.276 & 25.06 & 0.345 & 25.05 \\
\midrule
Uniform (Mask) & SVDD-PM & 0.594 & 12.08 & -- & -- & -- & -- & -- & -- \\
             & G2D2 & \underline{0.271} & \underline{26.93} & 0.287 & 26.35 & 0.349 & 23.20 & 0.375 & 22.71 \\
             & SGDD & 0.288 & 25.85 & -- & -- & -- & -- & -- & -- \\
\rowcolor{orange!20}
Mask & \textbf{APS} & \textbf{0.234} & \textbf{27.50} & \textbf{0.276} & \textbf{27.90} & \textbf{0.324} & \textbf{24.30} & \textbf{0.375} & \textbf{24.71} \\
\rowcolor{orange!20}
     & \textbf{APS-L} & \textbf{0.186} & \textbf{28.83} & \textbf{0.241} & \textbf{29.50} & \textbf{0.224} & \textbf{25.74} & \textbf{0.282} & \textbf{26.35} \\
\bottomrule
\end{tabular}
}
\vspace{-3ex}
\end{table*}

\begin{theorem}[Discrete Diffusion Posterior Sampling (DDPS)]
\label{thm-train-ddps-main}
Given a sample $\rvx\sim q$, let $q(Z_{0:1}|\rvx)$ denote the forward noising law of~\eqref{eq-fwd}.
Then, for any measurement $\rvy \sim q(\cdot|\rvx)$, 
$
-\log p_\varphi(\rvx|\rvy)
\le
\gL_{\mathrm{DDPS}}(\rvx,\rvy;\varphi)
\coloneqq
\E_{q(Z_{0}|\rvx)}[-\log p_\varphi(\rvx|Z_0)]
+
{
\sum_{i=1}^{T} \E_{q(Z_{t}| \rvx)} 
\big[
\sum_{l=1}^{L}
\log \mathcal{Z}^l_{\varphi}(Z_{t},\mathbf{y})
\big]
+
}
$
$
\sum_{i=1}^{T} \E_{q(Z_{t}|\rvx)} 
\Bigg[
\frac{\alpha_{t}-\alpha_{s}}{1-\alpha_{t}}
\sum_{l=1}^{L}
\log\langle\rvx^l_\varphi(Z_{t}),\rvx^l\rangle
\1_{\{Z_{t}^l=\rvm\}}
\\
$
$
-
\E_{q({Z_{s}}|Z_{t}, \rvx)} 
\Big[
\sum_{l=1}^{L}
\log q(\rvy| \rvx_\varphi(Z_{t}; Z_s^l))
\Big]
\Bigg].
$
\end{theorem}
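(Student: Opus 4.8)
The plan is to mirror the standard evidence-lower-bound derivation that yields the unconditional NELBO in \eqref{eq-nelbo}, but applied to the tilted joint $p_\varphi(\rvx,\rvz_{0:1}|\rvy)$ and with the forward law $q(Z_{0:1}|\rvx)$ of \eqref{eq-fwd} serving as the variational distribution over latents. Concretely, I would write the marginal $p_\varphi(\rvx|\rvy)=\sum_{\rvz_{0:1}}p_\varphi(\rvx,\rvz_{0:1}|\rvy)$, multiply and divide by $q(Z_{0:1}|\rvx)$ inside the sum, and apply Jensen's inequality (the slack in the resulting bound is exactly $\KL\big(q(Z_{0:1}|\rvx)\,\|\,p_\varphi(Z_{0:1}|\rvx,\rvy)\big)\ge0$) to obtain
\begin{align}
-\log p_\varphi(\rvx|\rvy)\;\le\;\E_{q(Z_{0:1}|\rvx)}\!\left[-\log\frac{p_\varphi(\rvx,Z_{0:1}|\rvy)}{q(Z_{0:1}|\rvx)}\right]. \nonumber
\end{align}
This inequality is the only place convexity is used; everything after is bookkeeping on the log-ratio.

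Second, I would substitute the two factorizations: the generative one, $p_\varphi(\rvx,\rvz_{0:1}|\rvy)=p_\varphi(\rvz_1|\rvy)\,p_\varphi(\rvx|\rvz_0,\rvy)\prod_i p_\varphi(\rvz_{s(i)}|\rvz_{t(i)},\rvy)$, and the forward one rewritten in reverse-posterior form, $q(Z_{0:1}|\rvx)=q(Z_1|\rvx)\prod_i q(Z_{s(i)}|Z_{t(i)},\rvx)$. The key step is to split each tilted transition from \eqref{eq-aps-rev-per-token} as $\log p_\varphi(\rvz_s|\rvz_t,\rvy)=\log\big[\prod_l q(\rvz_s^l|\rvz_t^l,\rvx_\varphi(\rvz_t))\big]+\log q(\rvy|\rvx_\varphi(\rvz_t))$, so that the likelihood factor---being constant in $\rvz_s$---detaches cleanly from the base reverse kernel. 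Pairing the base-kernel terms with the matching forward posteriors $q(\rvz_{s(i)}|\rvz_{t(i)},\rvx)$ recovers, term by term, exactly the summands of the unconditional bound, while the detached factors accumulate into $-\sum_i\E_{q(Z_{t(i)}|\rvx)}[\log q(\rvy|\rvx_\varphi(Z_{t(i)}))]$, which is the new measurement term.

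Third, to collapse the base-kernel contribution into the reconstruction term $\E[-\log p_\varphi(\rvx|Z_0)]$ plus the masked-diffusion sum, I would invoke the standard simplification of the per-step KL for absorbing-state diffusion: on positions with $\rvz_t^l\neq\rvm$ the reverse kernel of \eqref{eq-rev} copies $\rvz_t^l$ and matches the forward posterior, so the KL vanishes; on masked positions $\rvz_t^l=\rvm$ the categorical KL between $q(\rvz_s^l|\rvz_t^l,\rvx)$ and $q(\rvz_s^l|\rvz_t^l,\rvx_\varphi(\rvz_t))$ reduces, after cancelling the $\rvm$-components, to $\tfrac{\alpha_t-\alpha_s}{1-\alpha_t}\log\langle\rvx_\varphi^l(\rvz_t),\rvx^l\rangle$ up to an $\rvx_\varphi$-independent constant. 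Summing over $i$ and $l$ produces precisely the middle term of $\gL_{\mathrm{DDPS}}$, while the prior mismatch $\KL\big(q(Z_1|\rvx)\,\|\,p_\varphi(Z_1|\rvy)\big)$ is a constant (both are fully masked at $t=1$) that I would drop.

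The main obstacle is not any single calculation but keeping the accounting honest around the tilting factor. Because $q(\rvy|\rvx_\varphi(\rvz_t))$ does not depend on $\rvz_s$, the tilted transition in \eqref{eq-aps-rev-per-token} is \emph{not} a normalized conditional, so $p_\varphi(\rvx,\rvz_{0:1}|\rvy)$ is an unnormalized joint; I would be careful to state the bound relative to this defining construction (so that the tilting implements the Bayesian reweighting $q(\rvy|\rvx)\,q(\rvx)$ rather than a normalized Markov kernel) and to confirm that the per-token product in \eqref{eq-aps-rev-per-token} contributes the likelihood \emph{once} per reverse step, as it appears in the statement. The remaining care is purely in matching the masked-diffusion KL to the exact coefficient $\tfrac{\alpha_{t(i)}-\alpha_{s(i)}}{1-\alpha_{t(i)}}$, which follows the same derivation already reflected in \eqref{eq-nelbo}.
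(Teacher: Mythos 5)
Your proposal is correct and follows essentially the same route as the paper's proof: Jensen's inequality against the forward law $q(Z_{0:1}|\rvx)$ (using its conditional independence from $\rvy$ given $\rvx$), factorization of the tilted joint so the $\rvz_s$-independent likelihood factor detaches into the measurement term, the two-case per-step KL computation for the absorbing process yielding the $\tfrac{\alpha_{t}-\alpha_{s}}{1-\alpha_{t}}\log\langle\rvx_\varphi^l,\rvx^l\rangle$ summands, and the vanishing boundary KL at the fully masked state. Your caution about the unnormalized tilted kernel and about the likelihood factor appearing once per step rather than once per token is well placed -- the paper's per-token definition in \eqref{eq-aps-rev-per-token} would naively contribute the tilt $L$ times, and the paper sidesteps this by proving the $L=1$ case and asserting the extension -- but this does not change the substance of the argument.
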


\begin{implication}
\label{impl:thm1}
\textbf{Theorem~\ref{thm-train-ddps-main}} shows that $\gL_{\mathrm{DDPS}}(\rvx,\rvy;\varphi)$ is a principled training criterion for discrete posterior samplers.
The likelihood-based tilt $\log q(\rvy|\rvx_\varphi(Z_{t(i)}; Z_{s}))$  enforces measurement consistency.
When $\rvy$ is absent, the tilting term vanishes and the objective reduces to the standard masked diffusion NELBO~\eqref{eq-nelbo}. 
We can treat $\mathcal{Z}_{\varphi}$ as a \textit{stop-gradient term}—used to normalize the probability distribution (e.g., via Softmax) but excluded during gradient computation \citep{sohl2015deep,dhariwal2021diffusion,cfg,g2d2}. 
Although expensive, a more formal treatment for the gradient of the log partition function can be obtained through importance sampling \citep{kahn1950random}, Hamiltonian Monte Carlo with leapfrog steps \citep{mackay2003information}, or noise contrastive estimation \citep{nce}.
The cross-entropy term $\log\langle\rvx^l_\varphi(Z_{t}),\rvx^l\rangle$ gets supervision through the masked tokens, with weights determined by the noise schedule.
\end{implication}

For retraining, one can minimize $\gL_{\mathrm{DDPS}}(\rvx,\rvy;\varphi)$ with respect to $\varphi$ to obtain a discrete posterior sampler. 
In practice, however, retraining a large-scale foundation model per task is often expensive due to excessive compute and lack of training data. 
We therefore focus on the training-free case.

\begin{theorem}[Test-time Anchored Posterior Sampling (APS)]
\label{thm-testtime-aps-main}
Given a sample $\rvx\sim q$, let $q(Z_{0:1}|\rvx)$ denote the forward noising law of~\eqref{eq-fwd}.
Suppose the pretrained network $p_\theta(\rvx)$ closely approximates the unconditional prior $q(\rvx)$.
Then, for any measurement $\rvy \sim q(\cdot|\rvx)$, the negative log-posterior 
$
-\log p_\varphi(\rvx|\rvy)
\le 
\gL_{\mathrm{APS}}(\rvx,\rvy;\varphi)
\coloneqq
\gL_{\mathrm{NELBO}}(\rvx;\theta)
{
\;+ \sum_{i=1}^{T} \E_{q(Z_{t}| \rvx)} 
\Big[
\sum_{l=1}^{L}
\log \mathcal{Z}^l_{\varphi}(Z_{t},\mathbf{y})
\Big]
}
+
$
$
\sum_{i=1}^{T} \E_{q(Z_{t}|\rvx)} 
\Bigg[
\frac{\alpha_{s}-\alpha_{t}}{1-\alpha_{t}}\,
\sum_{l=1}^{L}
\log \frac{\langle \rvx^l_\theta(Z_{t}),\rvx^l\rangle}
{\langle \rvx^l_\varphi(Z_{t}),\rvx^l\rangle}\,
\1_{\{Z^l_{t}=\rvm\}}
\\
$
$
\hspace{4ex}
- 
\E_{q({Z_{s}}|Z_{t}, \rvx)} 
\Big[
\sum_{l=1}^{L}
\log q(\rvy| \rvx_\varphi(Z_{t}{; Z_s^l}))
\Big]
\Bigg].
$
\vspace{-1ex}
\end{theorem}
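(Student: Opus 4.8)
The plan is to derive the APS bound directly from the DDPS bound of \textbf{Theorem~\ref{thm-train-ddps-main}}, treating $\gL_{\mathrm{NELBO}}(\rvx;\theta)$ as a fixed, $\varphi$-independent reference and isolating the genuinely $\varphi$-dependent corrections. First I would recall that $\gL_{\mathrm{DDPS}}(\rvx,\rvy;\varphi)$ consists of three pieces: a terminal reconstruction term $\E_{q(Z_{0:1}|\rvx)}[-\log p_\varphi(\rvx|Z_0)]$, a masked-token cross-entropy term carrying the prediction $\rvx^l_\varphi$ with schedule weight $\frac{\alpha_{t(i)}-\alpha_{s(i)}}{1-\alpha_{t(i)}}$, and the likelihood tilt $-\sum_i \E_{q(Z_{t(i)}|\rvx)}[\log q(\rvy|\rvx_\varphi(Z_{t(i)}))]$. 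The likelihood term already appears verbatim in $\gL_{\mathrm{APS}}$, so it is carried through untouched; the work is entirely in rewriting the first two pieces.

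For the masked-token term, the key algebraic step is to add and subtract the pretrained prediction inside the logarithm, writing $\log\langle\rvx^l_\varphi(Z_{t(i)}),\rvx^l\rangle = \log\langle\rvx^l_\theta(Z_{t(i)}),\rvx^l\rangle + \log\frac{\langle\rvx^l_\varphi(Z_{t(i)}),\rvx^l\rangle}{\langle\rvx^l_\theta(Z_{t(i)}),\rvx^l\rangle}$ at every position $l$ and step $i$. The $\rvx_\theta$ contribution, still weighted by $\frac{\alpha_{t(i)}-\alpha_{s(i)}}{1-\alpha_{t(i)}}$, is precisely the masked-token sum of the NELBO~\eqref{eq-nelbo}. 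The remaining ratio contribution $\frac{\alpha_{t(i)}-\alpha_{s(i)}}{1-\alpha_{t(i)}}\log\frac{\langle\rvx^l_\varphi,\rvx^l\rangle}{\langle\rvx^l_\theta,\rvx^l\rangle}$ is rewritten, by flipping the weight to $\frac{\alpha_{s(i)}-\alpha_{t(i)}}{1-\alpha_{t(i)}}$ and inverting the ratio to $\log\frac{\langle\rvx^l_\theta,\rvx^l\rangle}{\langle\rvx^l_\varphi,\rvx^l\rangle}$ (two sign flips that preserve the value), and matches the correction term of $\gL_{\mathrm{APS}}$ exactly. The only care required here is tracking the sign of $\alpha_{t}-\alpha_{s}$ and ensuring the indicator $\1_{\{Z^l_{t(i)}=\rvm\}}$ and the expectation $\E_{q(Z_{t(i)}|\rvx)}$ ride along with both halves of the split.

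For the terminal reconstruction term I would invoke the test-time parameterization: since we reuse the pretrained denoiser rather than retrain, the final decoding step is taken to coincide with the pretrained reconstruction, $p_\varphi(\rvx|Z_0)\coloneqq p_\theta(\rvx|Z_0)$, so that $\E_{q(Z_{0:1}|\rvx)}[-\log p_\varphi(\rvx|Z_0)] = \E_{Z_0\sim q(\cdot|\rvx)}[-\log p_\theta(\rvx|Z_0)]$, the reconstruction term of the NELBO. Combined with the $\rvx_\theta$ masked-token sum from the previous step, this reassembles the full $\gL_{\mathrm{NELBO}}(\rvx;\theta)$. This is where the hypothesis that $p_\theta$ closely approximates $q(\rvx)$ does its work: it is what legitimizes substituting the pretrained denoiser for the terminal reconstruction at test time, rather than leaving an uncontrolled approximation. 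Collecting the reassembled NELBO, the ratio correction, and the untouched likelihood tilt yields the identity $\gL_{\mathrm{DDPS}}(\rvx,\rvy;\varphi)=\gL_{\mathrm{APS}}(\rvx,\rvy;\varphi)$ (or a one-sided $\gL_{\mathrm{DDPS}}\le\gL_{\mathrm{APS}}$ if the terminal substitution is only an inequality), and chaining with $-\log p_\varphi(\rvx|\rvy)\le\gL_{\mathrm{DDPS}}(\rvx,\rvy;\varphi)$ from \textbf{Theorem~\ref{thm-train-ddps-main}} gives the claim.

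I expect the main obstacle to be the reconstruction term: making precise how the pretrained denoiser is substituted for $p_\varphi(\rvx|Z_0)$, and pinning down whether this is an exact equality forced by the parameterization or a one-sided inequality controlled by the $p_\theta\approx q$ hypothesis. The masked-token algebra is routine sign-tracking once the add-and-subtract split is in place, so the conceptual care is concentrated in justifying the terminal step and in confirming that no $\varphi$-dependence is accidentally absorbed into the $\gL_{\mathrm{NELBO}}(\rvx;\theta)$ reference.
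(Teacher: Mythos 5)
Your proposal is correct and matches the paper's proof in substance: the paper likewise inserts the pretrained kernel $p_\theta(Z_{s(i)}|Z_{t(i)})$ as an intermediate reference in the per-step log-ratio (equivalently, your add-and-subtract of $\log\langle\rvx^l_\theta,\rvx^l\rangle$ after the two-state KL has been evaluated to a weighted cross-entropy), recovers the NELBO masked-token term plus the $\log\frac{\langle\rvx_\theta,\rvx\rangle}{\langle\rvx_\varphi,\rvx\rangle}$ correction with exactly the sign flips you describe, and handles the terminal term by the explicit shared-decoder parameterization $p_\varphi(\rvx|Z_0)=p_\theta(\rvx|Z_0)$, yielding an exact identity $\gL_{\mathrm{DDPS}}=\gL_{\mathrm{APS}}$ rather than a one-sided inequality. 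The only minor discrepancy is interpretive: the hypothesis that $p_\theta$ approximates $q$ is not used quantitatively anywhere in the paper's argument—the bound follows purely from the parameterization—so it motivates rather than "legitimizes" the terminal substitution.
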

\vspace{-0.5ex}
\begin{implication}
\label{impl:thm2}
\textbf{Theorem~\ref{thm-testtime-aps-main}} shows posterior sampling \emph{without additional training} by reusing a pretrained model.  
\begin{itemize}[left=0pt, topsep=0pt, noitemsep]
\item \emph{Efficient test-time training.} 
The $\gL_{\mathrm{APS}}(\rvx,\rvy;\varphi)$ bound is expressed in terms of NELBO~\eqref{eq-nelbo}. 
Since this term is constant with respect to the new parameters $\varphi$, it can be ignored during optimization.
The normalizing constant is treated as in Implication~\ref{impl:thm1}.
As a result, test-time training only needs to update the lightweight adaptation and measurement-consistency terms, while reusing the fixed pretrained network $\rvx_\theta(\cdot)$. 
This avoids backpropagation through the large denoiser (e.g., billions of parameters), making posterior sampling efficient at test time.

\item \emph{Training-free inference.} 
Although test-time training requires paired $(\rvx,\rvy)$ data, in posterior sampling we only observe $\rvy$. 
Importantly, the bound $\gL_{\mathrm{APS}}$ motivates a \textit{surrogate optimization objective} rather than an exact sampling procedure (see \S\ref{sec-quant-exp}),
placing APS in the class of variationally guided diffusion samplers rather than asymptotically unbiased MCMC methods.

\item \emph{Adaptation gap.} The log-ratio terms capture the mismatch between unconditional predictions $\rvx_\theta(Z_{t(i)})$ and adapted posterior $\rvx_\varphi(Z_{t(i)})$, active only at masked positions. 

\item \emph{Measurement consistency.} The final summation ensures sample consistency with the observed measurements $\rvy$.  
\end{itemize}
\end{implication}

Next, we introduce two approximations: \emph{Quantized Expectation} (\S\ref{sec-quant-exp}) and \emph{Anchored Remasking} (\S\ref{sec-anchored-remask}) that make our approach practically implementable. 
These two ideas together form our \textbf{{Algorithm~\ref{alg:aps}}: Anchored Posterior Sampling (APS)}; please see Appendix~\ref{sec-addn-exps-algo} for a detailed discussion.

\begin{figure*}[!t]
    \centering
    \vspace{-1.5ex}
    \includegraphics[width=0.97\linewidth]{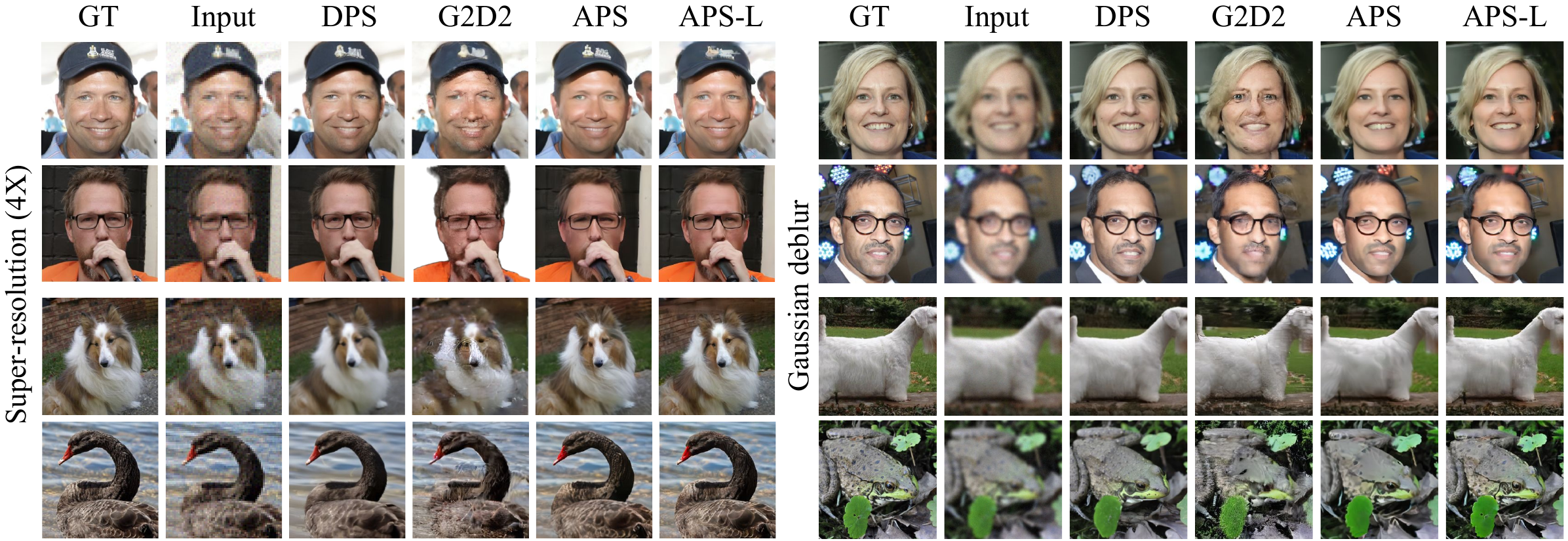}
    \caption{\textbf{Qualitative results on FFHQ and ImageNet for SR ($4\times$) and Gaussian deblur.} 
    Compared to DPS and G2D2, APS yields better results with sharper texture and refined facial features. 
    For instance, in the third row, APS reconstructs fine strands of the dog’s fur.
}
    \label{fig:imagenet_ffhq_sr4x_gb}
    \vspace{-1ex}
\end{figure*}

\subsection{Quantized Expectation}
\label{sec-quant-exp}
Following the standard practice in test-time optimization~\citep{dps,psld,stsl,p2l,rbm,daps,g2d2}, the upper bound in \textbf{Theorem~\ref{thm-testtime-aps-main}} is optimized at every time step $t$. 
We ignore the normalizing factor during optimization as discussed in Implication~\ref{impl:thm1}  and re-normalize the optimized logits via Softmax to construct a valid distribution.
Since $\gL_{\mathrm{NELBO}}(\rvx;\theta)$ does not depend upon $\varphi$, we obtain the following objective per time step:
\begin{align}
\label{eq-qe-per-time-step}
\gL_t(\varphi) 
    &\coloneqq
    \frac{\alpha_{s}-\alpha_{t}}{1-\alpha_{t}}\,
    \sum_{l=1}^{L}
    \Big[
    \log \frac{\langle \rvx^l_\theta(\rvz_{t}),\rvx^l\rangle}
    {\langle \rvx^l_\varphi(\rvz_{t}),\rvx^l\rangle}\,
    \1_{\{\rvz^l_{t}=\rvm\}}
    \Big]
    \nonumber
    \\
    &- 
    \E_{q(Z_{s}|\rvz_{t}, \rvx)} 
    \Big[
    \sum_{l=1}^{L}
    \log q(\rvy|\rvx_\varphi(\rvz_{t};Z_{s}^l))
    \Big].
\end{align}
Equation~\eqref{eq-qe-per-time-step} remains intractable because the first term depends on $\rvx^l$ which is not known in posterior sampling and the second term requires $\gO(K^L)$ compute\footnote{Recall that  $\log q(\rvy|\rvx_\varphi(\rvz_{t};Z_{s}^l))$ within the outer expectation of \eqref{eq-qe-per-time-step} is defined by an inner expectation in \eqref{eq-aps-rev-per-token}. The marginalization over $Z_s$ in the outer expectation costs $\gO(2^L)$ (given $\rvz_t$ and $\rvx$, $Z_s \in \{\rvx, \rvm\}$). For each $Z_s$, the computation of the inner expectation (over random variables except $Z_s^l$) inside the summation costs $\gO(LK^{L-1})$.
In the worst case, the total cost becomes $\gO(K^L)$.}. 
We address the first term by replacing $\rvx^l$ with $\rvx^l_\theta(\rvz_{t})$ because the pretrained model provides a reasonable approximation.
For the second, since $q(\rvy|\rvx_\varphi(\rvz_t;Z_s^l))$ involves an expectation over a parameterized distribution, one could in principle, optimize $\gL_t(\varphi)$ using policy-gradient methods such as REINFORCE~\citep{reinforce}, PPO~\citep{ppo}, or GRPO~\citep{grpo}, as adopted in SVDD~\citep{svdd}. 
This leads to high-variance gradients and sparse rewards, resulting in 
poor sample quality (see: Table~\ref{tab:sr-deblur-ffhq-imagenet}). 

We introduce \emph{quantized expectation} to better approximate the second term in \eqref{eq-qe-per-time-step}.
Recall that the measurement likelihood is modeled as 
$
q(\rvy|\rvx) \propto 
\exp(-\tfrac{\|\rvy - \gA(\gD(\rvx))\|_2^2}{2\sigma^2}).
$
Sampling $\rvx \sim \rvx_\varphi(\rvz_t)$ to compute likelihood makes $\gL_t(\varphi)$ non-differentiable and noisy. 
Instead, we propose a differentiable and practically implementable surrogate\footnote{Note that $\alpha_t - \alpha_s < 0$; minimizing \eqref{eq-qe} is equivalent to maximizing the correlation between the prior and the  tilted posterior.}:
\begin{align}
\label{eq-qe}
    \hat{\gL}_t(\varphi) 
    \coloneqq 
    \frac{\alpha_{t}-\alpha_{s}}{1-\alpha_{t}}\,    
    \sum_{l=1}^{L}
    & \Big[
    \log 
    \langle \rvx^l_\varphi(\rvz_t),\rvx^l_\theta(\rvz_t)\rangle\,
    \1_{\{\rvz^l_{t}=\rvm\}}
    \Big]
    \nonumber
    \\
    & 
    - 
    L\cdot \log q(\rvy|\tilde{\rvx}_\varphi(\rvz_{t}))
    \big].
\end{align}
Appendix~\ref{sec:tokenwise-exp-to-logits} includes the derivation of \eqref{eq-qe} from \eqref{eq-qe-per-time-step}.
Differentiability has propelled posterior sampling to achieve state-of-the-art results using continuous diffusion~\citep{dps,psld,stsl,daps}.
To restore differentiability in discrete diffusion, we parameterize $\rvx_\varphi(\rvz_{t(i)}) = \mathrm{Softmax}(\varphi_{t(i)})$ with $ \varphi_{t(i)} = \{\varphi^l_{t(i)}\}_{l=1}^{L} \in \R^{K\times L}$ containing logits $\varphi^l_{t(i)}$ over the codebook $\{\rvc_k \in \gC\}$ for each position $l$. 
We compute the expected embedding
$
\bar{\rvx}^l = \sum_{k=1}^K \rvc_k \cdot \,\rvx^l_\varphi(\rvz_{t(i)})[k] \in \R^d,
$
and then \emph{quantize} it using LFQ \eqref{eq-lfq}~\citep{lfq}:
$
\rvx^l = \gQ_{\mathrm{lfq}}(\bar{\rvx}^l) \in \{-1,+1\}^d.
$
We then apply the straight-through estimator~\citep{vqvae} to obtain an image
$
\xhat = \gD(\tilde{\rvx}_\varphi(\rvz_{t})),~ \tilde{\rvx}_\varphi(\rvz_{t}) = \bar{\rvx} + \big[\rvx - \bar{\rvx}\big]_{\mathrm{sg}}, 
$
where $\mathrm{sg}$ denotes the stop-gradient operator.  
Finally, we  compute the differentiable likelihood as
$
q(\rvy|\tilde{\rvx}_\varphi(\rvz_{t})) \propto
\exp(-\tfrac{\|\rvy - \gA(\xhat)\|_2^2}{2\sigma^2}).
$

\noindent
\textbf{Discussion.}
We minimize~\eqref{eq-qe} at each time step $t$ to obtain the optimal logits $\varphi^*_{t}$. 
This ensures that the optimized probabilities $\rvx_{\varphi^*}(\rvz_t)$ remain close to the prior predictions $\rvx_\theta(\rvz_t)$ while aligning with measurements $\rvy$. 
To our knowledge, this is the first use of \emph{quantized expectation of codebook embeddings} for posterior sampling in discrete diffusion, allowing measurement gradients to backpropagate through $\bar{\rvx}$ and update all entries of $\rvx_\varphi(\rvz_t)$. 
This reduces the cost of the measurement term from $\mathcal{O}(K^L)$
to $\mathcal{O}(KL)$ because~\eqref{eq-qe} requires only \emph{one} forward pass through $\gA\circ\gD$  using $\tilde{\rvx}_\varphi(\rvz_t)$. We defer complexity analysis to Appendix~\ref{sec-addn-exps-complexity}.

\begin{figure*}[!t]
  \centering
  \vspace{-1.5ex}
  \includegraphics[width=0.9\linewidth]{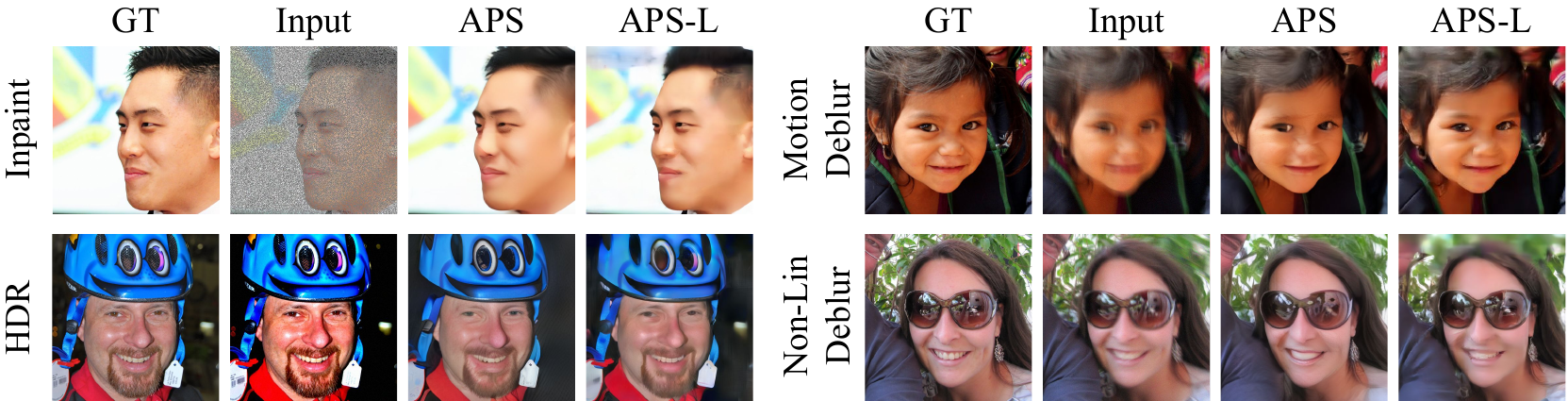}
  \caption{\textbf{Qualitative results on FFHQ} for linear (top row) and nonlinear (bottom row) inverse problems. 
  APS and APS-L recover high-fidelity images from severely degraded inputs.}
  \label{fig:ffhq_inverse_main}
  \vspace{-1ex}
\end{figure*}

\vspace{-1ex}
\subsection{Anchored Remasking}
\label{sec-anchored-remask}
\vspace{-1ex}
In masked diffusion, the reverse process progressively unmasks tokens. 
Typical samplers~\citep{d3pm,maskgit,sedd,mdlm,md4} choose to unmask based on confidence or random remasking.
ADLM~\citep{adlm} shows that prioritizing ``anchor'' tokens (e.g., nouns or verbs in language, rather than articles or conjunctions) reduces conditional entropy of the remaining tokens in the sequence that subsequently improves generation.
To decode anchor tokens, ADLM jointly trains an anchor network in addition to the standard denoising network using an anchored NELBO objective.
Here, we propose a \emph{training-free} variant of anchored denoising, enabling posterior sampling with discrete diffusion.

Let $\varphi$ be a minimizer of \eqref{eq-qe}, resulting in $\rvx^*_{\varphi}(\rvz_t) = \{(\rvx^*_{\varphi}(\rvz_t))^l\}_{l=1}^L$ that represents categorical distributions over tokens at each position $l\in \{1,\dots,L\}$ at time step $t$. 
Anchored remasking selects a subset of positions $\gP_t \subseteq \{1,\dots,L\}$ to decode early.
The selection is based on the confidence of quantized tokens $\rvx$ (as defined in \S\ref{sec-quant-exp}) under the posterior estimate $\rvx^*_\varphi(\rvz_t)$. Importantly, the posterior estimate is a function of the $L$-length sequence $\rvz_t$, and hence encodes the {\em joint} relation across all tokens; thus, anchored remasking is a function of {\em all} tokens, unlike standard remasking
\citep{maskgit,mmada} that depend only on per-token confidence.
Formally, we compute the confidence of $\rvx^l$ as
$
\kappa^l_t = \langle(\rvx^*_\varphi(\rvz_t))^l, \rvx^l\rangle,
$
and choose anchor positions as
$
\gP_t = \{\,l : \kappa^l_t \ge \tau_t \,\},
$
where $\tau_t$ is an adaptive threshold that follows from the cosine schedule of MMaDA.
We then 
unmask
anchor tokens as:
$\rvz_{s}^l = \rvx^l$ with probability $\frac{\alpha_s - \alpha_t}{1-\alpha_t}$ if $l \in \gP_t$ else $\rvm$.
Once a token is unmasked, it stays unmasked during inference.

\textbf{Discussion.}
Diffusion language models tend to be highly confident on low-informative tokens such as articles (``a'', ``an'' or ``the'') or conjunctions~\citep{adlm}; similarly, discrete image samplers using \textit{independent per-token} confidence often unmask background pixels first. 
In contrast, our method leverages the \emph{joint posterior} $\rvx_\varphi(\rvz_t)$ to identify anchor tokens consistent with the measurements.
This leads to earlier decoding of informative tokens (e.g., a bird against a flat background), while aligning with the \textit{joint distribution}; refer to \S\ref{sec-addn-exps-ablation} for details. 
While multi-token sampling from the joint is typically intractable for frameworks like MDLM~\citep{mdlm} or MD4~\citep{md4}, anchored diffusion~\citep{adlm} circumvents this limitation by reducing the conditional entropy of the remaining tokens. 
We follow a similar anchoring strategy for adaptive decoding of discrete image tokens.



\begin{table*}[!t]
\vspace{-2ex}
\centering
\caption{\textbf{Quantitative results on general inverse problems.} 
We report results on two additional linear (random inpainting, motion deblurring) and nonlinear (HDR, nonlinear blur) tasks.
Since G2D2 and SGDD do not evaluate on these tasks, we compare our \textit{discrete} sampler against representative \textit{continuous} baselines: DPS (pixel-space)
\citep{dps} 
and PSLD (latent-space)
\citep{psld}.
}
\label{tab:ffhq-imagenet-general-inverse}
\setlength{\tabcolsep}{6pt}
\resizebox{0.95\textwidth}{!}{%
\begin{tabular}{l l c c c c c c c c}
\toprule
\multirow{2}{*}{Type} & \multirow{2}{*}{Method} 
& \multicolumn{2}{c}{Random Inpainting} 
& \multicolumn{2}{c}{Motion Deblur} 
& \multicolumn{2}{c}{HDR} 
& \multicolumn{2}{c}{Nonlinear Blur} \\
\cmidrule(lr){3-4} \cmidrule(lr){5-6} \cmidrule(lr){7-8} \cmidrule(lr){9-10}
 & & LPIPS $\downarrow$ & PSNR $\uparrow$ & LPIPS $\downarrow$ & PSNR $\uparrow$ & LPIPS $\downarrow$ & PSNR $\uparrow$ & LPIPS $\downarrow$ & PSNR $\uparrow$ \\
\midrule
\multicolumn{10}{c}{FFHQ}\\
\midrule
\rowcolor{gray!10}
Pixel   & DPS     & \textbf{0.203} & 25.46 & \textbf{0.246} & 24.52 & \textbf{0.264} & 22.73 & 0.278 & 23.39 \\
\rowcolor{gray!10}
Latent  & PSLD     & \underline{0.221} & \textbf{30.31} & 0.336 & 22.31 & -- & -- & -- & -- \\
\midrule
\rowcolor{orange!20}
Mask    & \textbf{APS (ours)} & 0.304 & 27.38 & 0.317 & \underline{26.58} & \underline{0.282} & \textbf{23.89} & \underline{0.263} & \underline{27.19} \\
\rowcolor{orange!20}
Discrete        & \textbf{APS-L (ours)} & 0.291 & \underline{28.11} & \underline{0.298} & \textbf{27.98} & 0.323 & \underline{23.56} & \textbf{0.262} & \textbf{28.46} \\
\midrule
\multicolumn{10}{c}{ImageNet}\\
\midrule
\rowcolor{gray!10}
Pixel   & DPS     & \textbf{0.297} & 23.52 & 0.423 & 18.96 & 0.503 & 19.23 & \textbf{0.306} & 22.49 \\
\rowcolor{gray!10}
Latent  & PSLD     & \underline{0.337} & \textbf{31.30} & 0.511 & 20.85 & -- & -- & -- & -- \\
\midrule
\rowcolor{orange!20}
Mask    & \textbf{APS (ours)} & 0.378  & 24.59  & \underline{0.410} & \underline{23.37} & \underline{0.345} & \underline{21.92} & 0.330 & \underline{24.18} \\
\rowcolor{orange!20}
Discrete        & \textbf{APS-L (ours)} & 0.338 & \underline{25.39} & \textbf{0.318} & \textbf{25.19} & \textbf{0.346} & \textbf{22.68} & \underline{0.309} & \textbf{25.35} \\
\bottomrule
\end{tabular}
}
\vspace{-1ex}
\end{table*}

\vspace{-2ex}
\section{Experiments}
\label{sec-exps}
\vspace{-1ex}
\textbf{Baselines.}
Since our focus is on discrete diffusion, we first compare with existing discrete methods:
G2D2~\citep{g2d2} and SGDD~\citep{sgdd}. 
To provide a comprehensive evaluation, we also include established continuous baselines, both in pixel space (DPS~\citep{dps}, DDRM~\citep{ddrm}, and DiffPIR~\citep{diffpir}) and in latent space (PSLD~\citep{psld} and ReSample~\citep{resample}). 
Thus, our evaluation spans both discrete and continuous paradigms.
For language modeling, we compare with \texttt{LLaDA-8B-Instruct}~\citep{llada} and \texttt{Dream-7B-Instruct}~\citep{dream}.

\textbf{Benchmarks.}
We evaluate on the standard inverse problem benchmarks used in prior works. For high-resolution faces we use FFHQ at $256\times 256$~\citep{ffhq}, and for diverse natural images we use ImageNet at $256\times 256$~\citep{imagenet}. Performance is measured using three standard metrics: Learned Perceptual Image Patch Similarity~\citep{lpips} (LPIPS), Peak Signal-to-Noise Ratio (PSNR), and Structural Similarity Index (SSIM)~\citep{ssim}. 
All methods are evaluated at the same resolution and on the same images following G2D2~\citep{g2d2}.
For question answering, we use \texttt{Qwen2.5-3B} model~\citep{qwen2p5} to create a benchmark of 20 questions (\S~\ref{sec-addn-exps-benchmarks}), where evaluation is done using \texttt{Qwen3-0.6B} as a reward model and \texttt{Qwen3-8B} as an independent judge.

\textbf{Tasks.}
We consider both linear and nonlinear inverse problems. Following prior works (e.g., G2D2~\citep{g2d2} and SGDD~\citep{sgdd}), we evaluate on {Super Resolution (SR) (4$\times$)} and {Gaussian deblurring} on both FFHQ and ImageNet. 
Additionally, we evaluate on more linear ({random inpainting} and {motion blur}) and nonlinear ({high dynamic range (HDR) recovery} and {nonlinear deblurring}) tasks.
Beyond inverse problems, we conduct experiments on
{training-free stylization}, an emerging area largely dominated by continuous diffusion~\citep{stylealigned,instantstyle,rbm}. 
To demonstrate scalability, we upsample the benchmark images to $1024 \times 1024$ and apply our sampler to these scaled images (termed \textbf{APS-L} in discussion below). 
We show generalization of our algorithm to {guide diffusion language models}.
We defer implementation details to \S\ref{sec-addn-impl} and complexity analysis to \S\ref{sec-addn-exps-complexity}.

\begin{figure*}[!t]
\vspace{-1ex}
  \centering
  \includegraphics[width=\linewidth, trim={0cm 0cm 0cm 0cm}]{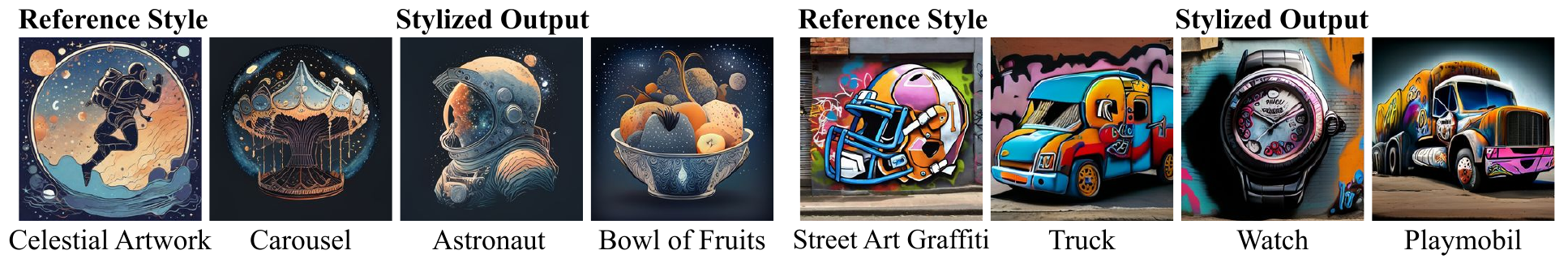}
  \vspace{-2ex}
  \caption{\textbf{Qualitative results on stylization.}
    We present four style--content combinations. For each case, our APS algorithm conditions on a single reference style image together with a text prompt to generate the stylized output images.
    }
  \label{fig:style}
  \vspace{-2ex}
\end{figure*}

\vspace{-1ex}
\subsection{Results on Linear Inverse Problems}
\label{sec-exps-inv}
\textbf{Evaluation on FFHQ:}
Table~\ref{tab:sr-deblur-ffhq-imagenet} (a) shows that APS outperforms prior discrete diffusion samplers across both super resolution and Gaussian deblurring tasks.
For \textit{super resolution}, APS reduces LPIPS by $13.65\%$ and improves PSNR by $2.11\%$. On \textit{Gaussian deblurring}, APS lowers LPIPS by $3.83\%$ 
and raises PSNR by $5.88\%$. The large variant, APS-L, pushes performance even further, achieving up to a $31.36\%$ gain over G2D2 in terms of LPIPS. These results show that our quantized expectation (\S\ref{sec-quant-exp}) and anchored remasking (\S\ref{sec-anchored-remask}) strategies not only outperform discrete diffusion baselines but often surpass strong continuous diffusion methods, such as DiffPIR (which uses pixel-space diffusion) and PSLD (which uses latent-space diffusion).

Figure~\ref{fig:imagenet_ffhq_sr4x_gb} (top two rows) presents qualitative comparisons for super resolution ($4\times$) and Gaussian deblurring on FFHQ. DPS produces over-smoothed results with blurry facial details, while G2D2 often introduces artifacts and fails to restore a natural facial structure. Our APS sampler yields sharper textures and more faithful reconstructions, recovering details such as hair strands, facial contours, and eyeglass edges with higher perceptual quality. The large variant, APS-L, further enhances structure and realism, generating photo-realistic outputs with finer details and fewer artifacts. These results confirm that APS and APS-L deliver superior qualitative performance, 
where perceptual quality is crucial.

\textbf{Evaluation on ImageNet:}
Table~\ref{tab:sr-deblur-ffhq-imagenet} (b) quantifies that our method achieves consistent improvements across both SR and Gaussian deblurring tasks on the ImageNet benchmark.
For \textit{super resolution}, APS reduces LPIPS by $7.16\%$ compared to G2D2 and improves PSNR by $4.74\%$.
On \textit{Gaussian deblurring}, APS improves PSNR by $8.81\%$ while maintaining comparable LPIPS.
APS-L improves LPIPS by up to $35.82\%$ compared to G2D2.
These results confirm that APS outperforms prior discrete posterior samplers and often surpasses continuous baselines such as DiffPIR and PSLD with superior perceptual quality and image fidelity.

\Figref{fig:imagenet_ffhq_sr4x_gb} (bottom two rows) shows SR ($4\times$) and Gaussian deblurring results on ImageNet.
Notably, DPS produces overly smooth outputs with a loss of fine details, while G2D2 introduces struggles to recover sharp edges.
In contrast, APS reconstructs sharper textures (e.g., the fur of the dog and the feathers of the swan) and yields more natural color.
The large variant, APS-L, further enhances structural fidelity, recovering finer details in challenging regions such as a frog's skin texture and a goat's fur.
These examples highlight that our approach achieves superior perceptual quality and faithful structure reconstruction compared to both continuous and discrete diffusion baselines.

\vspace{-1ex}
\subsection{Results on General Inverse Problems}
\label{sec-exps-inv-gen}
Table~\ref{tab:ffhq-imagenet-general-inverse} shows that APS generalizes effectively to more challenging linear (random inpainting, motion deblurring) and nonlinear (HDR, nonlinear blur) inverse problems on FFHQ and ImageNet. 
Unlike existing methods such as G2D2 and SGDD, which were demonstrated on limited tasks, APS achieves strong perceptual quality (lower LPIPS) and reconstruction fidelity (higher PSNR).
For instance, on ImageNet motion deblurring APS-L attains $0.318$ LPIPS and $25.19$ PSNR, substantially outperforming continuous baselines DPS and PSLD. 
\Figref{fig:ffhq_inverse_main} shows the qualitative results.
In nonlinear tasks such as HDR and nonlinear blur, APS delivers sharper, more consistent reconstructions, closing the gap with continuous diffusion while operating within a purely discrete framework. 
These results highlight the broader applicability and robustness of our approach compared to existing discrete diffusion samplers.

\subsection{Results on Reference-based Stylization}
\label{sec-exps-style}

\begin{table}[!t]
    \centering
    \vspace{-2ex}
    \caption{\textbf{Quantitative results on stylization.} APS enables new capabilities such as reference-based stylization in MMaDA.}
    \label{tab:style}
    \resizebox{0.99\textwidth}{!}{%
    \begin{tabular}{lccc}
        \toprule
        & ImageReward $\uparrow$ & CLIP-T $\uparrow$ & DINO $\uparrow$ \\
        \midrule
        \rowcolor{gray!10}
        IP-Adapter     & -1.51 & 0.26 & 0.89 \\
        \rowcolor{gray!10}
        StyleAligned   & 0.01  & 0.31 & 0.85 \\
        \rowcolor{gray!10}
        InstantStyle   & 0.72  & 0.33 & 0.72 \\
        \rowcolor{gray!10}
        RB-Modulation  & 1.18  & 0.34 & 0.73 \\
        \midrule
        MMaDA          & 0.48  & 0.33 & 0.32 \\
        \rowcolor{orange!20}
        APS (ours)     & 0.63  & 0.34 & 0.41 \\
        \bottomrule
    \end{tabular}
    }
    \vspace{-2ex}
\end{table}


We compare APS with the discrete diffusion baseline MMaDA~\citep{mmada} and existing continuous diffusion baselines (shaded {\color{gray!70}gray}). 
Table~\ref{tab:style} reports quantitative results, while Figure~\ref{fig:style} shows qualitative examples. 
APS consistently improves over the base MMaDA model across all metrics: ImageReward~\citep{imagereward}, CLIP-T~\citep{clip}, and DINO~\citep{dino} demonstrating that our posterior sampler successfully unlocks zero-shot stylization capabilities that are otherwise absent in the base model. 
By formulating style alignment as a reward maximization problem, APS guides the discrete generation trajectory to capture style attributes from the reference image without requiring auxiliary control networks or fine-tuning.
Notably, APS surpasses established continuous baselines such as IP-Adapter~\citep{ipadapter} and StyleAligned~\citep{stylealigned} on ImageReward and CLIP-T metrics, despite relying on a weaker generative prior.

\subsection{Results on Question Answering}
\label{sec-dream-aps}
Although this paper primarily focuses on discrete diffusion for vision tasks, we show that our approach generalizes to language by evaluating it on a question answering task.

Figure~\ref{fig:dream-aps} qualitatively demonstrates the efficacy of Anchored Posterior Sampling (APS) in diffusion-based question answering. While \texttt{Dream-7B-Instruct} suffers from {incoherence} due to a known artifact of independent sampling from the product of marginals, APS employs a lightweight reward model (\texttt{Qwen3-0.6B}) to enable joint sampling. Thus, APS yields structurally consistent and semantically richer outputs, such as better \textit{rhyming} in the poem.

These qualitative gains are further confirmed by our quantitative evaluation using a held-out oracle judge (\texttt{Qwen3-8B}) as given in Table~\ref{tab:dream-aps}.
APS achieves a \textbf{21.99\%} relative improvement over the state-of-the-art \texttt{Dream-7B-Instruct} baseline without additional training. Implementation details are provided in Appendix~\ref{sec-addn-impl}.

\begin{figure}[!t]
\vspace{-1ex}
    \centering
    \includegraphics[width=0.99\linewidth]{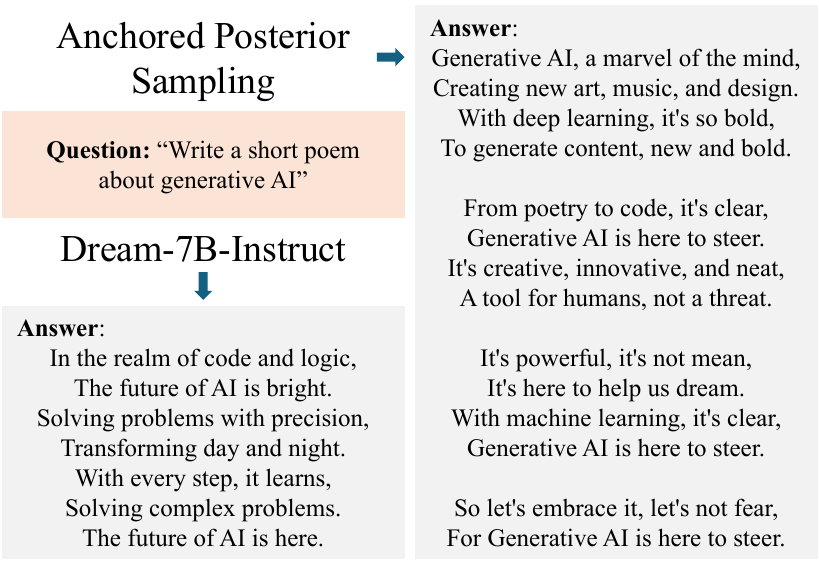}
    \caption{\textbf{Qualitative comparison on a creative writing prompt.} APS produces better rhyming and more structurally coherent creative writing compared to the generic outputs of the baseline.}
    \label{fig:dream-aps}
    \vspace{-1ex}
\end{figure}

\begin{table}[!t]
\vspace{-2ex}
    \centering
    \caption{\textbf{Quantitative results on question answering.}}
    \label{tab:dream-aps}
    \resizebox{\textwidth}{!}{%
    \begin{tabular}{lcc}
        \toprule
        Method & Reward Model & Oracle Judge \\
        & (\texttt{Qwen3-0.6B}) & (\texttt{Qwen3-8B}) \\
        \midrule
        LLaDA  & -0.0970 & 2.1699 \\
        Dream  & 0.9970 & 3.7523 \\
        \rowcolor{orange!20}
        \textbf{Dream+APS (Ours)} & \textbf{1.3786} & \textbf{4.5775} \\
        \bottomrule
    \end{tabular}
    }
    \vspace{-2ex}
\end{table}

The appendix provides extensive details on our Anchored Posterior Sampling method. We present the theoretical derivation of variational bounds (\S\ref{sec-addn-theory}), all implementation specifics and hyperparameters (Algorithm~\ref{alg:aps}, Table~\ref{tab:merged-sweep}), and a detailed ablation study (\S\ref{sec-addn-exps-ablation}, Figure~\ref{fig:ablation}) demonstrating the impact of our innovations. We also showcase the superior computational efficiency (Table~\ref{tab:sampling-efficiency}) and numerous additional qualitative results (\S\ref{sec-addn-exp}) across complex inverse problems, high-resolution text-guided block inpainting (Figure~\ref{fig:large_block_inpainting}), and extended stylization results (Figure~\ref{fig:style_appendix_long} and ~\ref{fig:style_sota}).



\section{Conclusion}
\label{sec-conc}
We introduce \textbf{Anchored Posterior Sampling (APS)}, a theoretically grounded, training-free sampler that enables the reuse of pretrained discrete diffusion models without task-specific retraining.
APS leverages \emph{Quantized Expectation} to provide gradient-like guidance in discrete spaces and \emph{Anchored Remasking} to adaptively decode informative tokens early in the denoising process.
In our primary vision tasks on linear and nonlinear inverse problems, APS achieves state-of-the-art results among discrete samplers and remains competitive with continuous baselines while using significantly reduced inference cost.
Furthermore, APS demonstrates versatility by enabling training-free stylization and steering language models toward better question answering.
This work establishes discrete diffusion as a scalable posterior sampling alternative, with promising extensions to multimodal generation and editing.

\section*{Acknowledgments}
The authors thank the \textbf{Google's ARML Commerce team} for their support and for providing a stimulating environment for this research, which was conducted while the first author was an intern at Google. We are also grateful to \textbf{Akash Sengupta} and \textbf{Yingwei Li} for their insightful discussions during the early stages of this project. This research has been partially supported by NSF Grants 2112471, 2505865 and the UT Austin Machine Learning Lab. 

\section*{Impact Statement}
\label{sec-ethics}

Our method enables training-free, controlled image editing using pretrained generative models, which can broaden access to advanced creative tools. As with all generative models, it has dual-use potential and may be misapplied if its limitations are misunderstood.

In inverse problems such as super-resolution, inpainting, and deblurring, APS produces a plausible sample from a posterior distribution rather than a unique or guaranteed reconstruction of the original signal. Misinterpreting such outputs as factual reconstructions could be harmful in sensitive settings, including forensic analysis or identity verification. Accordingly, our method is intended for perceptual enhancement tasks, and is not suitable for applications requiring evidentiary reliability.

For stylization, there is a risk that information from the source content image may leak in the generated output, potentially leading to unintended disclosure of sensitive details. Users should therefore exercise caution when applying the method to private or confidential imagery.

\bibliography{main}
\bibliographystyle{icml2026}

\newpage
\appendix
\onecolumn

\section{Additional Related Works}
\label{sec-rel-works}
\textbf{Gaussian pixel-space diffusion} methods~\citep{dps,ddrm,diffpir} study  inverse problems using diffusion priors trained in the pixel space. These priors are domain-specific, e.g., a model trained on ImageNet must be used for ImageNet tasks, and one trained on FFHQ for FFHQ tasks, resulting in informative but narrow priors. Mixing domains during training convolutes these priors, and at the extreme of internet-scale training, the priors remain valid for generation~\citep{dalle,imagen,sd3,flux} but become less informative for domain-specific problems. This motivates the challenge of extracting domain-specific priors from general-purpose priors.

\textbf{Gaussian latent diffusion.}  
PSLD~\citep{psld} introduced posterior sampling with latent diffusion, showing how domain-specific priors can be extracted from general-purpose priors, e.g., large-scale foundation models. This line of work~\citep{p2l,resample,stsl,rfi,noroozi2024you,daps,chung2025cfg} uses latent diffusion priors: a single pretrained model can handle multiple domains, enabling inverse problems and semantic edits without retraining, while also being faster and more scalable to high-resolution synthesis. A drawback, however, is that posterior sampling often requires backpropagation through large denoisers (e.g., Flux~\citep{flux}, SD3.5~\citep{sd3}), which is prohibitively slow. RB-Modulation~\citep{rbm} alleviates this by framing the problem as stochastic optimal control, directly optimizing the terminal latent state and reducing runtime from minutes (PSLD: $\sim$12 min, P2L: $\sim$30 min, STSL: $\sim$3 min) to under 40 seconds. This efficiency relies on continuous, differentiable latent embeddings, a property that does not extend to discrete diffusion. Addressing this gap motivates the need for new posterior sampling approaches in discrete settings.

\textbf{Uniform discrete diffusion.}
Recent works have explored posterior sampling with discrete diffusion. 
G2D2~\citep{g2d2} extends the proximal sampler of RB-Modulation to VQ-diffusion~\citep{vq-diffusion} using a star-shaped noising process and Gumbel-Softmax dequantization~\citep{gumbel1954statistical,gumbel-soft,concrete}. 
While it enables gradient guidance, G2D2 depends on continuous relaxations, requires storing log-probabilities from previous step, and struggles to generalize to purely discrete token embeddings~(\S\ref{sec-exps}). 
SGDD~\citep{sgdd} instead proposes a split Gibbs sampler with Hamming-distance reweighting and rejection sampling via Metropolis–Hastings, but its exponential rejection rate restricts their results to low-resolution tasks.

\textbf{Masked (absorbing) discrete diffusion.}  
While G2D2 and SGDD can, in principle, be adapted to masked diffusion, they perform poorly with purely discrete token embeddings. 
In contrast, our method leverages a unified masked discrete diffusion model and introduces two key components: \textit{quantized expectation}~(\S\ref{sec-quant-exp}) and \textit{anchored remasking}~(\S\ref{sec-anchored-remask}). 
Together, these yield an efficient and scalable posterior sampler for high-resolution inverse problems. 
To our knowledge, this is the first inverse problem solver tailored for masked discrete diffusion with purely discrete embeddings, outperforming prior discrete samplers and remaining competitive—often superior—to continuous diffusion methods at substantially lower inference cost~(\S\ref{sec-exps}).

\section{Additional Theoretical Results}
\label{sec-addn-theory}
This appendix develops complementary theory for masked discrete diffusion posterior sampling. We first derive a pathwise variational bound for \emph{training} a likelihood-tilted reverse process (\textbf{Theorem~\ref{thm-addn-aps}}), showing that $-\log p_\varphi(\rvx|\rvy)$ is upper bounded by a reconstruction term, a sum of token-wise KL-divergence terms, and a sequence of measurement likelihood-based tilting terms. We then specialize this analysis to the \emph{training-free} setting where the token-to-image decoder is shared between unconditional generation and posterior sampling, yielding a bound for test-time anchored  posterior sampling (\textbf{Theorem~\ref{thm-addn-aps-inf}}). We provide theoretical insights drawn from each theorem in \textbf{Implication} subsections after the corresponding proofs. 

\begin{theorem}[Discrete Diffusion Posterior Sampling(DDPS)]
\label{thm-addn-aps}
Let $Z_{0:1} \!=\! \{Z_{t(i)}\}_{i=0}^{T}$ with $t(i)=i/T$ and $s(i)=(i-1)/T$ be the latent path of a masked discrete diffusion model, and let $q(Z_{0:1}|\rvx)$ be the forward noising law from~\eqref{eq-fwd}. Consider reverse kernels and a terminal decoder that factorize as
\[
p_\varphi(\rvx,Z_{0:1}|\rvy)
= p_\varphi(Z_1|\rvy)\,p_\varphi(\rvx|Z_0,\rvy)\,\prod_{i=1}^{T} p_\varphi\!\left(Z_{s(i)}|Z_{t(i)},\rvy\right),
\]
with token-wise reverse transitions given by the inference posterior in~\eqref{eq-rev} tilted by the likelihood,
\[
p_\varphi(Z_s^l| Z_t,\rvy)\;{\propto}\;q\!\big(Z_s^l|Z_t^l,\rvx_\varphi(Z_t)\big)\, q\!\big(\rvy|\rvx_\varphi(Z_t{; Z^l_{s}})\big).
\]
Then, for any $(\rvx,\rvy)$,
\begin{align}
\label{eq:path-bound}
-\log p_\varphi(\rvx|\rvy)
&\le \gL_{\mathrm{DDPS}}(\rvx,\rvy;\varphi) \coloneqq
\E_{q(Z_{0}| \rvx)}[-\log p_\varphi(\rvx| Z_0)]
\nonumber
\\
&\quad
+ \sum_{i=1}^{T} \E_{q(Z_{t(i)}| \rvx)} \Bigg[
\frac{\alpha_{s(i)}-\alpha_{t(i)}}{1-\alpha_{t(i)}}
\;
\sum_{l=1}^{L}
\mathrm{CE}\!\big(\rvx^l, \rvx^l_\varphi(Z_{t(i)})\big)
\1_{\{Z_{t(i)}^l=\rvm\}}
\nonumber
\\
&\qquad\qquad\qquad\qquad
- 
\E_{q({Z_{s(i)}}| Z_{t(i)}, \rvx)} \Big[\sum_{l=1}^{L} \log q(\rvy| \rvx_\varphi(Z_{t(i)}{; Z_{s(i)}^l}))\Big]
\nonumber
\\
& \qquad\qquad\qquad\qquad
{
\;+ 
\sum_{l=1}^{L}
\log \mathcal{Z}^l_{\varphi}(Z_{t(i)},\mathbf{y})
\Bigg]
}.
\end{align}
\end{theorem}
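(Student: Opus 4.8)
The plan is to run the standard variational (ELBO) argument, using the forward noising law $q(Z_{0:1}|\rvx)$ as the variational distribution over the latent path. First I would write the marginal $p_\varphi(\rvx|\rvy)=\sum_{Z_{0:1}} p_\varphi(\rvx,Z_{0:1}|\rvy)$, insert $q(Z_{0:1}|\rvx)$ in numerator and denominator, and apply Jensen's inequality (convexity of $-\log$) to get
$$
-\log p_\varphi(\rvx|\rvy)\;\le\;\E_{q(Z_{0:1}|\rvx)}\Big[\log\tfrac{q(Z_{0:1}|\rvx)}{p_\varphi(\rvx,Z_{0:1}|\rvy)}\Big].
$$
I then substitute the two stated factorizations and telescope over the $T$ steps exactly as in the discrete-diffusion NELBO derivation~\eqref{eq-nelbo}. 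This regroups the log-ratio into three pieces: a terminal reconstruction term $\E_q[-\log p_\varphi(\rvx|Z_0)]$, a prior-matching term $\KL(q(Z_1|\rvx)\,\|\,p_\varphi(Z_1|\rvy))$ at the fully-noised level, and a sum over steps of transition terms $\E_q[\KL(q(Z_{s(i)}|Z_{t(i)},\rvx)\,\|\,p_\varphi(Z_{s(i)}|Z_{t(i)},\rvy))]$.

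Next I dispose of the prior term: at $t=1$ we have $\alpha_1=0$, so $q(Z_1|\rvx)$ and the generative initialization $p_\varphi(Z_1|\rvy)$ both place all their mass on the all-$\rvm$ sequence, and their KL vanishes.

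The substance lies in the transition terms, where I write $s=s(i)$, $t=t(i)$ for brevity. Plugging in the tilted kernel $p_\varphi(Z_s^l|Z_t,\rvy)=q(Z_s^l|Z_t^l,\rvx_\varphi(Z_t))\,q(\rvy|\rvx_\varphi(Z_t))$ splits each transition log-ratio into (a) a measurement tilt $-\log q(\rvy|\rvx_\varphi(Z_t))$, which collects into the final summation of the bound, and (b) a token-factorized divergence $\sum_l \KL(q(Z_s^l|Z_t^l,\rvx)\,\|\,q(Z_s^l|Z_t^l,\rvx_\varphi(Z_t)))$ between the true and model inference posteriors. I evaluate (b) using the explicit posterior of~\eqref{eq-rev}: on revealed positions ($Z_t^l\neq\rvm$) the transition is a deterministic copy independent of the clean token, so the KL is $0$ --- this is the source of the indicator $\1_{\{Z_t^l=\rvm\}}$. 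On masked positions the posterior is the mixture $\tfrac{\alpha_s-\alpha_t}{1-\alpha_t}\rvx^l+\tfrac{1-\alpha_s}{1-\alpha_t}\rvm$, whose mask-component weight is identical under $\rvx$ and $\rvx_\varphi$, so the KL cancels down to $\tfrac{\alpha_s-\alpha_t}{1-\alpha_t}\,\KL(\rvx^l\,\|\,\rvx^l_\varphi(Z_t))$; since $\rvx^l$ is one-hot this equals $\tfrac{\alpha_s-\alpha_t}{1-\alpha_t}\,\mathrm{CE}(\rvx^l,\rvx^l_\varphi(Z_t))$, precisely the middle term of the claimed bound.

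I expect the main obstacle to be the careful bookkeeping around the tilting factor and its normalization. Because the scalar $q(\rvy|\rvx_\varphi(Z_t))$ multiplies the per-token inference posterior, the kernel $p_\varphi$ is only a sub-probability, and I must be explicit that the measurement factor is counted once per step --- contributing a single $\log q(\rvy|\cdot)$ per $i$ rather than an $L$-fold copy --- and that the telescoping of the forward marginals leaves only the boundary reconstruction and prior terms. The per-token KL reduction on masked positions is the single genuine computation, hinging on the shared mask-weight cancellation noted above; everything else is routine ELBO accounting.
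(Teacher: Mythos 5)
Your proposal is correct and follows essentially the same route as the paper's proof: the Jensen/ELBO bound with the forward law $q(Z_{0:1}|\rvx)$ as the variational posterior, splitting the tilted kernel into a per-step KL plus a measurement tilt, and the two-case evaluation of the masked-position KL yielding the weighted cross-entropy. Your explicit remarks on the sub-probability normalization of the tilted kernel and on counting the measurement factor once per step (rather than $L$-fold) are careful touches the paper handles only implicitly by deriving the $L=1$ case and asserting token-wise factorization, but they do not change the argument.
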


\begin{proof}
We present the derivation for sequence length $L=1$; the extension to $L>1$ follows by token-wise factorization as in~\citep{sohl2015deep}. Recall our parameterized reverse kernel (identical in form to \eqref{eq-rev} up to a likelihood tilt):
\begin{align}
\label{eq:rev-again}
q(Z_s^l | Z_t^l, \rvx_\varphi(Z_t)) =
\begin{cases}
\cat(Z_s^l; Z_t^l), & Z_t^l \neq \rvm, \\[2pt]
\cat\!\Big(Z_s^l; \frac{\alpha_s-\alpha_t}{1-\alpha_t}\rvx_\varphi(Z_t)+\frac{1-\alpha_s}{1-\alpha_t}\rvm\Big), & Z_t^l=\rvm.
\end{cases}
\end{align}

Starting from the conditional likelihood,
\begin{align*}
-\log p_\varphi(\rvx| \rvy) 
&= -\log \int p_\varphi(\rvx, Z_{0:1}| \rvy)\, \mathrm{d}Z_{0:1} \\
&= -\log \int p_\varphi(\rvx, Z_{0:1}| \rvy)\,
\frac{q(Z_{0:1}| \rvx,\rvy)}{q(Z_{0:1}| \rvx,\rvy)}\, \mathrm{d}Z_{0:1}.
\end{align*}
By the conditional independence of the forward process (the noising path is conditionally independent of $\rvy$ given $\rvx$), we have $q(Z_{0:1}| \rvx,\rvy)=q(Z_{0:1} | \rvx)$, hence
\begin{align*}
-\log p_\varphi(\rvx| \rvy) 
= -\log \E_{q(Z_{0:1}| \rvx)}\!\Bigg[\frac{p_\varphi(\rvx, Z_{0:1}| \rvy)}{q(Z_{0:1}| \rvx)}\Bigg].
\end{align*}

Applying Jensen’s inequality,
\begin{align*}
-\log p_\varphi(\rvx| \rvy) 
&\le \E_{q(Z_{0:1}| \rvx)}\!\Bigg[
-\log \frac{p_\varphi(\rvx, Z_{0:1}| \rvy)}{q(Z_{0:1}| \rvx)}
\Bigg].
\end{align*}

Using the factorization 
$p_\varphi(\rvx, Z_{0:1}| \rvy)=p_\varphi(Z_1| \rvy)\,p_\varphi(\rvx| Z_0,\rvy)\,\prod_{i=1}^{T} p_\varphi(Z_{s(i)}| Z_{t(i)},\rvy)$
and the forward process factorization
$q(Z_{0:1}| \rvx)=q(Z_1| \rvx)\,\prod_{i=1}^{T} q(Z_{s(i)}| Z_{t(i)},\rvx)$,
we obtain
\begin{align*}
&-\log \frac{p_\varphi(\rvx, Z_{0:1}| \rvy)}{q(Z_{0:1}| \rvx)}
\\
&= -\log p_\varphi(\rvx| Z_0,\rvy)
+ \log \frac{q(Z_1| \rvx)}{p_\varphi(Z_1| \rvy)}
+ \sum_{i=1}^{T} \log \frac{q(Z_{s(i)}| Z_{t(i)},\rvx)}{p_\varphi(Z_{s(i)}| Z_{t(i)},\rvy)}.
\end{align*}
Taking expectation under $q(Z_{0:1}| \rvx)$ yields
\begin{align}
\label{eq-cond-likelihood}
-\log p_\varphi(\rvx| \rvy)
&\le 
\E_{q(Z_{0:1}| \rvx)}\!\big[-\log p_\varphi(\rvx| Z_0,\rvy)\big]
+ \E_{q(Z_1| \rvx)}\!\Big[\log \frac{q(Z_1| \rvx)}{p_\varphi(Z_1| \rvy)}\Big]
\nonumber
\\
&\quad
+ \sum_{i=1}^{T} \E_{q(Z_{t(i)}| \rvx)} \E_{q(Z_{s(i)}| Z_{t(i)},\rvx)}
\!\Big[
\log \frac{q(Z_{s(i)}| Z_{t(i)},\rvx)}{p_\varphi(Z_{s(i)}| Z_{t(i)},\rvy)}
\Big].
\end{align}

Expanding the tilted reverse kernel using the parameterization: 
\begin{align*}
    p_\varphi(Z_{s(i)}| Z_{t(i)},\rvy) 
    &\;{=}\;  
    \frac{q(Z_{s(i)}| Z_{t(i)}, \rvx_\varphi(Z_{t(i)}))\, q(\rvy| \rvx_\varphi(Z_{t(i)}{; Z_{s(i)}}))}{{\mathcal{Z}_{\varphi}(Z_{t(i)},\mathbf{y})}},\quad \text{where}\\
    {
    \mathcal{Z}_{\varphi}(Z_{t(i)},\mathbf{y})
    }
    & 
    {
    \;=\;
    \sum_{\rvz_s \in \gV}
    q\!\left(\rvz_s | Z_{t(i)}, \mathbf{x}_\varphi(Z_{t(i)})\right)\;
    q\!\left(\mathbf{y}| \rvx_\varphi(Z_{t(i)}; \rvz_s)\right),
    }
\end{align*} 
we can rewrite the last term as
\begin{align*}
&\sum_{i=1}^{T} \E_{q(Z_{t(i)}| \rvx)} \E_{q(Z_{s(i)}| Z_{t(i)},\rvx)}
\Big[
\log \frac{q(Z_{s(i)}| Z_{t(i)},\rvx)}{q(Z_{s(i)}| Z_{t(i)},\rvx_\varphi(Z_{t(i)}))} 
- \log q(\rvy| \rvx_\varphi(Z_{t(i)}{; Z_{s(i)}}))
+ 
{
\log \mathcal{Z}_{\varphi}(Z_{t(i)},\mathbf{y})
}
\Big].
\end{align*}

This yields the final decomposition:
\begin{align*}
-\log p_\varphi(\rvx| \rvy)
&\le
\underbrace{
\E_{q(Z_{0}| \rvx)}[-\log p_\varphi(\rvx| Z_0,\rvy)]
+ \mathrm{KL}\!\big(q(Z_1 | \rvx)\,\|\,p_\varphi(Z_1| \rvy)\big)
}_{\text{reconstruction + boundary KL}}
\\
&\quad
+ \underbrace{\sum_{i=1}^{T} \E_{q(Z_{t(i)}| \rvx)} \Big[
\mathrm{KL}\!\big(q(Z_{s(i)}| Z_{t(i)},\rvx)\,\|\,q(Z_{s(i)}| Z_{t(i)},\rvx_\varphi(Z_{t(i)}))\big)
\Big]}_{\text{per-step KLs}}
\\
&\quad
- \underbrace{\sum_{i=1}^{T} \E_{q(Z_{t(i)}{, Z_{s(i)}}| \rvx)} \big[\log q(\rvy| \rvx_\varphi(Z_{t(i)}{; Z_{s(i)}}))\big]}_{\text{likelihood tilt terms}} \\
& 
{
\quad
+ \underbrace{
\sum_{i=1}^{T} \E_{q(Z_{t(i)}| \rvx)} 
\big[
\log \mathcal{Z}_{\varphi}(Z_{t(i)},\mathbf{y})
\big]
}_{\text{normalizing factor}}
}
\coloneqq \gL_{\mathrm{DDPS}}(\rvx,\rvy;\varphi).
\end{align*}

The first two groups of terms correspond to the standard masked diffusion posterior NELBO objective~\citep{sohl2015deep,d3pm} denoted by:
\begin{align*}
\gL_{\text{PNELBO}}(\rvx,\rvy;\varphi) 
&\coloneqq  
\E_{q(Z_{0}| \rvx)}\!\big[-\log p_\varphi(\rvx| Z_0,\rvy)\big] \\
&\quad + \mathrm{KL}\!\big(q(Z_1| \rvx)\,\|\,p_\varphi(Z_1| \rvy)\big) \\
&\quad + \sum_{i=1}^{T} \E_{q(Z_{t(i)}| \rvx)} 
\Big[
\mathrm{KL}\!\big(
q(Z_{s(i)}| Z_{t(i)},\rvx)\,\|\,q(Z_{s(i)}| Z_{t(i)},\rvx_\varphi(Z_{t(i)}))
\big)
\Big].
\end{align*}
The likelihood term,
\[
- \sum_{i=1}^{T} \E_{q(Z_{t(i)}{, Z_{s(i)}}| \rvx)} \big[\log q(\rvy| \rvx_\varphi(Z_{t(i)}{; Z_{s(i)}}))\big],
\]
capture the effect of incorporating observations $\rvy$ into posterior sampling.
{
Finally, we have an additive normalizing term that ensures the validity of the reverse transition \eqref{eq-aps-rev-per-token}.
}

We now bound the per-step KL terms. Having connected the decomposition to the NELBO, it suffices to compute, for each step $i$, the divergence
\[
\mathrm{KL}\!\big(q(Z_{s(i)}\!|Z_{t(i)},\rvx)\,\|\,q(Z_{s(i)}\!|Z_{t(i)},\rvx_\varphi(Z_{t(i)}))\big).
\]
Since masked diffusion induces a two–state posterior at each step (either remain masked or reveal the data token), we treat the two cases for $Z_{t(i)}$ separately.

\textbf{Case I: $Z_{t(i)}=\rvm$.}
From the masked diffusion forward process \eqref{eq-fwd}, when $Z_{t(i)}=\rvm$ the true posterior over $Z_{s(i)}$ has mass
\[
q(Z_{s(i)}=\rvm|Z_{t(i)}=\rvm,\rvx)=\frac{1-\alpha_{s(i)}}{1-\alpha_{t(i)}}\!,
\qquad
q(Z_{s(i)}=\rvx|Z_{t(i)}=\rvm,\rvx)=\frac{\alpha_{s(i)}-\alpha_{t(i)}}{1-\alpha_{t(i)}}.
\]
Under the model with prediction $\rvx_\varphi(Z_{t(i)})$ (a categorical distribution) as in \eqref{eq-rev}, the ``unmask'' branch is weighted by the model’s probability of the true token, $\langle \rvx_\varphi(Z_{t(i)}),\rvx\rangle$, while the mask probability is unchanged. Hence,
\begin{align*}
\mathrm{KL}\!\big(
&q(Z_{s(i)}\!|Z_{t(i)}\!=\!\rvm,\rvx)\,\|\,q(Z_{s(i)}\!|Z_{t(i)}\!=\!\rvm,\rvx_\varphi(Z_{t(i)}))
\big) \\
&=
\frac{1-\alpha_{s(i)}}{1-\alpha_{t(i)}} 
\log \frac{\frac{1-\alpha_{s(i)}}{1-\alpha_{t(i)}}}{\frac{1-\alpha_{s(i)}}{1-\alpha_{t(i)}}}
\;+\;
\frac{\alpha_{s(i)}-\alpha_{t(i)}}{1-\alpha_{t(i)}}
\log \frac{\frac{\alpha_{s(i)}-\alpha_{t(i)}}{1-\alpha_{t(i)}}}{\frac{\alpha_{s(i)}-\alpha_{t(i)}}{1-\alpha_{t(i)}}\,\langle \rvx_\varphi(Z_{t(i)}),\rvx\rangle} \\
&=
\frac{\alpha_{s(i)}-\alpha_{t(i)}}{1-\alpha_{t(i)}} 
\log \frac{1}{\langle \rvx_\varphi(Z_{t(i)}),\rvx\rangle}
\;=\;
\frac{\alpha_{t(i)}-\alpha_{s(i)}}{1-\alpha_{t(i)}} \,\log \langle \rvx_\varphi(Z_{t(i)}),\rvx\rangle.
\end{align*}
Equivalently, writing the cross-entropy with the one-hot target $\rvx$ as
$\mathrm{CE}(\rvx, \rvx_\varphi(Z_{t(i)})) = -\log \langle \rvx_\varphi(Z_{t(i)}),\rvx\rangle$,
\[
\mathrm{KL}\!\big(
q(Z_{s(i)}\!|Z_{t(i)}\!=\!\rvm,\rvx)\,\|\,q(Z_{s(i)}\!|Z_{t(i)}\!=\!\rvm,\rvx_\varphi(Z_{t(i)}))
\big)
=
\frac{\alpha_{s(i)}-\alpha_{t(i)}}{1-\alpha_{t(i)}}\,\mathrm{CE}(\rvx, \rvx_\varphi(Z_{t(i)})).
\]

\textbf{Case II: $Z_{t(i)}\neq \rvm$.}
When the current token is already unmasked, the posterior is deterministic:
$q(Z_{s(i)}\!=\!Z_{t(i)}|Z_{t(i)}\neq \rvm,\rvx)=1$. Thus,
\[
\mathrm{KL}\!\big(q(Z_{s(i)}\!|Z_{t(i)}\neq\rvm,\rvx)\,\|\,q(Z_{s(i)}\!|Z_{t(i)}\neq\rvm,\rvx_\varphi(Z_{t(i)}))\big)=0.
\]
Only masked coordinates contribute to the per-step KL, yielding for each step $i$,
\[
\mathrm{KL}\!\big(q(Z_{s(i)}\!| Z_{t(i)},\rvx)\,\|\,q(Z_{s(i)}\!| Z_{t(i)},\rvx_\varphi(Z_{t(i)}))\big)
=
\frac{\alpha_{s(i)}-\alpha_{t(i)}}{1-\alpha_{t(i)}}
\;\mathrm{CE}\!\big(\rvx, \rvx_\varphi(Z_{t(i)})\big)
\1_{\{Z_{t(i)}=\rvm\}}.
\]
This recovers the standard masked-diffusion NELBO weighting (cf.\ \eqref{eq-nelbo}): per-step contributions are cross-entropies at masked positions, scaled by $(\alpha_{s(i)}-\alpha_{t(i)})/(1-\alpha_{t(i)})$.
Generalizing this to sequences with length $L> 1$ yields:
\[
\mathrm{KL}\!\big(q(Z_{s(i)}\!|Z_{t(i)},\rvx)\,\|\,q(Z_{s(i)}\!|Z_{t(i)},\rvx_\varphi(Z_{t(i)}))\big)
=
\frac{\alpha_{s(i)}-\alpha_{t(i)}}{1-\alpha_{t(i)}}
\;
\sum_{l=1}^{L}
\mathrm{CE}\!\big(\rvx^l, \rvx^l_\varphi(Z_{t(i)})\big)
\1_{\{Z_{t(i)}^l=\rvm\}}.
\]
Factorizing \eqref{eq-cond-likelihood} for $L>1$ and following the results from above, we get
\begin{align*}
\gL_{\mathrm{DDPS}}(\rvx,\rvy;\varphi) 
& = 
\E_{q(Z_{0}| \rvx)}[-\log p_\varphi(\rvx| Z_0,\rvy)]
+ \mathrm{KL}\!\big(q(Z_1 | \rvx)\,\|\,p_\varphi(Z_1| \rvy)\big)
\\
&\quad
+ \sum_{i=1}^{T} \E_{q(Z_{t(i)}| \rvx)} \Bigg[
\frac{\alpha_{s(i)}-\alpha_{t(i)}}{1-\alpha_{t(i)}}
\;
\sum_{l=1}^{L}
\mathrm{CE}\!\big(\rvx^l, \rvx^l_\varphi(Z_{t(i)})\big)
\1_{\{Z_{t(i)}^l=\rvm\}}
\Bigg]
\\
&\quad
- \sum_{i=1}^{T} \E_{q(Z_{t(i)}{, Z_{s(i)}}| \rvx)} \Big[\sum_{l=1}^{L}\log q(\rvy| \rvx_\varphi(Z_{t(i)}{; Z^l_{s(i)}}))\Big]
\\
& \quad 
{
\;+ 
\sum_{i=1}^{T} \E_{q(Z_{t(i)}| \rvx)} 
\Big[
\sum_{l=1}^{L}
\log \mathcal{Z}^l_{\varphi}(Z_{t(i)},\mathbf{y})
\Big]
}.
\end{align*}
In masked diffusion $Z_1$ is typically the fully masked state or absorbing state, so $q(Z_1| \rvx)$ is degenerate and, with $p_\varphi(Z_1| \rvy)=q(Z_1|\rvx)$, the boundary KL vanishes.  Thus,
\begin{align*}
\gL_{\mathrm{DDPS}}(\rvx,\rvy;\varphi) 
& = 
\E_{q(Z_{0}| \rvx)}[-\log p_\varphi(\rvx| Z_0,\rvy)]
\\
&\quad
+ \sum_{i=1}^{T} \E_{q(Z_{t(i)}| \rvx)} \Bigg[
\frac{\alpha_{s(i)}-\alpha_{t(i)}}{1-\alpha_{t(i)}}
\;
\sum_{l=1}^{L}
\mathrm{CE}\!\big(\rvx^l, \rvx^l_\varphi(Z_{t(i)})\big)
\1_{\{Z_{t(i)}^l=\rvm\}}
\Bigg]
\\
&\quad
- \sum_{i=1}^{T} \E_{q(Z_{t(i)}{, Z_{s(i)}}| \rvx)} \Big[\sum_{l=1}^{L}\log q(\rvy| \rvx_\varphi(Z_{t(i)}{; Z^l_{s(i)}}))\Big]
\\
& \quad 
{
\;+ 
\sum_{i=1}^{T} \E_{q(Z_{t(i)}| \rvx)} 
\Big[
\sum_{l=1}^{L}
\log \mathcal{Z}^l_{\varphi}(Z_{t(i)},\mathbf{y})
\Big]
}.
\end{align*}
Furthermore, $\rvy$ imposes a distribution over $\rvx$ from which we wish to sample. However, when $Z_0$ is given then $\rvx$ is uniquely determined by the decoder as $\rvx = \mathrm{Dec}(Z_0)$. Therefore,
\begin{align*}
\gL_{\mathrm{DDPS}}(\rvx,\rvy;\varphi) 
& = 
\E_{q(Z_{0}| \rvx)}[-\log p_\varphi(\rvx| Z_0)]
\\
&\quad
+ \sum_{i=1}^{T} \E_{q(Z_{t(i)}| \rvx)} \Bigg[
\frac{\alpha_{s(i)}-\alpha_{t(i)}}{1-\alpha_{t(i)}}
\;
\sum_{l=1}^{L}
\mathrm{CE}\!\big(\rvx^l, \rvx^l_\varphi(Z_{t(i)})\big)
\1_{\{Z_{t(i)}^l=\rvm\}}
\Bigg]
\\
&\quad
- \sum_{i=1}^{T} \E_{q(Z_{t(i)}{, Z_{s(i)}}| \rvx)} \Big[\sum_{l=1}^{L}\log q(\rvy| \rvx_\varphi(Z_{t(i)}{; Z^l_{s(i)}}))\Big]
\\
& \quad 
{
\;+ 
\sum_{i=1}^{T} \E_{q(Z_{t(i)}| \rvx)} 
\Big[
\sum_{l=1}^{L}
\log \mathcal{Z}_{\varphi}(Z_{t(i)},\mathbf{y})
\Big]
},
\end{align*}
which upon regrouping completes the proof of the statement.
\end{proof}

\textbf{Implications.}
\textbf{Theorem~\ref{thm-addn-aps}} provides a principled upper bound on the negative log-posterior likelihood in discrete masked diffusion models.
\begin{itemize}
\item \emph{Trainable upper bound.} The pathwise upper bound $\gL_{\mathrm{DDPS}}(\rvx,\rvy;\varphi)$ in \eqref{eq:path-bound} provides a principled training criterion for discrete diffusion posterior sampling.
\item \emph{Reduction to standard training when $\rvy$ is absent.} Setting the likelihood to a constant (no measurements) removes the tilt terms and recovers the masked-diffusion training objective: only the reconstruction term and per-step KL-divergence terms remain.
\item \emph{Masked-token supervision.} The per-step KL-divergence terms vanish on already-revealed tokens and reduce to weighted cross-entropies on masked tokens, focusing learning signal exactly where denoising must occur. The weights $(\alpha_{s(i)}-\alpha_{t(i)})/(1-\alpha_{t(i)})$ expose how the noise schedule shapes gradient magnitude.
\item \emph{Data-consistency via tilt.} The additive terms $-\sum_i \E_{q(Z_{t(i)}{, Z_{s(i)}}|\rvx)}[\log q(\rvy|\rvx_\varphi(Z_{t(i)}{; Z_{s(i)}}))]$ encourage reverse transitions that produce intermediate predictions consistent with the measurement model, integrating task specific information at every denoising step.
\item \emph{Boundary conditions.} With an absorbing mask state, the boundary KL-divergence is constant (often zero), so optimization concentrates on reconstruction, token-level KL-divergence terms, and measurement consistency.
\item \emph{Compatibility with efficient parameterizations.} Because \eqref{eq:path-bound} is written in terms of token-wise categoricals, it directly supports time-independent or lightweight parameterizations (e.g., shared denoisers), helping scalability to long sequences and high resolution.
\end{itemize}

\begin{theorem}[Test-time Anchored Posterior Sampling]
\label{thm-addn-aps-inf}
Let $Z_{0:1}=\{Z_{t(i)}\}_{i=0}^{T}$ with $t(i)=i/T$ and $s(i)=(i-1)/T$ denote the latent path of a masked discrete diffusion model, and let $q(Z_{0:1}| \rvx)$ be the forward noising law from~\eqref{eq-fwd}. Assume the decoder is shared between unconditional generation and posterior sampling ($p_\varphi(\rvx|Z_0)=p_\theta(\rvx|Z_0)$), and the unconditional reverse transitions are parameterized as in~\eqref{eq-rev}. Define $\gL_{\mathrm{NELBO}}(\rvx;\theta)$ as in~\eqref{eq-nelbo}. Then, for any $(\rvx,\rvy)$,
\[
-\log p_\varphi(\rvx| \rvy)\;\le\;\gL_{\mathrm{APS}}(\rvx,\rvy;\varphi),
\]
where
\begin{align*}
\gL_{\mathrm{APS}}(\rvx,\rvy;\varphi)
\coloneqq
\gL_{\mathrm{NELBO}}(\rvx;\theta)
& + \sum_{i=1}^{T} \E_{q(Z_{t(i)}| \rvx)}
\!\Bigg[
\frac{\alpha_{s(i)}-\alpha_{t(i)}}{1-\alpha_{t(i)}}\,
\sum_{l=1}^{L}
\log \frac{\langle \rvx^l_\theta(Z_{t(i)}),\rvx^l\rangle}{\langle \rvx^l_\varphi(Z_{t(i)}),\rvx^l\rangle}\,
\1_{\{Z_{t(i)}=\rvm\}}
\\
&\qquad\qquad\qquad\qquad
- \E_{q({Z_{s(i)}}| Z_{t(i)}, \rvx)} 
\Big[\sum_{l=1}^{L} \log q(\rvy|\rvx_\varphi(Z_{t(i)}{;Z^l_{s(i)}}))\Big]
\\
&\qquad\qquad\qquad\qquad
{
\;+ 
\sum_{l=1}^{L}
\log \mathcal{Z}^l_{\varphi}(Z_{t(i)},\mathbf{y})
}
\Bigg]
.
\end{align*}
\end{theorem}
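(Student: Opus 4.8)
The plan is to obtain the APS bound as a direct algebraic rewriting of the DDPS bound from Theorem~\ref{thm-addn-aps}, using the shared-decoder hypothesis to fold the reconstruction and cross-entropy terms into the pretrained NELBO. Concretely, I would start from the already-established inequality $-\log p_\varphi(\rvx|\rvy)\le\gL_{\mathrm{DDPS}}(\rvx,\rvy;\varphi)$ and show that, under the hypotheses here, the right-hand side equals $\gL_{\mathrm{APS}}(\rvx,\rvy;\varphi)$ termwise; the inequality then transfers verbatim. Thus the entire content of the proof is an exact regrouping, with no new probabilistic estimate required.

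First I would invoke the shared-decoder assumption $p_\varphi(\rvx|Z_0)=p_\theta(\rvx|Z_0)$ to rewrite the reconstruction term $\E_{q(Z_{0:1}|\rvx)}[-\log p_\varphi(\rvx|Z_0)]$ appearing in $\gL_{\mathrm{DDPS}}$ as $\E_{q(Z_{0:1}|\rvx)}[-\log p_\theta(\rvx|Z_0)]$, which is exactly the reconstruction term of $\gL_{\mathrm{NELBO}}(\rvx;\theta)$ in \eqref{eq-nelbo}.

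Next I would treat the per-step cross-entropy term of $\gL_{\mathrm{DDPS}}$, namely the contribution $\frac{\alpha_{s(i)}-\alpha_{t(i)}}{1-\alpha_{t(i)}}\,\mathrm{CE}(\rvx^l,\rvx^l_\varphi(Z_{t(i)}))\1_{\{Z^l_{t(i)}=\rvm\}}$ at each step $i$ and position $l$, and expand it via $\mathrm{CE}(\rvx^l,\rvx^l_\varphi(Z_{t(i)}))=-\log\langle\rvx^l_\varphi(Z_{t(i)}),\rvx^l\rangle$. I would then add and subtract the corresponding pretrained-model quantity $-\log\langle\rvx^l_\theta(Z_{t(i)}),\rvx^l\rangle$ inside the sum. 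The subtracted copy, after absorbing the coefficient, is precisely the schedule-weighted log-inner-product $\frac{\alpha_{t(i)}-\alpha_{s(i)}}{1-\alpha_{t(i)}}\log\langle\rvx^l_\theta(Z_{t(i)}),\rvx^l\rangle$ that forms the per-step part of $\gL_{\mathrm{NELBO}}(\rvx;\theta)$, while the leftover difference collapses into the log-ratio term $\frac{\alpha_{s(i)}-\alpha_{t(i)}}{1-\alpha_{t(i)}}\log\frac{\langle\rvx^l_\theta(Z_{t(i)}),\rvx^l\rangle}{\langle\rvx^l_\varphi(Z_{t(i)}),\rvx^l\rangle}$ appearing in $\gL_{\mathrm{APS}}$. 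Summing the rewritten reconstruction term together with the recovered per-step term reproduces $\gL_{\mathrm{NELBO}}(\rvx;\theta)$, and the measurement-likelihood tilt $-\sum_i\E_{q(Z_{t(i)}|\rvx)}[\log q(\rvy|\rvx_\varphi(Z_{t(i)}))]$ is carried over unchanged. As in Theorem~\ref{thm-addn-aps}, the single-position computation extends to $L>1$ by token-wise factorization.

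Since this is a purely algebraic identity, I expect no analytic obstacle; the one place demanding care is the bookkeeping of sign conventions, because the NELBO weight in \eqref{eq-nelbo} multiplies $\log\langle\rvx^l_\theta,\rvx^l\rangle$ by $\frac{\alpha_{t(i)}-\alpha_{s(i)}}{1-\alpha_{t(i)}}$ whereas the DDPS cross-entropy multiplies $-\log\langle\rvx^l_\varphi,\rvx^l\rangle$ by $\frac{\alpha_{s(i)}-\alpha_{t(i)}}{1-\alpha_{t(i)}}$. Verifying that $-\mathrm{CE}$ equals $\log\langle\cdot,\cdot\rangle$ and that these two coefficients are exact negatives of each other is what makes the $\theta$-cross-entropy piece align with the NELBO weighting and the residual collapse to the stated log-ratio. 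Having established the termwise equality $\gL_{\mathrm{DDPS}}(\rvx,\rvy;\varphi)=\gL_{\mathrm{APS}}(\rvx,\rvy;\varphi)$ under the shared-decoder hypothesis, chaining it with the DDPS inequality yields $-\log p_\varphi(\rvx|\rvy)\le\gL_{\mathrm{APS}}(\rvx,\rvy;\varphi)$, completing the proof.
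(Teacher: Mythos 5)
Your proof is correct, and it reaches the stated bound by a more economical route than the paper's. The paper does not simply regroup the final $\gL_{\mathrm{DDPS}}$ expression: it returns to the intermediate pathwise bound \eqref{eq-cond-likelihood-0}, inserts the pretrained kernel via $\log\frac{q(Z_{s}|Z_{t},\rvx)}{p_\varphi(Z_{s}|Z_{t},\rvy)}=\log\frac{q(Z_{s}|Z_{t},\rvx)}{p_\theta(Z_{s}|Z_{t})}+\log\frac{p_\theta(Z_{s}|Z_{t})}{p_\varphi(Z_{s}|Z_{t},\rvy)}$, and then evaluates each piece by a fresh two-case (masked/unmasked) computation of the expectations under $q(Z_{s(i)}|Z_{t(i)},\rvx)$. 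You instead take the already-established inequality $-\log p_\varphi(\rvx|\rvy)\le\gL_{\mathrm{DDPS}}(\rvx,\rvy;\varphi)$ as a black box and perform the add-and-subtract of $-\log\langle\rvx^l_\theta(Z_{t(i)}),\rvx^l\rangle$ at the scalar level inside the weighted cross-entropy sum; the same telescoping through the pretrained model happens, but no new probabilistic evaluation is needed. Your bookkeeping checks out: with $\mathrm{CE}(\rvx^l,\rvx^l_\varphi)=-\log\langle\rvx^l_\varphi,\rvx^l\rangle$ one has $\frac{\alpha_{s(i)}-\alpha_{t(i)}}{1-\alpha_{t(i)}}\bigl(-\log\langle\rvx^l_\theta,\rvx^l\rangle\bigr)=\frac{\alpha_{t(i)}-\alpha_{s(i)}}{1-\alpha_{t(i)}}\log\langle\rvx^l_\theta,\rvx^l\rangle$, which is exactly the per-step NELBO weighting of \eqref{eq-nelbo}, the residual collapses to the stated log-ratio with the same mask indicator, and the shared-decoder hypothesis converts the reconstruction term. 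What your shortcut buys is brevity and the explicit identity $\gL_{\mathrm{DDPS}}=\gL_{\mathrm{APS}}$ under the shared-decoder assumption; what the paper's longer derivation buys is an interpretation of the $\theta$-cross-entropy piece as a genuine KL divergence to the pretrained kernel (making the ``adaptation gap'' reading of the log-ratio term transparent), which your purely algebraic regrouping does not surface. Both arguments are valid and yield the identical bound.
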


\begin{proof}
From the proof of \textbf{Theorem~\ref{thm-addn-aps}}, the conditional likelihood satisfies
\begin{align}
\label{eq-cond-likelihood-0}
-\log p_\varphi(\rvx| \rvy)
&\le 
\E_{q(Z_{0}| \rvx)}\!\big[-\log p_\varphi(\rvx| Z_0,\rvy)\big]
+ \E_{q(Z_1| \rvx)}\!\Big[\log \frac{q(Z_1| \rvx)}{p_\varphi(Z_1| \rvy)}\Big] 
\nonumber
\\
&\quad
+ \sum_{i=1}^{T} \E_{q(Z_{t(i)}| \rvx)} \E_{q(Z_{s(i)}| Z_{t(i)},\rvx)}
\!\Big[
\log \frac{q(Z_{s(i)}| Z_{t(i)},\rvx)}{p_\varphi(Z_{s(i)}| Z_{t(i)},\rvy)}
\Big].
\end{align}
\textbf{Compute each term inside summation.}
We now focus on the $i^{th}$ term inside the summation. Using the pretrained network $p_\theta(Z_{s(i)}|Z_{t(i)})$ as given in \eqref{eq-rev}, we  decompose the term as:
\begin{align*}
& \E_{q(Z_{s(i)}| Z_{t(i)},\rvx)}
\!\Bigg[
\log \frac{q(Z_{s(i)}| Z_{t(i)},\rvx)}{p_\varphi(Z_{s(i)}| Z_{t(i)},\rvy)}
\Bigg] \\
&= 
\E_{q(Z_{s(i)}| Z_{t(i)},\rvx)}
\!\Bigg[
\log \frac{q(Z_{s(i)}| Z_{t(i)},\rvx)}{p_\theta(Z_{s(i)}| Z_{t(i)})}
+ \log \frac{p_\theta(Z_{s(i)}| Z_{t(i)})}{p_\varphi(Z_{s(i)}| Z_{t(i)},\rvy)}
\Bigg].
\end{align*}
Isolating the first-term as a KL-divergence term, this yields
\begin{align*}
& \E_{q(Z_{s(i)}| Z_{t(i)},\rvx)}
\!\Bigg[
\log \frac{q(Z_{s(i)}| Z_{t(i)},\rvx)}{p_\varphi(Z_{s(i)}| Z_{t(i)},\rvy)}
\Bigg] \\
&= 
\underbrace{\mathrm{KL}\!\Big(
q(Z_{s(i)}| Z_{t(i)},\rvx)\,\|\,p_\theta(Z_{s(i)}| Z_{t(i)})
\Big)}_{\text{divergence w.r.t.\ pretrained network}}
+ 
\E_{q(Z_{s(i)}| Z_{t(i)},\rvx)}
\!\Bigg[
\log \frac{p_\theta(Z_{s(i)}| Z_{t(i)})}{p_\varphi(Z_{s(i)}| Z_{t(i)},\rvy)}
\Bigg].
\end{align*}
Since $p_\theta(Z_{s(i)}| Z_{t(i)})$ is parameterized via the pretrained network prediction $\rvx_\theta(Z_{t(i)})$, we get
\[
\mathrm{KL}\!\Big(
q(Z_{s(i)}| Z_{t(i)},\rvx)\,\|\,p_\theta(Z_{s(i)}| Z_{t(i)})
\Big)
=
\mathrm{KL}\!\Big(
q(Z_{s(i)}| Z_{t(i)},\rvx)\,\|\,q(Z_{s(i)}| Z_{t(i)},\rvx_\theta(Z_{t(i)}))
\Big).
\]
Following the argument of \textbf{Theorem~\ref{thm-addn-aps}}, we distinguish two cases for $Z_{t(i)}$. For $Z_{t(i)}=\rvm$, the KL-divergence evaluates to
\[
\mathrm{KL}\!\big(q(Z_{s(i)}| Z_{t(i)},\rvx)\,\|\,q(Z_{s(i)}| Z_{t(i)},\rvx_\theta(Z_{t(i)}))\big)
=
\frac{\alpha_{s(i)}-\alpha_{t(i)}}{1-\alpha_{t(i)}}
\,\mathrm{CE}\!\big(\rvx, \rvx_\theta(Z_{t(i)})\big),
\]
while for $Z_{t(i)}\neq\rvm$ the KL-divergence vanishes. Thus we obtain
\begin{align}
& \E_{q(Z_{s(i)}| Z_{t(i)},\rvx)}
\!\Bigg[
\log \frac{q(Z_{s(i)}| Z_{t(i)},\rvx)}{p_\varphi(Z_{s(i)}| Z_{t(i)},\rvy)}
\Bigg] \\
\nonumber
&=
\frac{\alpha_{s(i)}-\alpha_{t(i)}}{1-\alpha_{t(i)}}
\,\mathrm{CE}\!\big(\rvx, \rvx_\theta(Z_{t(i)})\big)\,
\1_{\{Z_{t(i)}=\rvm\}}
+ \E_{q(Z_{s(i)}| Z_{t(i)},\rvx)}
\!\Bigg[
\log \frac{p_\theta(Z_{s(i)}| Z_{t(i)})}{p_\varphi(Z_{s(i)}| Z_{t(i)},\rvy)}
\Bigg].
\end{align}
Substituting the above expression in the conditional likelihood~\eqref{eq-cond-likelihood-0}, we get
\begin{align}
\label{eq-cond-likelihood-1}
-\log p_\varphi(\rvx| \rvy)
&\le 
\E_{q(Z_{0}| \rvx)}\!\big[-\log p_\varphi(\rvx| Z_0,\rvy)\big]
+ \E_{q(Z_1| \rvx)}\!\Big[\log \frac{q(Z_1| \rvx)}{p_\varphi(Z_1| \rvy)}\Big] 
\nonumber
\\
&\quad
+ 
\sum_{i=1}^{T} 
\E_{q(Z_{t(i)}| \rvx)} 
\!\Bigg[
\frac{\alpha_{s(i)}-\alpha_{t(i)}}{1-\alpha_{t(i)}}
\,\mathrm{CE}\!\big(\rvx, \rvx_\theta(Z_{t(i)})\big)\,
\1_{\{Z_{t(i)}=\rvm\}}
\Bigg] 
\nonumber
\\
&\quad
+ 
\sum_{i=1}^{T} 
\E_{q(Z_{t(i)}| \rvx)} 
\E_{q(Z_{s(i)}| Z_{t(i)},\rvx)}
\!\Bigg[
\log \frac{p_\theta(Z_{s(i)}| Z_{t(i)})}{p_\varphi(Z_{s(i)}| Z_{t(i)},\rvy)}
\Bigg].
\end{align}

\textbf{Expected log-likelihood ratio under $q$.}
We now examine the expected difference in log-likelihoods between the prior and posterior transition distributions under the conditional law of the forward process.  
Recall that the reverse transition under our parameterization is
\begin{align*}
    p_\varphi(Z_{s(i)}| Z_{t(i)},\rvy) 
    &\;{=}\;  
    \frac{q(Z_{s(i)}| Z_{t(i)}, \rvx_\varphi(Z_{t(i)}))\, q(\rvy| \rvx_\varphi(Z_{t(i)}{; Z_{s(i)}}))}{{\mathcal{Z}_{\varphi}(Z_{t(i)},\mathbf{y})}},\quad \text{where}\\
    {
    \mathcal{Z}_{\varphi}(Z_{t(i)},\mathbf{y})
    }
    & 
    {
    \;=\;
    \sum_{\rvz_s \in \gV}
    q\!\left(\rvz_s | Z_{t(i)}, \mathbf{x}_\varphi(Z_{t(i)})\right)\;
    q\!\left(\mathbf{y}| \rvx_\varphi(Z_{t(i)}; \rvz_s)\right).
    }
\end{align*} 

Consider the expectation of the log-likelihood ratio under the forward posterior $q(Z_{s(i)}|Z_{t(i)},\rvx)$:
\begin{align*}
& \E_{q(Z_{s(i)}| Z_{t(i)},\rvx)}
\!\Bigg[
\log \frac{p_\theta(Z_{s(i)}| Z_{t(i)})}{p_\varphi(Z_{s(i)}| Z_{t(i)},\rvy)}
\Bigg].
\end{align*}
Substituting the definition of $p_\varphi$, we obtain
\begin{align*}
& \E_{q(Z_{s(i)}| Z_{t(i)},\rvx)}
\!\Bigg[
\log \frac{p_\theta(Z_{s(i)}| Z_{t(i)}){\mathcal{Z}_{\varphi}(Z_{t(i)},\mathbf{y})}}
{q(Z_{s(i)}| Z_{t(i)},\rvx_\varphi(Z_{t(i)}))\,q(\rvy|\rvx_\varphi(Z_{t(i)}{;Z_{s(i)}}))}
\Bigg] \\
&=
\E_{q(Z_{s(i)}| Z_{t(i)},\rvx)}
\!\Bigg[
\log \frac{p_\theta(Z_{s(i)}| Z_{t(i)})}{q(Z_{s(i)}| Z_{t(i)},\rvx_\varphi(Z_{t(i)}))}
- \log q(\rvy|\rvx_\varphi(Z_{t(i)}{;Z_{s(i)}}))
{
+\log \mathcal{Z}_{\varphi}(Z_{t(i)},\mathbf{y})
}
\Bigg]
\\
& =
\E_{q(Z_{s(i)}| Z_{t(i)},\rvx)}
\!\Bigg[
\log \frac{p_\theta(Z_{s(i)}| Z_{t(i)})}
{q(Z_{s(i)}| Z_{t(i)},\rvx_\varphi(Z_{t(i)}))}
\Bigg]
-
\E_{q(Z_{s(i)}| Z_{t(i)},\rvx)}
\!\Bigg[
\log q(\rvy|\rvx_\varphi(Z_{t(i)}{;Z_{s(i)}}))
\Bigg]\\
&\hspace{7.2cm}
{
+\log \mathcal{Z}_{\varphi}(Z_{t(i)},\mathbf{y})
}.
\end{align*}
Since $p_\theta(Z_{s(i)}| Z_{t(i)})$ is represented by the network prediction $\rvx_\theta(Z_{t(i)})$, we can rewrite the first expectation as:
\begin{align*}
\E_{q(Z_{s(i)}| Z_{t(i)},\rvx)}
\!\Bigg[
\log \frac{p_\theta(Z_{s(i)}| Z_{t(i)})}
{q(Z_{s(i)}| Z_{t(i)},\rvx_\varphi(Z_{t(i)}))}
\Bigg]
=
\E_{q(Z_{s(i)}| Z_{t(i)},\rvx)}
\!\Bigg[
\log \frac{q(Z_{s(i)}| Z_{t(i)},\rvx_\theta(Z_{t(i)}))}
{q(Z_{s(i)}| Z_{t(i)},\rvx_\varphi(Z_{t(i)}))}
\Bigg].
\end{align*}
Putting everything together, the expected log-likelihood ratio under $q$ becomes
\begin{align}
\label{eq-cond-likelihood-2}
& \E_{q(Z_{s(i)}| Z_{t(i)},\rvx)}
\!\Bigg[
\log \frac{p_\theta(Z_{s(i)}| Z_{t(i)})}{p_\varphi(Z_{s(i)}| Z_{t(i)},\rvy)}
\Bigg] 
\nonumber
\\
&=
\E_{q(Z_{s(i)}| Z_{t(i)},\rvx)}
\!\Bigg[
\log \frac{q(Z_{s(i)}| Z_{t(i)},\rvx_\theta(Z_{t(i)}))}
{q(Z_{s(i)}| Z_{t(i)},\rvx_\varphi(Z_{t(i)}))}
\Bigg]
-
\E_{q(Z_{s(i)}| Z_{t(i)},\rvx)}
\!\Bigg[
\log q(\rvy|\rvx_\varphi(Z_{t(i)}{;Z_{s(i)}}))
\Bigg]
\nonumber
\\
&\hspace{7.2cm}
{
+\log \mathcal{Z}_{\varphi}(Z_{t(i)},\mathbf{y})
}.
\end{align}
Similarly to the proof of \textbf{Theorem~\ref{thm-addn-aps}}, we consider two cases to compute the first expectation.

\textbf{Case I: $Z_{t(i)}=\rvm$.}
Using the masked diffusion inference posterior derived from \eqref{eq-fwd}, when $Z_{t(i)}=\rvm$ we have
\[
q(Z_{s(i)}=\rvm\,|\,Z_{t(i)}=\rvm,\rvx)=\frac{1-\alpha_{s(i)}}{1-\alpha_{t(i)}},\quad
q(Z_{s(i)}=\rvx\,|\,Z_{t(i)}=\rvm,\rvx)=\frac{\alpha_{s(i)}-\alpha_{t(i)}}{1-\alpha_{t(i)}}.
\]
Under the pretrained model parameterization \eqref{eq-rev}, the corresponding terms are
\begin{align*}
q(Z_{s(i)} =\rvm\,|\,Z_{t(i)}=\rvm,\rvx_\theta(Z_{t(i)}))
& =
\frac{1-\alpha_{s(i)}}{1-\alpha_{t(i)}},
\\
q(Z_{s(i)}=\rvx\,|\,Z_{t(i)}=\rvm,\rvx_\theta(Z_{t(i)}))
& =
\frac{\alpha_{s(i)}-\alpha_{t(i)}}{1-\alpha_{t(i)}}\langle \rvx_\theta(Z_{t(i)}),\rvx\rangle,
\end{align*}
and likewise with $\rvx_\varphi(Z_{t(i)})$ replacing $\rvx_\theta(Z_{t(i)})$ for our parameterization. Hence,
\begin{align*}
& \E_{q(Z_{s(i)}\,|\, Z_{t(i)}=\rvm,\rvx)}
\Bigg[
\log \frac{q(Z_{s(i)}\,|\, Z_{t(i)}=\rvm,\rvx_\theta(Z_{t(i)}))}
{q(Z_{s(i)}\,|\, Z_{t(i)}=\rvm,\rvx_\varphi(Z_{t(i)}))}
\Bigg] \\
&=
\frac{1-\alpha_{s(i)}}{1-\alpha_{t(i)}}\,
\log \frac{\frac{1-\alpha_{s(i)}}{1-\alpha_{t(i)}}}{\frac{1-\alpha_{s(i)}}{1-\alpha_{t(i)}}}
\;+\;
\frac{\alpha_{s(i)}-\alpha_{t(i)}}{1-\alpha_{t(i)}}\,
\log \frac{\frac{\alpha_{s(i)}-\alpha_{t(i)}}{1-\alpha_{t(i)}}\,\langle \rvx_\theta(Z_{t(i)}),\rvx\rangle}{\frac{\alpha_{s(i)}-\alpha_{t(i)}}{1-\alpha_{t(i)}}\,\langle \rvx_\varphi(Z_{t(i)}),\rvx\rangle} \\
&=
\frac{\alpha_{s(i)}-\alpha_{t(i)}}{1-\alpha_{t(i)}}\,
\log \frac{\langle \rvx_\theta(Z_{t(i)}),\rvx\rangle}{\langle \rvx_\varphi(Z_{t(i)}),\rvx\rangle}.
\end{align*}

\textbf{Case II: $Z_{t(i)}\neq \rvm$.}
When the token is already revealed, the posterior is deterministic:
\[
q(Z_{s(i)}=Z_{t(i)}\,|\,Z_{t(i)}\neq \rvm,\rvx)=1,
\]
and this form is identical for the $\rvx_\theta$- and $\rvx_\varphi$-parameterized posteriors as well:
\[
q(Z_{s(i)}=Z_{t(i)}\,|\,Z_{t(i)}\neq \rvm,\rvx_\theta(Z_{t(i)}))=1,\quad
q(Z_{s(i)}=Z_{t(i)}\,|\,Z_{t(i)}\neq \rvm,\rvx_\varphi(Z_{t(i)}))=1.
\]
Therefore,
\begin{align*}
\E_{q(Z_{s(i)}\,|\, Z_{t(i)}\neq \rvm,\rvx)}
\Bigg[
\log \frac{q(Z_{s(i)}\,|\, Z_{t(i)},\rvx_\theta(Z_{t(i)}))}
{q(Z_{s(i)}\,|\, Z_{t(i)},\rvx_\varphi(Z_{t(i)}))}
\Bigg] = 0
\end{align*}

Combining both cases,
\[
\E_{q(Z_{s(i)}\,|\, Z_{t(i)},\rvx)}
\Bigg[
\log \frac{q(Z_{s(i)}\,|\, Z_{t(i)},\rvx_\theta(Z_{t(i)}))}
{q(Z_{s(i)}\,|\, Z_{t(i)},\rvx_\varphi(Z_{t(i)}))}
\Bigg]
=
\frac{\alpha_{s(i)}-\alpha_{t(i)}}{1-\alpha_{t(i)}}\,
\log \frac{\langle \rvx_\theta(Z_{t(i)}),\rvx\rangle}{\langle \rvx_\varphi(Z_{t(i)}),\rvx\rangle}
\1_{\{Z_{t(i)}=\rvm\}}\,,
\]
which simplifies \eqref{eq-cond-likelihood-2} as follows:
\begin{align}
\label{eq-cond-likelihood-3}
& \E_{q(Z_{s(i)}| Z_{t(i)},\rvx)}
\!\Bigg[
\log \frac{p_\theta(Z_{s(i)}| Z_{t(i)})}{p_\varphi(Z_{s(i)}| Z_{t(i)},\rvy)}
\Bigg] 
\nonumber 
\\
&=
\frac{\alpha_{s(i)}-\alpha_{t(i)}}{1-\alpha_{t(i)}}\,
\log \frac{\langle \rvx_\theta(Z_{t(i)}),\rvx\rangle}{\langle \rvx_\varphi(Z_{t(i)}),\rvx\rangle}
\1_{\{Z_{t(i)}=\rvm\}}\,
-
\E_{q(Z_{s(i)}| Z_{t(i)},\rvx)}
\!\Bigg[
\log q(\rvy|\rvx_\varphi(Z_{t(i)}{;Z_{s(i)}}))
\Bigg]
\nonumber
\\
&\hspace{6.8cm}
{
+\log \mathcal{Z}_{\varphi}(Z_{t(i)},\mathbf{y})
}.
\end{align}
Substituting \eqref{eq-cond-likelihood-3} into \eqref{eq-cond-likelihood-1}, the conditional likelihood can be bounded as
\begin{align}
\label{eq-cond-likelihood-4}
-\log p_\varphi(\rvx| \rvy)
&\le 
\E_{q(Z_{0:1}| \rvx)}[-\log p_\varphi(\rvx| Z_0,\rvy)]
+ \E_{q(Z_1| \rvx)}\!\Big[\log \frac{q(Z_1| \rvx)}{p_\varphi(Z_1| \rvy)}\Big] \nonumber \\
&\quad
+ \sum_{i=1}^{T} \E_{q(Z_{t(i)}| \rvx)} 
\Bigg[
\frac{\alpha_{s(i)}-\alpha_{t(i)}}{1-\alpha_{t(i)}}\,
\mathrm{CE}(\rvx,\rvx_\theta(Z_{t(i)}))\,\1_{\{Z_{t(i)}=\rvm\}}
\Bigg] \nonumber \\
&\quad
+ \sum_{i=1}^{T} \E_{q(Z_{t(i)}| \rvx)} \E_{q(Z_{s(i)}| Z_{t(i)},\rvx)}
\Bigg[
\log \frac{p_\theta(Z_{s(i)}| Z_{t(i)})}{p_\varphi(Z_{s(i)}| Z_{t(i)},\rvy)}
\Bigg]\nonumber \\
&=
\E_{q(Z_{0:1}| \rvx)}[-\log p_\varphi(\rvx| Z_0,\rvy)]
+ \E_{q(Z_1| \rvx)}\!\Big[\log \frac{q(Z_1| \rvx)}{p_\varphi(Z_1| \rvy)}\Big] \nonumber \\
&\quad
+ \sum_{i=1}^{T} \E_{q(Z_{t(i)}| \rvx)} 
\Bigg[
\frac{\alpha_{s(i)}-\alpha_{t(i)}}{1-\alpha_{t(i)}}\,
\mathrm{CE}(\rvx,\rvx_\theta(Z_{t(i)}))\,\1_{\{Z_{t(i)}=\rvm\}}
\Bigg] \nonumber \\
&\quad
+ \sum_{i=1}^{T} \E_{q(Z_{t(i)}| \rvx)} 
\Bigg[
\frac{\alpha_{s(i)}-\alpha_{t(i)}}{1-\alpha_{t(i)}}\,
\log \frac{\langle \rvx_\theta(Z_{t(i)}),\rvx\rangle}{\langle \rvx_\varphi(Z_{t(i)}),\rvx\rangle}\,
\1_{\{Z_{t(i)}=\rvm\}}
\Bigg] \nonumber \\
&\quad
- \sum_{i=1}^{T} \E_{q(Z_{t(i)}{,Z_{s(i)}}| \rvx)} 
\Big[\log q(\rvy|\rvx_\varphi(Z_{t(i)}{;Z_{s(i)}}))\Big]
\nonumber
\\
&\quad
{
\;+ \sum_{i=1}^{T} \E_{q(Z_{t(i)}| \rvx)} 
\Big[
\log \mathcal{Z}_{\varphi}(Z_{t(i)},\mathbf{y})
\Big]
}.
\end{align}

\textbf{Treatment of boundary conditions.}
In masked diffusion the terminal state $Z_1$ is absorbing (all-mask), hence
$q(Z_1| \rvx)=p_\varphi(Z_1| \rvy)$ and the boundary KL-divergence vanishes. 
Thus \eqref{eq-cond-likelihood-4} becomes
\begin{align}
\label{eq-cond-likelihood-5}
-\log p_\varphi(\rvx| \rvy)
&\le 
\E_{q(Z_{0}| \rvx)}\!\big[-\log p_\varphi(\rvx| Z_0,\rvy)\big] \nonumber \\
&\quad
+ \sum_{i=1}^{T} \E_{q(Z_{t(i)}| \rvx)} 
\Bigg[
\frac{\alpha_{s(i)}-\alpha_{t(i)}}{1-\alpha_{t(i)}}\,
\mathrm{CE}\!\big(\rvx,\rvx_\theta(Z_{t(i)})\big)\,\1_{\{Z_{t(i)}=\rvm\}}
\Bigg] \nonumber \\
&\quad
+ \sum_{i=1}^{T} \E_{q(Z_{t(i)}| \rvx)} 
\Bigg[
\frac{\alpha_{s(i)}-\alpha_{t(i)}}{1-\alpha_{t(i)}}\,
\log \frac{\langle \rvx_\theta(Z_{t(i)}),\rvx\rangle}{\langle \rvx_\varphi(Z_{t(i)}),\rvx\rangle}\,
\1_{\{Z_{t(i)}=\rvm\}}
\Bigg] \nonumber \\
&\quad
- \sum_{i=1}^{T} \E_{q(Z_{t(i)}{,Z_{s(i)}}| \rvx)} 
\Big[\log q(\rvy|\rvx_\varphi(Z_{t(i)}{;Z_{s(i)}}))\Big]
\nonumber
\\
&\quad
{
\;+ \sum_{i=1}^{T} \E_{q(Z_{t(i)}| \rvx)} 
\Big[
\log \mathcal{Z}_{\varphi}(Z_{t(i)},\mathbf{y})
\Big]
}.
\end{align}
Since $\rvx$ is a deterministic function of $Z_0$, the reconstruction term in \eqref{eq-cond-likelihood-5} simplifies to:
\begin{align}
\label{eq-cond-likelihood-6}
-\log p_\varphi(\rvx| \rvy)
&\le 
\E_{q(Z_{0}| \rvx)}\!\big[-\log p_\varphi(\rvx| Z_0)\big] \nonumber \\
&\quad
+ \sum_{i=1}^{T} \E_{q(Z_{t(i)}| \rvx)} 
\Bigg[
\frac{\alpha_{s(i)}-\alpha_{t(i)}}{1-\alpha_{t(i)}}\,
\mathrm{CE}\!\big(\rvx,\rvx_\theta(Z_{t(i)})\big)\,\1_{\{Z_{t(i)}=\rvm\}}
\Bigg] \nonumber \\
&\quad
+ \sum_{i=1}^{T} \E_{q(Z_{t(i)}| \rvx)} 
\Bigg[
\frac{\alpha_{s(i)}-\alpha_{t(i)}}{1-\alpha_{t(i)}}\,
\log \frac{\langle \rvx_\theta(Z_{t(i)}),\rvx\rangle}{\langle \rvx_\varphi(Z_{t(i)}),\rvx\rangle}\,
\1_{\{Z_{t(i)}=\rvm\}}
\Bigg] \nonumber \\
&\quad
- \sum_{i=1}^{T} \E_{q(Z_{t(i)}{,Z_{s(i)}}| \rvx)} 
\Big[\log q(\rvy|\rvx_\varphi(Z_{t(i)}{;Z_{s(i)}}))\Big]
\nonumber
\\
&\quad
{
\;+ \sum_{i=1}^{T} \E_{q(Z_{t(i)}| \rvx)} 
\Big[
\log \mathcal{Z}_{\varphi}(Z_{t(i)},\mathbf{y})
\Big]
}.
\end{align}
Finally, the decoder is same for unconditional generation and posterior sampling, i.e., $p_\varphi(\rvx| Z_0)=p_\theta(\rvx| Z_0)$. Substituting this property in \eqref{eq-cond-likelihood-6} yields
\begin{align}
\label{eq-cond-likelihood-7}
-\log p_\varphi(\rvx| \rvy)
&\le 
\E_{q(Z_{0}| \rvx)}\!\big[-\log p_\theta(\rvx| Z_0)\big] \nonumber \\
&\quad
+ \sum_{i=1}^{T} \E_{q(Z_{t(i)}| \rvx)} 
\Bigg[
\frac{\alpha_{s(i)}-\alpha_{t(i)}}{1-\alpha_{t(i)}}\,
\mathrm{CE}\!\big(\rvx,\rvx_\theta(Z_{t(i)})\big)\,\1_{\{Z_{t(i)}=\rvm\}}
\Bigg] \nonumber \\
&\quad
+ \sum_{i=1}^{T} \E_{q(Z_{t(i)}| \rvx)} 
\Bigg[
\frac{\alpha_{s(i)}-\alpha_{t(i)}}{1-\alpha_{t(i)}}\,
\log \frac{\langle \rvx_\theta(Z_{t(i)}),\rvx\rangle}{\langle \rvx_\varphi(Z_{t(i)}),\rvx\rangle}\,
\1_{\{Z_{t(i)}=\rvm\}}
\Bigg] \nonumber \\
&\quad
- \sum_{i=1}^{T} \E_{q(Z_{t(i)}{,Z_{s(i)}}| \rvx)} 
\Big[\log q(\rvy|\rvx_\varphi(Z_{t(i)}{;Z_{s(i)}}))\Big]
\nonumber
\\
&\quad
{
\;+ \sum_{i=1}^{T} \E_{q(Z_{t(i)}| \rvx)} 
\Big[
\log \mathcal{Z}_{\varphi}(Z_{t(i)},\mathbf{y})
\Big]
}
\coloneqq \gL_{\mathrm{APS}}(\rvx,\rvy;\varphi)
\end{align}
Note that $\mathrm{CE}\!\big(\rvx,\rvx_\theta(Z_{t(i)})\big) = -\log \langle\rvx,\rvx_\theta(Z_{t(i)} \rangle$.
Generalizing this to $L>1$, the first two terms in \eqref{eq-cond-likelihood-7} equals the standard NELBO \eqref{eq-nelbo} used to train the masked diffusion model. 
Therefore, we have
\begin{align*}
\gL_{\mathrm{APS}}(\rvx,\rvy;\varphi)
=
\gL_{\mathrm{NELBO}}(\rvx;\theta)
& +
\sum_{i=1}^{T} \E_{q(Z_{t(i)}| \rvx)} 
\Bigg[
\frac{\alpha_{s(i)}-\alpha_{t(i)}}{1-\alpha_{t(i)}}\,
\sum_{l=1}^{L}
\log \frac{\langle \rvx^l_\theta(Z_{t(i)}),\rvx^l\rangle}{\langle \rvx^l_\varphi(Z_{t(i)}),\rvx^l\rangle}\,
\1_{\{Z^l_{t(i)}=\rvm\}}
\Bigg] \nonumber \\
&\quad
- \sum_{i=1}^{T} \E_{q(Z_{t(i)}{,Z_{s(i)}}| \rvx)} 
\Big[\sum_{l=1}^{L} \log q(\rvy|\rvx_\varphi(Z_{t(i)}{;Z^l_{s(i)}}))\Big]
\nonumber
\\
&\quad
{
\;+ \sum_{i=1}^{T} \E_{q(Z_{t(i)}| \rvx)} 
\Big[
\sum_{l=1}^{L}
\log \mathcal{Z}^l_{\varphi}(Z_{t(i)},\mathbf{y})
\Big]
},
\end{align*}
which upon regrouping completes the statement of the theorem.
\end{proof}


\textbf{Implications.}
\textbf{Theorem~\ref{thm-addn-aps-inf}} establishes a principled upper bound on the negative log-posterior likelihood when posterior sampling is performed without additional training.
\begin{itemize}
\item \emph{Reuse of pretrained objective.} The bound $\gL_{\mathrm{APS}}(\rvx,\rvy;\varphi)$ is expressed in terms of the standard masked diffusion $\gL_{\mathrm{NELBO}}(\rvx;\theta)$, meaning that posterior sampling can be performed using pretrained masked diffusion models.
\item \emph{Adaptation gap.} The log-ratio correction term quantifies the mismatch between the pretrained transitions $\rvx_\theta$ and the proposed posterior transitions $\rvx_\varphi$, effective only at masked positions. This isolates the additional cost of posterior sampling.
\item \emph{Measurement consistency.} The final summation enforces alignment with the measurement likelihood $q(\rvy|\rvx_\varphi(Z_{t(i)}{;Z_{s(i)}}))$, ensuring that the sampler accounts for observations at each diffusion step.  
\item \emph{Boundary conditions.} As in the training bound, the absorbing mask state renders the boundary KL-divergence constant (often zero), so the effective objective simplifies to the pretrained NELBO plus adaptation and measurement terms.  
\item \emph{Test-time posterior sampling.} Together, the decomposition clarifies how posterior sampling can be performed without retraining: start from the pretrained NELBO and add corrections for model adaptation at masked positions while incorporating measurements via tilting.  
\end{itemize}

\section{Additional Experiments}
\label{sec-addn-exp}

This section provides supplementary details and evaluations of our APS method. 
We first describe implementation details for both inverse problems and stylization (\S\ref{sec-addn-impl}), followed by algorithmic analysis including pseudocode and hyperparameter studies (\S\ref{sec-addn-exps-algo}). 
We then examine the impact of design choices in our ablations (\S\ref{sec-addn-exps-ablation}), evaluate using the same prior (\S\ref{sec-exps-uniform}), and discuss computational complexity (\S\ref{sec-addn-exps-complexity}). 
Next, we outline the compared baselines (\S\ref{sec-addn-exps-baselines}) and summarize benchmarks and metrics (\S\ref{sec-addn-exps-benchmarks}). 
Finally, we present additional results (\S\ref{sec-addn-res}), limitations (\S\ref{sec-limitations}), and conclude with reproducibility (\S\ref{sec-reproduce}).

\subsection{Implementation Details}
\label{sec-addn-impl}


\subsubsection{Inverse Problems}
\label{sec-addn-impl-inv}
For inverse problems, we implement our test-time optimization using two main configurations corresponding to the APS ($512\times 512$) and APS-L ($1024\times 1024$) results reported in \S\ref{sec-exps}. The full reverse sampling process is discretized into $15$ time steps following a cosine mask schedule, using a classifier-free guidance scale of $3.5$.

At each of the $15$ reverse steps, we perform an inner optimization loop to ensure measurement consistency. This loop consists of $100$ optimization steps using the Adam optimizer with a learning rate of $1.0$. The total loss function is a weighted sum of two components: (1) a reconstruction loss, which could be L1 or L2 norm, but we choose L1 norm: $\|\rvy - \gA(\xhat)\|_1$ as it is known to generate sharper quality), and (2) a VGG perceptual loss with a coefficient of $10^{-3}$.

The specific parameters for each degradation operator $\gA(\cdot)$ vary by task, with all measurements simulated by adding Gaussian noise of $\sigma = 0.05$. For super resolution, we use a $4\times$ downsampling factor. For Gaussian Deblurring, the operator is a Gaussian kernel of size $61\times 61$ with a standard deviation of $3.0$. Motion Deblurring uses a kernel of the same size with an intensity parameter of $0.5$ (on a scale of $0$ for linear to $1$ for highly nonlinear), corresponding to a moderately nonlinear motion path. For Inpainting, we randomly remove $70\%$ of pixels. Nonlinear Deblurring kernels are generated using the \texttt{KernelWizard}\footnote{\url{https://github.com/LeviBorodenko/motionblur}} model from the \texttt{bkse}\footnote{\url{https://github.com/VinAIResearch/blur-kernel-space-exploring}} library. Finally, High Dynamic Range (HDR) reconstruction is modeled by scaling the image data and clipping the result, following the operation $\text{clip}(\text{data} \times 2, -1, 1)$.

\subsubsection{Reference-based Stylization}
\label{sec-addn-impl-style}
Our approach to training-free, reference-based stylization leverages the core APS framework by framing the task as a \textit{highly nonlinear} inverse problem. Let $\xhat_{\text{ref}}$ denote the conditional reference image providing the style. The target measurement, a style vector $\rvy$, is obtained by applying a pretrained Contrastive Style Descriptor (CSD)~\citep{csd} model as our measurement operator $\gA(\cdot)$ to this reference image, i.e., $\rvy = \gA(\xhat_{\mathrm{ref}})$. At each reverse diffusion step $t$, APS aims to generate a sample whose style matches this target. 

As discussed in \S\ref{sec-aps}, there are two main stages of APS.
In the first stage (\S\ref{sec-quant-exp}), we perform a differentiable forward pass by computing the expected codebook embedding $\bar{\rvx}^l = \sum_{k=1}^K \rvc_k \rvx^l_\varphi(\rvz_{t(i)})$ for $l=1,\dots, L$ using the model's output probabilities $\rvx_\theta(\rvz_t)^l$ as initial condition for $\varphi_{t(i)}^l$. The straight-through estimator is then used to obtain a differentiable image representation $\xhat = \gD(\xtilde)$. To guide the optimization, the measurement consistency loss is calculated as the cosine distance between the style vector of the generated image and the target style vector $\rvy$ as follows:
\begin{align*}
\mathcal{L}_{\text{style}}(\varphi_{t(i)}) = 1 - \frac{\langle \gA(\xhat) , \rvy)\rangle}{\|\gA(\xhat)\| \|\rvy\|}.
\end{align*}
For each reverse step, we perform $100$ optimization steps using an Adam optimizer with a learning rate of $0.1$. Gradients from this style loss are backpropagated through the frozen VQ-VAE decoder $\gD(\cdot)$ and the straight-through estimator $\xtilde$ to update all the entries of the conditional probability table $\varphi_{t(i)} = \{\varphi_{t(i)}^l\}_{l=1}^L$. 

In the second stage (\S\ref{sec-anchored-remask}), the posterior estimate $\rvx_\varphi(\rvz_{t(i)})$ obtained from the first stage is used adaptively unmask anchor tokens in the sequence. 
Both the processes continue over $15$ total steps following a cosine mask schedule.
The fully unmasked sequence satisfies the stylistic constraints without requiring any task-specific retraining of the foundation model.

\subsubsection{High-Resolution Inference}
\label{sec-addn-exps-apsl}
To demonstrate the scalability of our method, we experiment with a higher-resolution setting. For a fair comparison on our $256 \times 256$ benchmark, we follow the upsampling protocol described in G2D2~\citep{g2d2}. Specifically, for both our standard ($512 \times 512$) and large-scale (APS-L, $1024 \times 1024$) models, we first upsample the benchmark images to the model's native resolution before applying the forward operator. For the APS-L configuration, this increases the number of visual tokens by a factor of four (from $1024$ to $4096$)\footnote{The visual tokenizer~\citep{lfq} used in MMaDA~\citep{mmada} uses 16$\times$ downscaling, generating $1024=32\times32$ tokens for an image of size $512\times512$.}. The Transformer-based MMaDA~\citep{mmada} model accommodates this by processing a longer sequence without architectural changes. After the high-resolution reconstruction is complete, we downsample the output back to the benchmark's native $256 \times 256$ resolution for evaluation. As demonstrated in our experiments (\S\ref{sec-exps-inv}, \S\ref{sec-exps-inv-gen}), this approach further improves performance, achieving substantial gains in both PSNR and LPIPS.

\subsubsection{Question Answering}
\label{sec-addn-exps-qa}

In standard discrete diffusion frameworks, the model predicts logits for every position in the sequence, from which tokens are sampled independently. 
For images, these tokens are in the latent space and hence projected via a decoder into pixel space. 
In that setting, the combination of the decoder and a measurement error term effectively acts as a proxy for a reward model, allowing our approach APS to provide gradient-like guidance.

In the context of diffusion language modeling, we replace the combination of decoder and measurement error with a scalar reward model (e.g., \texttt{Qwen3-0.6B}) that evaluates the quality of the generated text. 
The input to this reward model is discrete. 
The non-differentiable nature of the sampling step breaks the computation graph, preventing the direct backpropagation of the reward signal to the diffusion logits.

Our proposed method, APS, addresses this intractability. 
APS derives gradient-like guidance through quantized expectation, allowing us to ``tilt'' the logits of the base model in the direction of higher reward. 
This enables effective guidance through the discrete bottleneck, analogous to how gradients guide the generation process in discrete image domains.

We integrate our APS sampler into the inference pipeline of \texttt{Dream-7B-Instruct}
For all question answering experiments, we fix the total diffusion steps at $T=128$ and the maximum generation length at 128 tokens. 
We perform $M=3$ gradient optimization steps per diffusion timestep with a learning rate of $\eta=0.1$ to update the intermediate logits. 
During inference, we generate tokens using a sampling temperature of $0.2$ and nucleus sampling with $p=0.95$. 
Finally, the remasking temperature is set to $0.0$, ensuring our underlying masking schedule remains consistent with the baseline.

\subsection{Algorithm Details}
\label{sec-addn-exps-algo}

\begin{algorithm}[t]
    \caption{Test-Time Anchored Posterior Sampling (APS)}
    \label{alg:aps}
    \begin{algorithmic}[1]
        \STATE \textbf{Input:} Denoising steps $T$, measurement $\rvy$, denoiser $\rvx_{\theta}^{\mathrm{logits}}(\cdot)$, operator $\gA(\cdot)$, decoder $\gD(\cdot)$. 
        \STATE \textbf{Tunable parameters:} Optimization steps $M$, learning rate $\eta$, loss coefficients $\lambda_\mathrm{p}, \lambda_e$.
        \STATE \textbf{Output:} Reconstructed image $\xhat_{\text{final}}$.
        \STATE Initialize latent state $\rvz_1 \leftarrow \{\rvm\}^L$ (all masked)
        \FOR{$i=T$ \textbf{to} $1$}
            \STATE $\varphi \leftarrow \rvx_{\theta}^{\mathrm{logits}}(\rvz_{t(i)})$ \hfill $\triangleright$ \text{Quantized Expectation \S\ref{sec-quant-exp}}
            \FOR{$m=1$ \textbf{to} $M$}
                \STATE $\rvx_\varphi(\rvz_{t(i)}) = \mathrm{Softmax}(\varphi)$ 
                \STATE $\xbar \leftarrow \sum_{k=1}^K \rvc_k \cdot \rvx_\varphi(\rvz_{t(i)})[k]$ 
                \STATE $\rvx = \gQ_{\mathrm{lfq}}(\xbar)$  
                \STATE $\xtilde \leftarrow \bar{\rvx} + \big[\rvx - \bar{\rvx}\big]_{\mathrm{sg}}$ 
                \STATE $\xhat \leftarrow \gD(\xtilde)$ 
                \STATE $\mathcal{L} \leftarrow \mathcal{L}_{\mathrm{reconstruction}}(\gA(\xhat),\rvy) + \lambda_\mathrm{p} \mathcal{L}_{\mathrm{perceptual}}(\gA(\xhat),\rvy)$
                \STATE $\varphi \leftarrow~$Adam$(\varphi, \nabla_{\varphi} \mathcal{L}$, $\eta$)
            \ENDFOR
            \STATE $\rvx_\varphi^*(\rvz_{t(i)}) = \mathrm{Softmax}(\varphi)$ \hfill $\triangleright$ \text{Anchored Remasking \S\ref{sec-anchored-remask}}
            \STATE $\xbar^* \leftarrow \sum_{k=1}^K \rvc_k \cdot \rvx_\varphi^*(\rvz_{t(i)})[k]$ %
            \STATE $\rvx = \gQ_{\mathrm{lfq}}(\xbar^*)$
            \STATE $\kappa^l \leftarrow \langle (\rvx_\varphi^{*}(\rvz_{t(i)})^{l}, \rvx^l \rangle$ for $l=1,\dots,L$
            \STATE $\gP_{t(i)} \leftarrow \{l : \kappa^l \ge \tau_{t(i)}\}$
            \STATE $\rvz_{s(i)} \leftarrow \text{Update State}(\rvz_{t(i)}, \rvx, \gP_{t(i)})$
        \ENDFOR
        \STATE $\xhat_{\mathrm{final}} \leftarrow \gD(\rvz_0)$
        \STATE \textbf{return} $\xhat_{\mathrm{final}}$
    \end{algorithmic}
\end{algorithm}

Algorithm~\ref{alg:aps} details our APS procedure. The process begins with a fully masked latent space, $\rvz_1$, and iteratively refines the image over $T$ reverse diffusion steps. Each step features an inner optimization phase designed to align the model's predictions with the given measurement $\rvy$. To enable gradient-based optimization through the discrete quantization step which assigns each dimension of an embedding to its nearest value in $\{-1, 1\}$ we employ the straight-through estimator (STE) via the stop-gradient operator, $\text{sg}(\cdot)$.

This optimization is guided by a composite loss function. For reconstruction, we primarily use the Mean Absolute Error (MAE, or L1 loss), which we find produces perceptually superior results compared to the Mean Squared Error (MSE). While MSE corresponds to maximizing the Gaussian log-likelihood, MAE is typically more robust. This is supplemented with a VGG perceptual loss to enforce similarity in the feature domain, further improving visual quality. Finally, a small entropy term is included to regularize the posterior distribution. Once the optimization at a given step is complete, the Anchored Remasking strategy uses the model's confidence defined as the probability assigned to the chosen token for each position to selectively unmask tokens for the next iteration.

To determine the optimal weight for the perceptual loss, we conducted an ablation study on its coefficient, $\lambda_\mathrm{p}$. As shown in Table~\ref{tab:merged-sweep}, a value of $10^{-3}$ provides the optimal trade-off between reconstruction fidelity (PSNR) and perceptual quality (LPIPS) across our benchmarks.

\textbf{Effect of perceptual loss Coefficient.}
Table~\ref{tab:merged-sweep} reports the effect of varying the perceptual loss coefficient on ImageNet and FFHQ. 
Small weights ($10^{-5}$, $10^{-4}$) only marginally improve perceptual quality over the baseline, while larger weights ($10^{-2}$, $10^{-1}$) overly emphasize perceptual similarity at the cost of distortion and structure. 
A coefficient of $10^{-3}$ provides the best balance: on ImageNet, it reduces LPIPS from $0.416$ to $0.334$ while maintaining PSNR $23.61$ and SSIM $0.639$, and on FFHQ, it achieves the lowest LPIPS ($0.247$) with a slight drop in PSNR ($26.61$) and SSIM ($0.781$). 
We therefore adopt $10^{-3}$ as the default across datasets. 
Notably, our approach introduces only this single hyperparameter, whereas G2D2 relies on multiple carefully tuned schedules (e.g., four different coefficients) that must be re-optimized for each task. 
In contrast, we use the same setting for all inverse problems, underscoring the robustness and simplicity of our posterior sampler relative to prior discrete diffusion methods.

\begin{table*}[t]
\centering
\caption{\textbf{Hyperparameter sweeps on perceptual loss coefficient ($\lambda_\mathrm{p}$) on ImageNet and FFHQ.} 
Highlighted rows denote the chosen setting (1e-3), which offers a trade-off among perceptual quality (LPIPS), distortion (PSNR), and structure (SSIM).}
\label{tab:merged-sweep}
\begin{subtable}[t]{0.48\linewidth}
\centering
\caption{ImageNet}
\begin{tabular}{lccc}
\toprule
$\lambda_\mathrm{p}$ & LPIPS $\downarrow$ & PSNR $\uparrow$ & SSIM $\uparrow$ \\
\midrule
0.0   & 0.416 & 23.98 & 0.652 \\
1e-5  & 0.402 & 24.05 & \textbf{0.658} \\
1e-4  & 0.380 & \textbf{24.06} & 0.655 \\
\rowcolor{gray!10}
1e-3  & 0.334 & 23.61 & 0.639 \\
1e-2  & \textbf{0.325} & 22.06 & 0.566 \\
5e-2  & 0.328 & 21.44 & 0.543 \\
1e-1  & 0.327 & 21.29 & 0.539 \\
\bottomrule
\end{tabular}
\end{subtable}
\begin{subtable}[t]{0.5\linewidth}
\centering
\caption{FFHQ}
\begin{tabular}{lccc}
\toprule
$\lambda_\mathrm{p}$ & LPIPS $\downarrow$ & PSNR $\uparrow$ & SSIM $\uparrow$ \\
\midrule
0.0   & 0.311 & 27.32 & 0.801 \\
1e-5  & 0.301 & 27.40 & 0.803 \\
1e-4  & 0.268 & \textbf{27.67} & \textbf{0.812} \\
\rowcolor{gray!10}
1e-3  & \textbf{0.247} & 26.61 & 0.781 \\
5e-3  & 0.252 & 24.90 & 0.729 \\
1e-2  & 0.256 & 24.49 & 0.715 \\
1e-1  & 0.260 & 23.44 & 0.692 \\
\bottomrule
\end{tabular}
\end{subtable}
\end{table*}

\subsection{Ablation Study}
\label{sec-addn-exps-ablation}

To better understand the impact of our algorithmic innovations, we conduct a systematic ablation study on ImageNet SR (4$\times$), as shown in \Figref{fig:ablation} (top row). 
Our analysis focuses on three core design choices: (i) \emph{standard remasking}, which highlights the limitations of confidence-based token selection under the prior; (ii) \emph{quantized expectation}, which addresses sampling bias and improves measurement consistency; and (iii) \emph{anchored remasking}, which preserves informative tokens identified by optimization while suppressing spurious high-confidence background tokens. 
Together, these ablations disentangle the contributions of each component, providing both theoretical insight and qualitative evidence (Figure~\ref{fig:ablation}) into how APS achieves stable and measurement-consistent posterior sampling.

\begin{figure}[t]
    \centering
    \includegraphics[width=\linewidth]{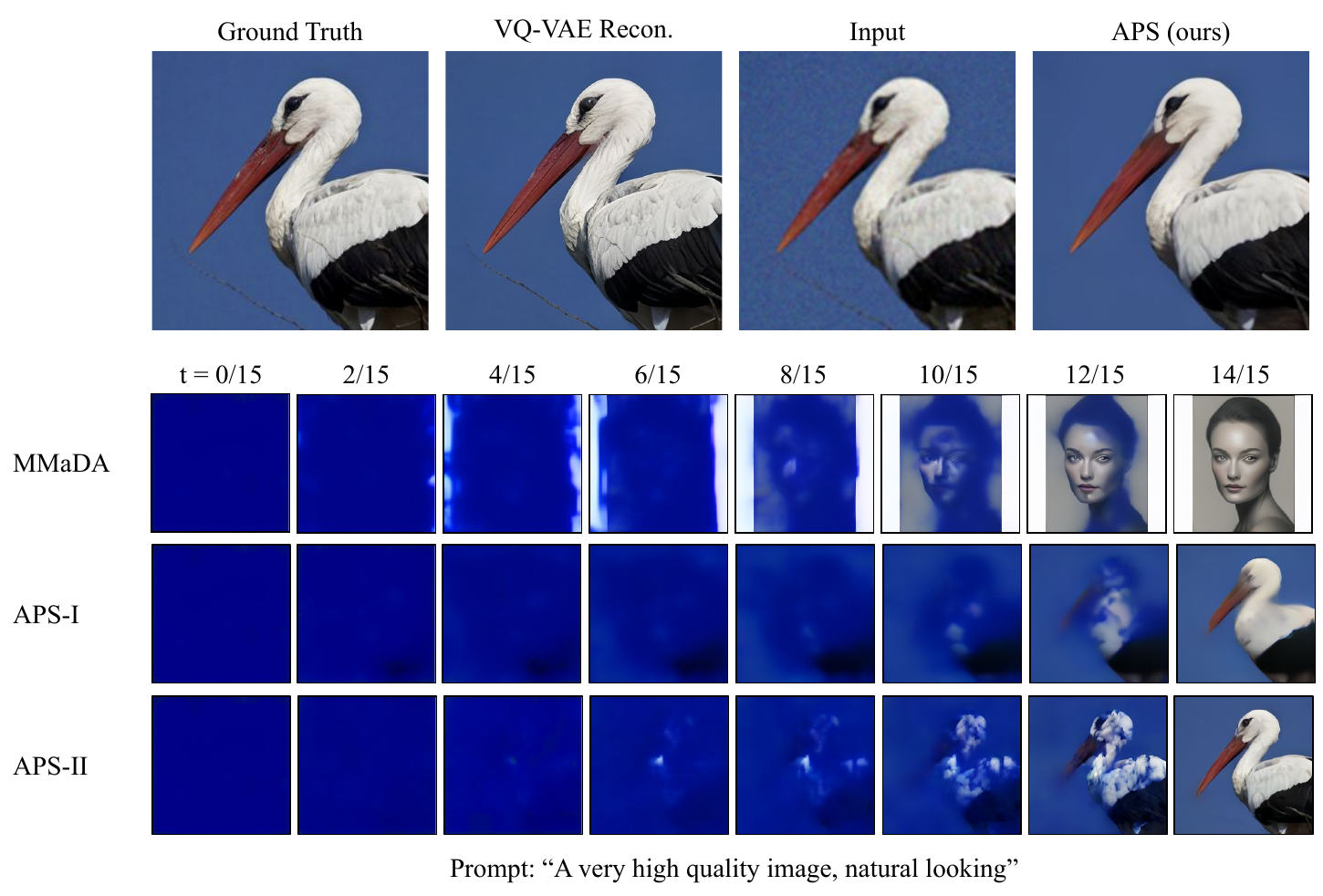}
    \caption{\textbf{Ablation study of design choices in APS.} 
    We study the posterior sampling problem with the input of the noisy image of a bird (top row, third image from left).
    Since MMaDA (second row) uses an unconditional prior (no measurement) as the backbone, the final image is inconsistent with the noisy input image (image of person instead of bird). Thus, this shows the effect of \emph{standard remasking}, where high-confidence background tokens are unmasked early. 
    The third row (APS-I) adds \emph{quantized expectation}, which mitigates sampling bias and improves consistency with the measurement, but still relies on prior-based confidence remasking. This results in blurry reconstructions.
    Finally, the fourth row (APS-II) combines \emph{quantized expectation} with \emph{anchored remasking}, preserving optimized anchor tokens while remasking uninformative background tokens. 
    This combination yields the most stable and measurement-consistent generations, as discussed in Appendix \ref{sec-addn-exps-ablation}. Indeed, we note that best case one could hope for is the direct VQ-VAE reconstruction (top row, second from left) of the Ground Truth image (top row, leftmost). We observe that APS (top row, rightmost) is comparable in quality to the VQ-VAE reconstruction.
    }
    \label{fig:ablation}
\end{figure}

\subsubsection{Effect of Standard Remasking}
\label{sec-addn-exp-std-remask}
Standard remasking in MaskGIT~\citep{maskgit} and MMaDA~\citep{mmada} proceeds as follows. 
At each iteration, the base denoiser produces a distribution over all tokens 
$\rvx_\theta(\rvz_{t(i)}) = \{\rvx^l_\theta(\rvz_{t(i)})\}_{l=1}^L$ 
given the current partially unmasked sequence $\rvz_t$. 
Sampled tokens $\rvx = \{\rvx^l\}_{l=1}^L$ are drawn independently at each position, and previously unmasked tokens are carried over to the next state $\rvz_s$. 
For masked positions, the confidence of each sampled token $\rvx^l$ is defined as
\[
\kappa_t^l = \langle \rvx^l_\theta(\rvz_{t(i)}), \rvx^l \rangle.
\]
Tokens to unmask are then selected based on an adaptive threshold schedule (e.g., cosine), 
\[
\gP_t = \{l : \kappa_t^l \geq \tau_t \},
\]
which favors unmasking the most confident tokens under the prior distribution $\rvx_\theta(\rvz_t)$. 

In language generation, this corresponds to unmasking frequent, low-information tokens (e.g., articles or conjunctions)~\citep{adlm}. 
Analogously in images, the model tends to unmask background regions first, since they dominate training statistics and are easier to predict. 
As a result, informative foreground tokens remain masked until late in the process, limiting semantic guidance and increasing conditional entropy.

Qualitatively, Figure~\ref{fig:ablation} (second row) illustrates this phenomenon. 
While the model confidently unmasks background tokens early, the salient object is revealed only much later, highlighting the limitation of standard remasking for posterior sampling.

\subsubsection{Effect of Quantized Expectation}
\label{sec-addn-exps-quant-exp}
Standard remasking suffers from two key issues: (i) sampling bias, and (ii) independent token-wise confidence. 
When sampling tokens directly from the unconditional distribution $\rvx_\theta(\rvz_t)$, the model may pick unrelated or spurious tokens, which—once unmasked—remain fixed in all future steps. 
This introduces inconsistency and often locks the model into poor generations.

To address this, we propose \emph{quantized expectation} (\S\ref{sec-quant-exp}). 
Instead of sampling, we tilt the unconditional distribution $\rvx_\theta(\rvz_t)$ towards the measurement likelihood, obtaining an approximate posterior $\rvx_\varphi(\rvz_t)$. 
We then optimize in the span of codebook embeddings by treating the tilted probabilities as coefficients in a linear combination and passing their expectation through the decoder using a straight-through estimator. 
The resulting embedding is then quantized back to the nearest valid token. 
This procedure implicitly maximizes the measurement likelihood, avoids sampling noise, and enables the discovery of tokens with zero prior probability mass but strong measurement consistency--leading to a better posterior sample. 

We treat such tokens as ``anchor tokens,'' since they minimize reconstruction error and provide critical guidance under the measurement operator. 
As shown in Figure~\ref{fig:ablation} (third row, APS-I), quantized expectation corrects sampling bias and yields reconstructions that remain consistent with observations throughout the denoising trajectory.

\subsubsection{Effect of Anchored Remasking}
\label{sec-addn-exps-anchored-remask}
A limitation of confidence-based remasking under the prior is that anchor tokens, obtained through our optimization procedure, may receive near-zero probability mass under $\rvx_\theta(\rvz_t)$. 
As a result, the model would discard these informative tokens in favor of background tokens, which the prior predicts with high confidence. 
This reintroduces the very bias we aim to avoid.

To address this, we compute token confidence using the posterior estimate $\rvx_\varphi(\rvz_t)$ rather than the unconditional prior. 
In this way, anchor tokens identified via quantized expectation are preserved, while low-likelihood background tokens are remasked. 
Qualitatively, this effect is evident at $t=6/15$ in Figure~\ref{fig:ablation} (fourth row, APS-II): unlike the standard prior-based strategy (second row), which prematurely unmasks background pixels, our approach commits to anchor tokens aligned with the bird’s body. 
This ensures that the background (blue sky in the measurement) is correctly down-weighted, as it is inconsistent with the prior white background generated by $\rvx_\theta$. 
Subsequent steps therefore refine the image conditioned on these anchor tokens, reducing conditional entropy and producing reconstructions that remain faithful to the measurements.

\begin{table}[!t]
\centering
\caption{\textbf{Effect of Optimization Steps (Super-Resolution 4$\times$ on FFHQ).}
Performance saturates beyond 100 optimization steps, confirming robustness and efficiency.}
\label{tab:opt-steps}
\setlength{\tabcolsep}{8pt}
\begin{tabular}{lccc}
\toprule
\textbf{\# Optimization Steps} ($M$) & \textbf{PSNR $\uparrow$} & \textbf{SSIM $\uparrow$} & \textbf{LPIPS $\downarrow$} \\
\midrule
0   & 9.49  & 0.1855 & 0.6747 \\
10  & 21.47 & 0.5541 & 0.4298 \\
50  & 24.80 & 0.7130 & 0.2730 \\
\rowcolor{gray!10}
100 & 25.71 & 0.7422 & 0.2468 \\
200 & 25.72 & 0.7389 & 0.2474 \\
\bottomrule
\end{tabular}
\end{table}

\subsubsection{Effect of Optimization Steps}
Table~\ref{tab:opt-steps} presents an ablation study on the number of inner optimization steps used during test-time anchoring in \textbf{Algorithm~\ref{alg:aps}}. 
We observe that performance improves rapidly up to 50 steps and saturates beyond 100 steps, confirming that our method converges efficiently without requiring additional iterations. 
Importantly, all optimization steps backpropagate only through the lightweight VQ-VAE decoder rather than the large 8B-parameter MMaDA backbone, keeping computational overhead minimal.

\subsection{Evaluation Under Identical Discrete Diffusion Prior}
\label{sec-exps-uniform}

We compare APS to G2D2 using the \emph{official} G2D2 codebase (\S\ref{sec-addn-exps-baselines}) on 100 images from the FFHQ validation set. Both methods use an identical compute budget: 100 reverse diffusion steps, each with 30 inner optimization steps. As shown in Table~\ref{tab:ffhq-vqdiffusion}, APS consistently improves over G2D2 across all metrics: PSNR improves from $25.46$ to $26.80$, SSIM from $0.717$ to $0.759$, and LPIPS decreases from $0.350$ to $0.310$. Since the generative prior and compute budget are identical, these performance gains arise purely from our algorithmic innovations: quantized expectation (\S\ref{sec-quant-exp}), anchored remasking (\S\ref{sec-anchored-remask}), and the use of a perceptual loss. This experiment demonstrates that APS is not only effective but also compatible with both \emph{mask-based} and \emph{uniform} discrete diffusion frameworks.

\begin{table}[!t]
\centering
\caption{\textbf{Quantitative results for super resolution (4$\times$) on FFHQ.}  APS consistently outperforms G2D2 across standard evaluation metrics: PSNR, SSIM and LPIPS.
All the methods have a reconstruction loss by default. Both G2D2 and APS use the same base discrete diffusion model VQ-Diffusion~\citep{vq-diffusion}, which has a VQ-VAE~\citep{vqvae} tokenizer as opposed to the LFQ~\citep{lfq} tokenizer used in MMaDA~\citep{mmada}. The runtime of G2D2 and APS is nearly identical (see Table~\ref{tab:sampling-efficiency}).
}
\label{tab:ffhq-vqdiffusion}
\setlength{\tabcolsep}{6pt}
\begin{tabular}{l c c c}
\toprule
Method & PSNR $\uparrow$ & SSIM $\uparrow$ &  LPIPS $\downarrow$ \\
\midrule
G2D2 (original)       & 25.33          & 0.706         & 0.353 \\
G2D2 (w/ perceptual 1e-3)       & {25.46}          & {0.720}         & {0.349} \\
G2D2 (w/ perceptual 1e-2)       & {25.34}          & {0.707}         & {0.357} \\
\rowcolor{orange!20}
APS (Ours) & \textbf{26.80} & \textbf{0.759} & \textbf{0.310} \\
\bottomrule
\end{tabular}
\end{table}



\subsection{Computational Complexity}
\label{sec-addn-exps-complexity}

\subsubsection{Tokenwise Expectation to Differentiable Surrogate}
\label{sec:tokenwise-exp-to-logits}
In this section, we provide a detailed derivation of our optimization objective \eqref{eq-qe} from Equation \eqref{eq-qe-per-time-step}, and analyze the associated computational complexity.

Define the measurement term in Equation \eqref{eq-qe-per-time-step} as
\begin{align}
\label{eq-addn-meas}
\gL_t^{\text{meas}}
~:=~
\E_{q(Z_s | \rvz_t,\rvx)}
\Big[\sum_{l=1}^{L} 
-\log q\big(\rvy \,\big|\, \rvx_\varphi(\rvz_t; Z_s^l)\big)\Big]
=
\sum_{l=1}^{L}
\E_{q(Z_s^l | \rvz_t^l,\rvx)}
\!\Big[-\log q\big(\rvy \,\big|\, \rvx_\varphi(\rvz_t; Z_s^l)\big)\Big],
\end{align}
where the last equality is due to $q(Z_s | \rvz_t,\rvx)$ factorizing across sites given $\rvz_t$ (the forward process is conditionally independent tokenwise).
Each inner expectation depends on the \emph{local} categorical $q(Z_s^l | \rvz_t^l,\rvx)$, which (under the mask process) has support on only two outcomes (e.g., $\rvx$  or $\rvm$). 
Evaluating $\gL_t^{\text{meas}}$ exactly would require propagating each choice through $\gD$ and $\gA$, which is expensive in practice. 

\noindent\textbf{Complexity of the exact marginalization.}
We emphasize that evaluating the measurement term exactly is exponential in the sequence length \(L\).
Concretely, the outer marginalization over the tokenwise mask choices \(Z_s\) involves at most \(2^L\) joint assignments (each site is either \(\rvm\) or set to \(\rvx\) under the mask process).
For each fixed assignment \(Z_s\), computing the inner expectation
\(\E_{X\sim \rvx_\varphi(\rvz_t),\,X^l\leftarrow Z_s^l}[-\log q(\rvy| X)]\)
requires aggregating the likelihood over all possible completions drawn accordingly.
If the decoder has a discrete codebook of size \(K\) per position, the number of such completions is upper-bounded by \(K^{L}\) (or more tightly \(K^{L-1}\) since $l$-the position is fixed with $Z_s^l$), so a worst-case cost for the inner expectation is \(O(K^{L})\).
Combining these two sources of combinatorial explosion gives a conservative worst-case cost of \(O(2^L\cdot K^{L}) = O\big((2K)^L\big)\), i.e., exponential in \(L\).
Thus, while one can write down the exact marginalization formally, it is computationally intractable for realistic sequence lengths and vocabulary sizes.

\noindent\textbf{Complexity of the proposed approximation.}
We adopt a plug-in approximation that replaces the tokenwise expectation by a single differentiable surrogate. 
Let $h(\rvx):=-\log q(\rvy | \rvx)$ and $h(\rvx_\varphi(\rvz_t; \rvz_s^l))$ be defined as in \S\ref{sec-aps}, which we recall here for clarity:
$h(\rvx_\varphi(\rvz_t;\rvz_s^l))=-\log q(\rvy | \rvx_\varphi(\rvz_t;\rvz_s^l)) = -\E_{X \sim \rvx_\varphi(\rvz_t), X^l \leftarrow \rvz_s^l}[\, \log q(\rvy | X) \,]$.
Then, a first-order Taylor's expansion of each term inside the summation of \eqref{eq-addn-meas} around $\rvu \coloneqq \mathbb{E} _ {Z_s^l \sim q(\cdot | \rvz_t^l, \rvx)} \mathbb{E} _ {X \sim \rvx _ {\varphi}(\rvz_t), X^l \leftarrow Z_s^l}\Big[ X \Big]$ yields:
\begin{align*}
\mathbb{E}_{Z_s^l \sim q(\cdot | \rvz_t^l, \rvx)} \Big[ h \big( \rvx _ {\varphi} (\rvz_t; Z_s^l)  \big)  \Big]
& = \mathbb{E}_{Z_s^l \sim q(\cdot | \rvz_t^l, \rvx)} \Big[  -\log q\big( \rvy| \rvx _ {\varphi}(\rvz_t; Z_s^l) \big) \Big] 
 = \mathbb{E}_{Z_s^l \sim q(\cdot | \rvz_t^l, \rvx)} \mathbb{E} _ {X \sim \rvx _ {\varphi}(\rvz_t), X^l \leftarrow Z_s^l} \Big[ -\log q\big( \rvy| X \big) \Big] \\
& \approx  \mathbb{E}_{Z_s^l \sim q(\cdot | \rvz_t^l, \rvx)} \mathbb{E} _ {X \sim \rvx _ {\varphi}(\rvz_t), X^l \leftarrow Z_s^l} \Big [ -\log q\big( \rvy| \rvu \big) - \langle \nabla \log q(\rvy|\rvu), X - \rvu \rangle \Big] \\
& \hspace{0.22in} \text{(Ignore higher order terms in the Taylor's series expansion.)}\\
& = -\log q\big( \rvy| \rvu \big) - 
\langle \nabla \log q(\rvy|\rvu),
~\mathbb{E}_{Z_s^l \sim q(\cdot | \rvz_t^l, \rvx)} \mathbb{E}_{X \sim \rvx _ {\varphi}(\rvz_t), X^l \leftarrow Z_s^l} \Big [ X \Big] - \rvu \rangle \\
& = -\log q\big( \rvy| \rvu\big), \quad \text{(The second term cancels by the definition of $\rvu$.)} \\
& = h \Big(
\mathbb{E} _ {Z_s^l \sim q(\cdot | \rvz_t^l, \rvx)}
\mathbb{E} _ {X \sim \rvx _ {\varphi}(\rvz_t), X^l \leftarrow Z_s^l }\Big[ X \Big] \Big).
\end{align*}
With this approximation, we avoid the computation of $\log q\big( \rvy| X \big)$ for every sample $X$.
This would otherwise require a function call through $\gA\!\circ\!\gD$ per sample $X$, which is expensive.
In contrast, the double expectation inside $h(\cdot)$ scales linearly in sequence length ($L$) and vocabulary size ($K$) because for each position of the sequence $X=(X^1,\ldots,X^L)$, the expectation can be independently computed using $\rvx^l_{\varphi}(\rvz_t)$ that costs $\gO(K)$, totaling up to $\gO(KL)$.
With the outer sum over $L$ tokens in \eqref{eq-addn-meas}, the total complexity of $\gL_t^{\text{meas}}$ becomes $\gO(KL^2)$.

Next, we discuss how to reduce the complexity from $\gO(KL^2)$ to $\gO(KL)$. Note that $\rvx_\varphi(\rvz_t; Z_s^l)$ differs from $\rvx_\varphi(\rvz_t)$ only at position $l$. 
Assuming local smoothness of $\gA\!\circ\!\gD$, we approximate $q(\cdot | \rvz_t^l,\rvx)$ by $\rvx^l_\varphi(\rvz_t)$ resulting in the \emph{quantized expectation}: for each site, form the expected codebook embedding
$\bar{\rvx}^l=\sum_{k}\rvc_k\,\rvx_\varphi^l(\rvz_t)[k]$, quantize via LFQ as $\rvx^l=\gQ_{\text{lfq}}(\bar{\rvx}^l)$, and apply the straight-through estimator to obtain a single surrogate sequence 
$\tilde{\rvx}_\varphi(\rvz_t)=\bar{\rvx}+[\rvx-\bar{\rvx}]_{\text{sg}}$. 
Substituting the same surrogate for all $l$ gives the following\footnote{In practice, the factor $L$ can be absorbed into the likelihood weight or learning rate. If only positions with $\rvz_t^l=\rvm$ contribute, one may replace $L$ by $|\{l:\rvz_t^l=\rvm\}|$. We explicitly show the absorbed constant here for clarity.}:
\[
\gL_t^{\text{meas}}
~\approx~
\sum_{l=1}^{L} h\!\big(\tilde{\rvx}_\varphi(\rvz_t)\big)
~=~
L \cdot \big(-\log q(\rvy | \tilde{\rvx}_\varphi(\rvz_t))\big).
\]

\begin{table}[t]
\centering
\caption{\textbf{Quantitative results on sampling efficiency of continuous and discrete samplers.}
PDM/LDM denote pixel-/latent-space continuous diffusion models, respectively; VQ-Diffusion and MMaDA are discrete diffusion models.
Rows shaded \textcolor{gray!80}{gray} report runtimes on a single NVIDIA~A6000 GPU copied from G2D2~\citep{g2d2}; rows shaded \textcolor{orange!80}{orange} are measured by us on a single NVIDIA A100 GPU.
APS matches or improves the efficiency of prior discrete samplers while achieving significantly better results (\S\ref{sec-exps}).
With only 15 denoising steps, APS achieves comparable or better reconstruction quality at up to 
$6\times$ lower runtime than latent diffusion baselines that require 1000 steps.
}
\label{tab:sampling-efficiency}
\setlength{\tabcolsep}{6pt}
\begin{tabular}{l l c c c c}
\toprule
Method & Model & Resolution & GPU (GiB) & Time (s) & \#Steps \\
\midrule
\rowcolor{gray!10}
DPS~\citep{dps} & PDM & 256 & 10.7 & 277 & 1000 \\
\rowcolor{gray!10}
DDRM~\citep{ddrm} & PDM & 256 & 5.8  & 4   & 20 \\
\rowcolor{gray!10}
PSLD~\citep{psld} & LDM & 512 & 20.9 & 738 & 1000 \\
\rowcolor{gray!10}
ReSample~\citep{resample} & LDM & 256 & 7.1  & 555 & 500 \\
\rowcolor{gray!10}
G2D2~\citep{g2d2} & VQ-Diffusion & 256 & 4.7  & 194 & 100 \\
\midrule
\rowcolor{orange!20}
DPS~\citep{dps}  & PDM & 256 & 10.7 & 180 & 1000 \\
\rowcolor{orange!20}
LDPS~\citep{psld} & LDM & 256 & 15.4 & 190 & 1000 \\
\rowcolor{orange!20}
PSLD~\citep{psld} & LDM & 256 & 15.5 & 194 & 1000 \\
\rowcolor{orange!20}
G2D2~\citep{g2d2} & VQ-Diffusion & 256 & 4.7 & 107 & 100 \\
\rowcolor{orange!20}
\textbf{APS (ours)} & VQ-Diffusion & 256 & 4.6 & \textbf{106} & 100 \\
\rowcolor{orange!20}
\textbf{APS (ours)} & MMaDA & 256 & 19.2 & \textbf{55} & 15 \\
\midrule
\rowcolor{orange!20}
PSLD~\citep{psld} & LDM & 512 & 20.9 & 720 & 1000 \\
\rowcolor{orange!20}
\textbf{APS (ours)} & MMaDA & 512 & 26.2 & \textbf{121} & 15 \\
\midrule
\rowcolor{orange!20}
\textbf{APS-L (ours)} & MMaDA & 1024 & 51.8 & \textbf{484} & 15 \\
\bottomrule
\end{tabular}
\end{table}

\subsubsection{Runtime Analysis of Anchored Posterior Sampling}
Table~\ref{tab:sampling-efficiency} highlights the \emph{sampling efficiency} of our anchored posterior sampler (APS) relative to both continuous-diffusion baselines and the prior discrete diffusion sampler G2D2.

\textbf{Against continuous (pixel/latent) diffusion.}
Pixel-space samplers (DPS, DDRM) either require very long Markov chains (e.g., DPS: 1000 steps, 277\,s) or sacrifice quality when shortened; latent-space samplers (PSLD, ReSample) still need 500–1000 steps and hundreds of seconds per image at $256\times 256$–$512\times 512$ resolutions (e.g., PSLD: 738\,s at $512\times 512$).  
In contrast, APS runs with only 15 reverse steps on the MMaDA backbone: 55\,s at $256\times 256$ and 121\,s at $512\times 512$ on a single A100, which is \emph{66$\times$ shorter chain} for comparable or better perceptual quality.  
At $512\times 512$, APS matches the quality of PSLD as shown in Table~\ref{tab:sr-deblur-ffhq-imagenet} while being $\sim\!6\times$ faster (PSLD: $\sim$720–740\,s vs. APS: 121\,s).  
When scaling the sequence length to $L{=}4096$ tokens (corresponding resolution $1024\times 1024$), APS-L completes in 484\,s and becomes more accurate: ImageNet Gaussian deblur improves by 22.7\% LPIPS and 9.6\% PSNR over PSLD with $\sim$1.5$\times$ less time.

\textbf{Against prior discrete diffusion (G2D2).}
Under the same budget, APS matches or improves G2D2’s runtime while yielding better reconstructions:
at $256\times 256$ with VQ-Diffusion both methods use 100 steps, but APS yields higher quality as given in Table~\ref{tab:ffhq-vqdiffusion}.
Moreover, our 1024$\times$1024 configuration (APS-L) demonstrates better test-time scaling behavior compared continuous diffusion: we keep 15 steps and still obtain substantial quality gains at reasonable cost (484\,s).

Importantly, continuous methods struggle to match this performance without prohibitive runtimes. PSLD~\citep{psld} already takes nearly 12 minutes to process a single $512\times 512$ image and more complex methods such as P2L~\citep{p2l} take around 30 minutes for the same resolution. Therefore, training-free posterior sampling using continuous diffusion at very high resolutions such as $1024\times 1024$ becomes computationally prohibitive.

\begin{table}[!t]
\centering
\caption{\textbf{Noise robustness analysis for 256$\times$256 Super-Resolution (4x) on FFHQ.}
APS consistently outperforms G2D2 across varying noise levels 
($\sigma \in \{0.01, 0.02, 0.03, 0.04, 0.05\}$) in PSNR, SSIM, and LPIPS. 
The performance gain of APS remains consistent across different noise levels.}
\label{tab:noise-robustness}
\setlength{\tabcolsep}{8pt}
\begin{tabular}{c l c c c}
\toprule
\textbf{Measurement Noise ($\sigma$)} & \textbf{Method} & \textbf{PSNR $\uparrow$} & \textbf{SSIM $\uparrow$} & \textbf{LPIPS $\downarrow$} \\
\midrule
\multirow{2}{*}{0.01} & G2D2 & 25.98 & 0.739 & 0.328 \\
 & APS & \textbf{27.52} & \textbf{0.784} & \textbf{0.268} \\
\midrule
\multirow{2}{*}{0.02} & G2D2 & 25.90 & 0.738 & 0.333 \\
 & APS & \textbf{27.42} & \textbf{0.781} & \textbf{0.273} \\
\midrule
\multirow{2}{*}{0.03} & G2D2 & 25.88 & 0.733 & 0.338 \\
 & APS & \textbf{27.31} & \textbf{0.774} & \textbf{0.281} \\
\midrule
\multirow{2}{*}{0.04} & G2D2 & 25.64 & 0.726 & 0.341 \\
 & APS & \textbf{27.04} & \textbf{0.764} & \textbf{0.290} \\
\midrule
\multirow{2}{*}{0.05} & G2D2 & 25.46 & 0.717 & 0.350 \\
 & APS & \textbf{26.80} & \textbf{0.759} & \textbf{0.310} \\
\bottomrule
\end{tabular}
\end{table}

\subsection{Analysis of Robustness to Measurement Noise}
Table~\ref{tab:noise-robustness} evaluates the noise robustness of APS against G2D2 under Gaussian noise with varying standard deviations. 
APS consistently improves PSNR and SSIM while reducing LPIPS across all noise levels, indicating superior visual fidelity and structural consistency. 
The performance gap remains stable as $\sigma$ increases, showing that APS maintains high reconstruction quality even under strong measurement corruption. 
These results, along with experiments spanning different tokenizers (LFQ, VQ-VAE), datasets (FFHQ, ImageNet, StyleAligned), and tasks (inpainting, deblurring, super-resolution), demonstrate the robustness and versatility of our approach.

\subsection{Compared Baselines}
\label{sec-addn-exps-baselines}

We compare our method against state-of-the-art posterior samplers using pixel-/latent-space continuous and discrete diffusion models. Each baseline is evaluated under the same data as ours. We follow the experimental setup from G2D2 and reuse the baseline implementations to ensure a fair comparison. To address the resolution mismatch between our model and the benchmark datasets, we adopt the protocol from G2D2~\citep{g2d2}. The benchmark images are first upsampled to match the input resolution of our base model MMaDA~\citep{mmada}. The forward corruption operator and our posterior sampling method are then applied in this high-resolution space. Finally, the resulting output is downsampled to the original $256 \times 256$ resolution for a fair evaluation. A brief description of each baseline and links to available source code are provided below:

\begin{itemize}
    \item \textbf{DPS}~\citep{dps}: A \textit{continuous} diffusion-based method operating in pixel space that solves noisy inverse problems by employing a one-step gradient update in the pixel domain.  
    Source: \url{https://github.com/DPS2022/diffusion-posterior-sampling}

    \item \textbf{DDRM}~\citep{ddrm}: A \textit{continuous} diffusion-based method in pixel space, evaluated using the same base models as DPS.  
    Source: \url{https://github.com/bahjat-kawar/ddrm}

    \item \textbf{DiffPIR}~\citep{diffpir}: A pixel-space \textit{continuous} diffusion-based method for plug-and-play image restoration.  
    Source: \url{https://github.com/yuanzhi-zhu/DiffPIR}

    \item \textbf{DAPS}~\citep{daps}: A \textit{continuous} diffusion-based method that employs a decoupled noise annealing strategy to solve inverse problems.  
    Source: \url{https://github.com/zhangbingliang2019/DAPS}

    \item \textbf{PSLD}~\citep{psld}: A latent-space \textit{continuous} diffusion method that solves inverse problems by performing a one-step gradient update in the latent space and optimizing towards the fixed point of a VAE.  
    Source: \url{https://github.com/LituRout/PSLD}

    \item \textbf{ReSample}~\citep{resample}: A \textit{continuous} latent-space diffusion method that enforces a hard data-consistency constraint during sampling.  
    Source: \url{https://github.com/soominkwon/resample}

    \item \textbf{G2D2}~\citep{g2d2}: A \textit{discrete} diffusion posterior sampler that uses a star-shaped noising process and Gumbel-Softmax continuous relaxation to enable gradient guidance in discrete space. 
    We follow the exact implementation and hyperparameters provided in the original paper. Specifically, we use 100 reverse diffusion steps and 30 optimization steps per reverse step. The learning rate and the coefficient for the KL-divergence loss are scheduled logarithmically, as proposed.  
    Source: \url{https://github.com/sony/g2d2}

    \item \textbf{SGDD}~\citep{sgdd}: A \textit{discrete} diffusion posterior sampler that uses a split Gibbs sampler, reweights probabilities by Hamming distance, and employs rejection sampling via Metropolis–Hastings.  
    Source: \url{https://github.com/chuwd19/Split-Gibbs-Discrete-Diffusion-Posterior-Sampling}
\end{itemize}

\subsection{Benchmarks \& Metrics}
\label{sec-addn-exps-benchmarks}
The APS method is evaluated on standard inverse problem benchmarks and is also shown to generalize to more complex tasks, including non-linear inverse problems and training-free stylization.
The evaluation uses two main datasets to cover diverse image types and resolutions, with performance measured using Learned Perceptual Image Patch Similarity (LPIPS) ($\downarrow$: lower the better), Peak Signal-to-Noise Ratio (PSNR) ($\uparrow$: higher the better), and Structural Similarity Index (SSIM) ($\uparrow$: higher the better).

\noindent\textbf{FFHQ (Flickr-Faces-HQ) \citep{ffhq}:}
\begin{itemize}
    \item \textbf{Dataset Focus}: High-resolution face images.
    \item \textbf{Evaluation Set}: To maintain a fair comparison with prior work, specifically SGDD~\citep{sgdd}, G2D2~\citep{g2d2} and DAPS~\citep{daps}, our APS algorithm is evaluated on \textbf{100 images} (indices $0, 1, \dots, 99$) from the FFHQ validation set.
    \item \textbf{Tasks}: The evaluation includes (1) linear inverse problems: SR ($4\times$), Gaussian Deblurring, random inpainting, and motion deblur and (2) nonlinear inverse problems: high dynamic range (HDR) and nonlinear blur.  
\end{itemize}
\noindent\textbf{ImageNet \citep{imagenet}:}
\begin{itemize}
    \item \textbf{Dataset Focus}: Diverse natural images.
    \item \textbf{Evaluation Set}: Following the experimental setup  by G2D2, a subset of \textbf{100 images} is selected from the validation set, ensuring diverse class representation by sampling from classes with indices $0, 10, \dots, 990$. The specific image list is publicly available in the following text file: \texttt{imagenet\_val\_1k.txt} $\rightarrow$ \url{https://github.com/XingangPan/deep-generative-prior/}.
    \item \textbf{Tasks}: The evaluation includes the same  linear and nonlinear inverse problems as in FFHQ. 
\end{itemize}

\noindent\textbf{Licenses and Usage.} Both FFHQ and ImageNet datasets used in this work are publicly available and licensed for research use. FFHQ dataset, including its documentation and metadata, is distributed by NVIDIA Corporation under the Creative Commons Attribution-NonCommercial-ShareAlike 4.0 International (CC BY-NC-SA 4.0) license. ImageNet data is provided free of charge to researchers, for non-commercial research and educational purposes.

\textbf{Question Answering (\texttt{Qwen2.5-3B})~\citep{qwen2p5}:}
For question answering, we use the \texttt{Qwen2.5-3B} model to conduct a targeted evaluation on open-ended generation without additional training.
We curate a benchmark of 20 prompts (listed in Table~\ref{tab:qa_prompts}) designed to test various capabilities, including creative writing, list creation, and constrained generation (e.g., specific sentence counts).
Evaluation is performed using a dual strategy: \texttt{Qwen3-0.6B} serves as a reward model to guide diffusion inference, while \texttt{Qwen3-8B} acts as an independent oracle judge to assess instruction following capability and semantic coherence.

\begin{table*}[t]
    \centering
    \caption{\textbf{Question Answering Benchmark Prompts.} The evaluation set consists of 20 diverse instructions, including creative writing, list generation, and constrained formatting tasks created using \texttt{Qwen2.5-3B}.}
    \label{tab:qa_prompts}
    \resizebox{\textwidth}{!}{%
    \begin{tabular}{@{}rp{0.46\linewidth}rp{0.46\linewidth}@{}}
        \toprule
        \textbf{ID} & \textbf{Prompt} & \textbf{ID} & \textbf{Prompt} \\
        \midrule
        1 & Write a short paragraph describing your favorite food. & 
        11 & Enumerate three favorite hobbies you have incorporated into your weekend routine. \\
        \addlinespace[0.4em]
        2 & Create a list of three things you love most about your weekend. & 
        12 & Write one sentence describing what you did last Saturday. \\
        \addlinespace[0.4em]
        3 & Write a two-sentence story starting with `Once upon a time, I found a mysterious letter in my mailbox.' & 
        13 & Create a sentence about your day, detailing what you did at home. \\
        \addlinespace[0.4em]
        4 & Provide reasons why you like singing. & 
        14 & Write about one day in your life, focusing on what you watched on TV. \\
        \addlinespace[0.4em]
        5 & Write about your pet's daily routine. & 
        15 & Create a list of three things you like about your city. \\
        \addlinespace[0.4em]
        6 & Explain in two sentences why spending time outdoors is important. & 
        16 & Describe your favorite book in one sentence. \\
        \addlinespace[0.4em]
        7 & Create a list of three things you did last Saturday. & 
        17 & Write a short summary (5 sentences) of a movie you recently watched. \\
        \addlinespace[0.4em]
        8 & Write down your dream for your dream job, describing what you see in it. & 
        18 & Describe one thing you did today that made you happy. \\
        \addlinespace[0.4em]
        9 & Create a simple diary entry about your best day at school. & 
        19 & Write (4 sentences) why home cooking is important to you. \\
        \addlinespace[0.4em]
        10 & Write a paragraph that explains why studying is beneficial. & 
        20 & Create a paragraph (5 sentences) about your favorite day in the park. \\
        \bottomrule
    \end{tabular}
    }
\end{table*}


\subsection{Additional Results}
\label{sec-addn-res}
\subsubsection{General Inverse Problems}
\label{sec-addn-exp-inv}
Figure~\ref{fig:imagenet_sr4x_sup} illustrates super resolution ($4\times$) results  on ImageNet \citep{imagenet}. Competing methods—DPS \citep{dps}, DDRM \citep{ddrm}, PSLD \citep{psld}, and ReSample \citep{resample}—recover coarse structures but often yield blurry textures or color shifts, while G2D2 \citep{g2d2} sharpens details at the cost of noticeable artifacts. In contrast, APS produces sharper and more natural reconstructions across both object and animal categories, closely adhering to the ground truth.

\begin{figure}[t]
    \centering
    \includegraphics[width=\linewidth]{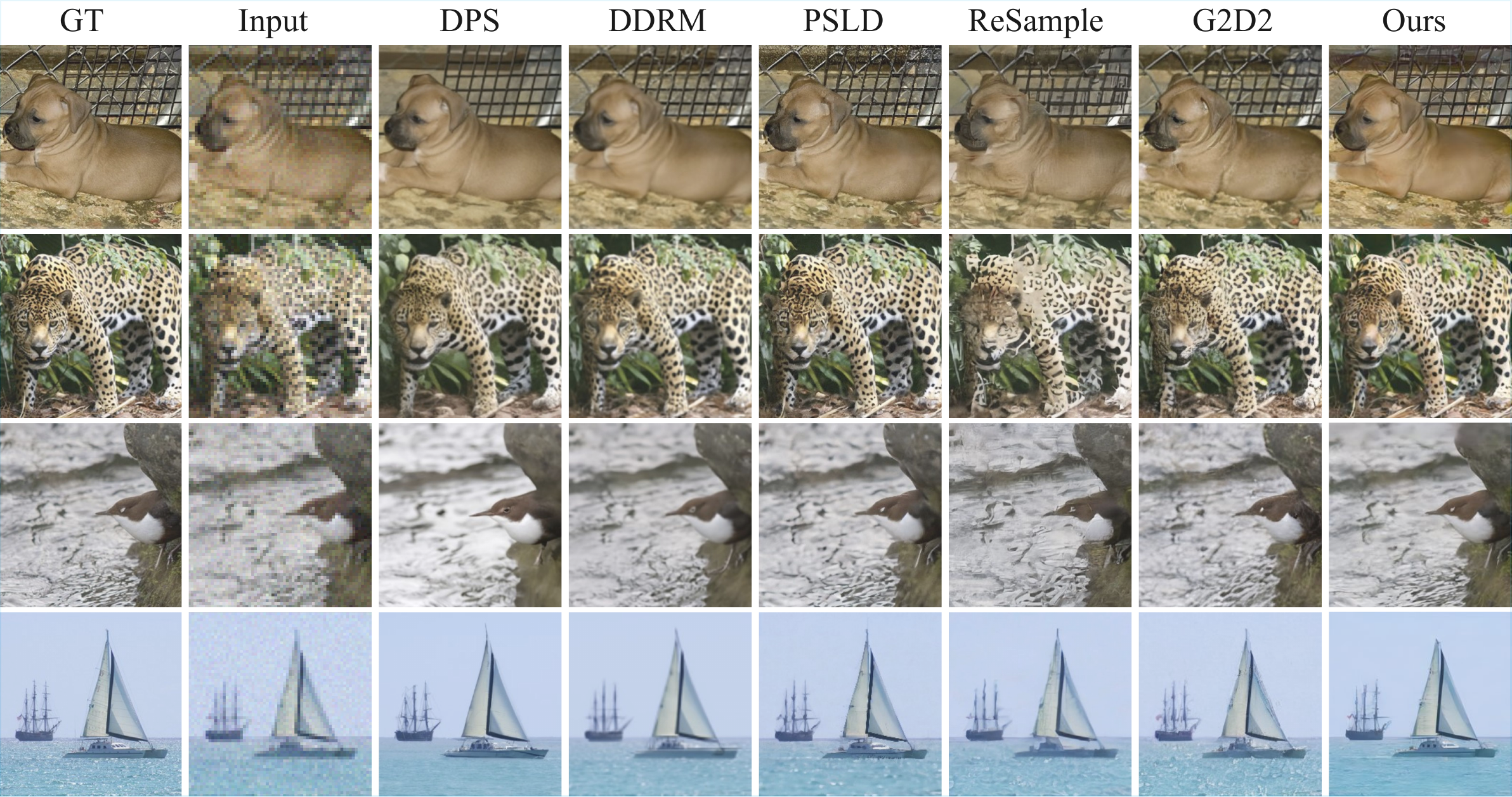}
    \caption{\textbf{Additional qualitative results for super resolution ($4\times$) on ImageNet.} 
    Compared to continuous baselines (DPS, DDRM, PSLD, ReSample) and the discrete baseline G2D2, APS produces sharper details and more faithful reconstructions across diverse examples. For instance, in the second row, the leopard’s eyes are reconstructed with fine detail, noticeably better than the baselines.}
    \label{fig:imagenet_sr4x_sup}
\end{figure}

Similarly, Figure~\ref{fig:imagenet_gb_sup} shows results for Gaussian deblurring. Continuous methods again capture overall structure but leave residual blur or noise, and G2D2 \citep{g2d2} partially enhances details yet struggles with fine textures. APS delivers cleaner and more faithful reconstructions, effectively balancing sharpness and natural appearance across diverse scenes.

\begin{figure}[t]
    \centering
    \includegraphics[width=\linewidth]{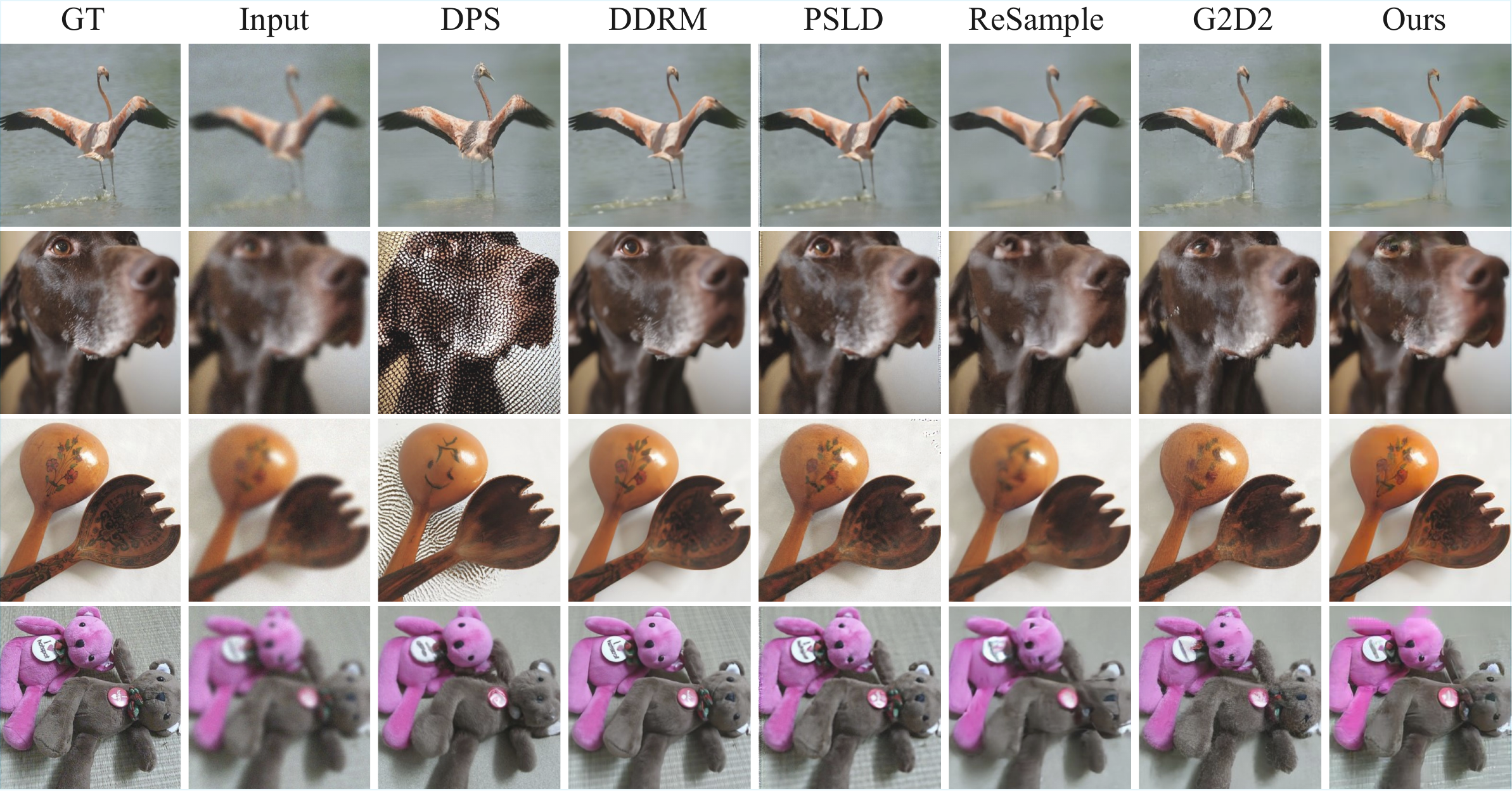}
    \caption{\textbf{Additional qualitative results for Gaussian deblurring on ImageNet.} 
    Compared to continuous baselines (DPS, DDRM, PSLD, ReSample) and the discrete baseline G2D2, our APS sampler achieves sharper textures, less artifacts, and more faithful reconstructions across diverse examples. For instance, in the third row, both the floral pattern on the outside of the wooden spoon and the artistic pattern inside it are accurately preserved by our method, whereas most baselines either miss or misrepresent these details.
}
    \label{fig:imagenet_gb_sup}
\end{figure}


Figure~\ref{fig:ffhq_sr4x_sup} compares APS against continuous (DPS \citep{dps}, DDRM \citep{ddrm}, PSLD \citep{psld}, ReSample \citep{resample}) and discrete (G2D2 \citep{g2d2}) approaches on FFHQ super resolution. Continuous methods capture overall facial structure but tend to oversmooth, leaving blurred or distorted skin textures, while ReSample introduces strong artifacts. G2D2 sharpens details but produces unnatural appearances. APS, by contrast, reconstructs sharper features with natural skin tones and clean edges, yielding perceptually faithful faces across diverse examples and demonstrating clear advantages for high-resolution face restoration.

\begin{figure}[t]
    \centering
    \includegraphics[width=\linewidth]{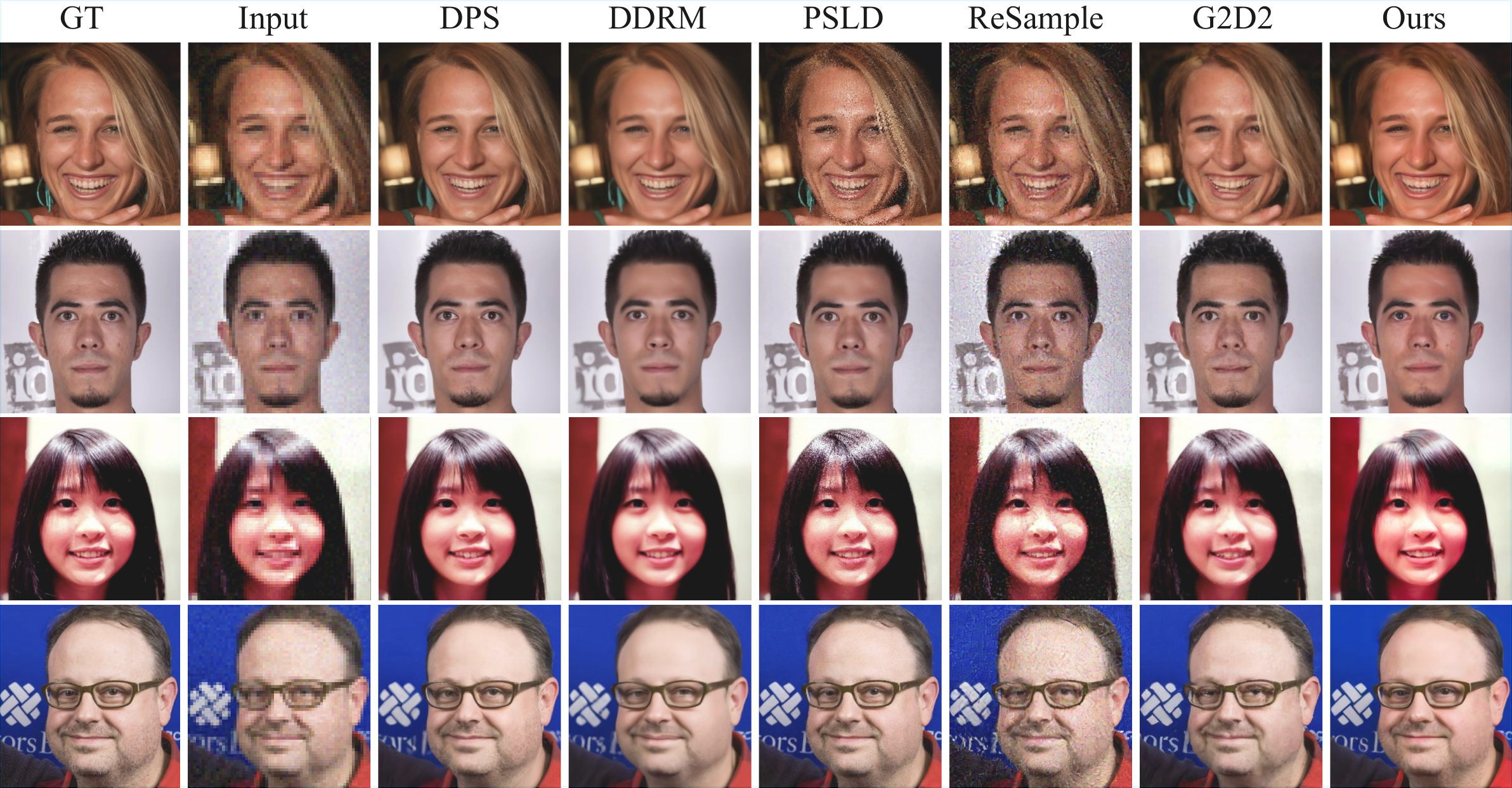}
    \caption{\textbf{Additional qualitative results for super resolution ($4\times$) on FFHQ.} 
    Continuous baselines (DPS, DDRM, PSLD, ReSample) generate plausible but oversmoothed faces, while the discrete baseline G2D2 often introduces artifacts. In contrast, our APS algorithm reconstructs sharper, more natural faces that closely align with the ground truth. For example, in the second row, our method successfully recovers the small mole on the person’s left cheek, a detail overlooked by the baselines.}
    \label{fig:ffhq_sr4x_sup}
\end{figure}

Figure~\ref{fig:ffhq_gb_sup} presents Gaussian deblurring on FFHQ \citep{ffhq}. DPS \citep{dps} and DDRM \citep{ddrm} again oversmooth, suppressing fine facial detail; PSLD \citep{psld} and ReSample \citep{resample} introduce ringing and plastic-like skin; and G2D2 \citep{g2d2} struggles to remove noise from the noisy measurements (Input), creating halos along edges. APS recovers crisp structures such as hair, eyeglass frames, and lip contours while preserving natural highlights and avoiding artifacts, producing reconstructions that are perceptually closer to the ground truth and aligned with our quantitative improvements.

\begin{figure}[t]
    \centering
    \includegraphics[width=\linewidth]{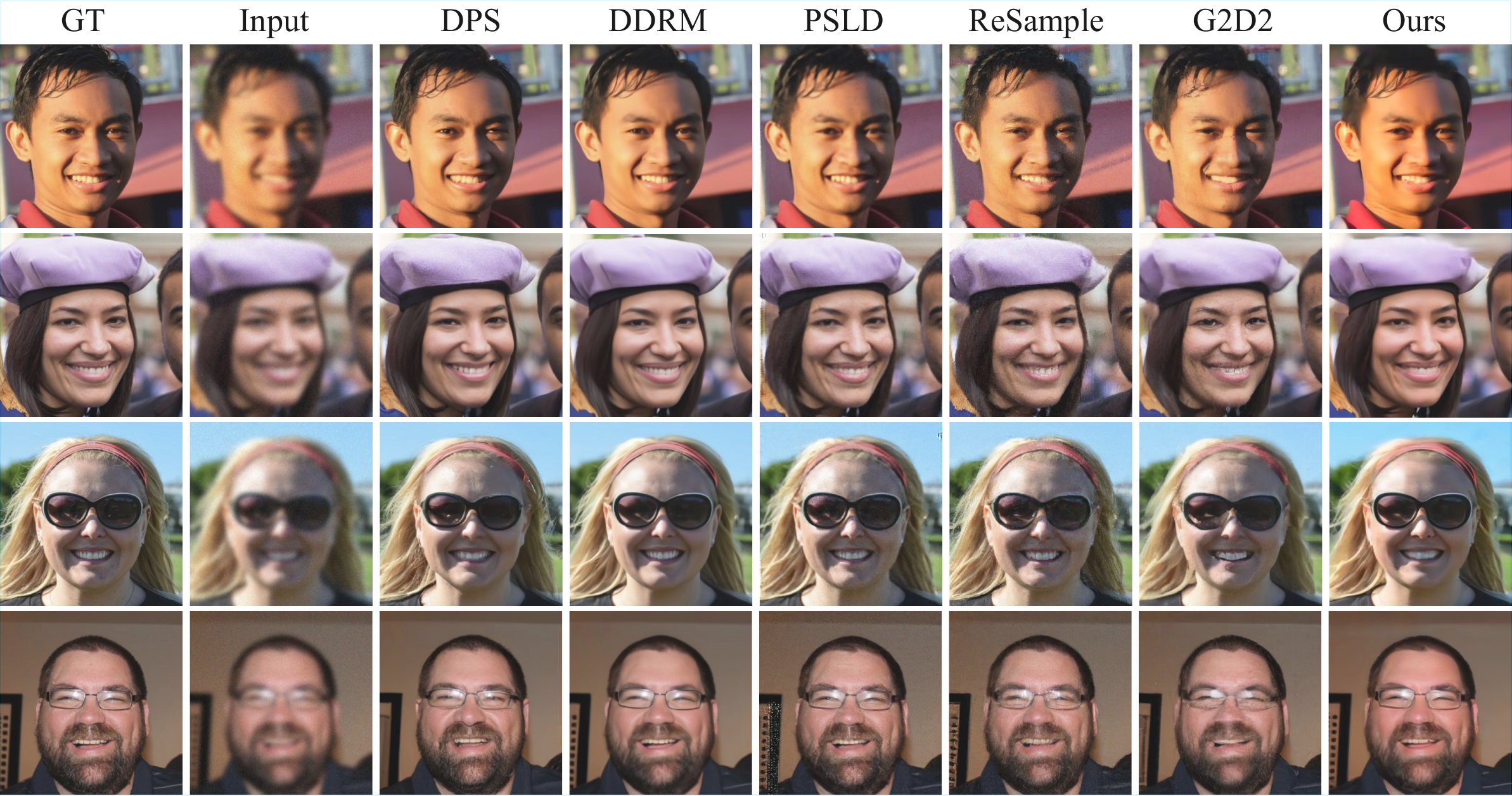}
    \caption{\textbf{Additional qualitative results for Gaussian deblurring on FFHQ.} 
    Our proposed approach recovers sharper facial details and edges with fewer artifacts (e.g., reduced ringing and texture distortions), leading to more natural reconstructions. For instance, in the first row, our method accurately reconstructs the hair strands and their shadow on the face, whereas the prior baselines fail to capture these details as precisely.
}
    \label{fig:ffhq_gb_sup}
\end{figure}

Figure~\ref{fig:ffhq_inverse_appendix} shows additional qualitative results on complex linear and nonlinear inverse problems on FHHQ dataset \citep{ffhq},  showcasing the performance of both APS and APS-L.

\begin{figure}
  \centering
  \includegraphics[width=\linewidth, trim={0cm 0cm 0cm 0cm}]{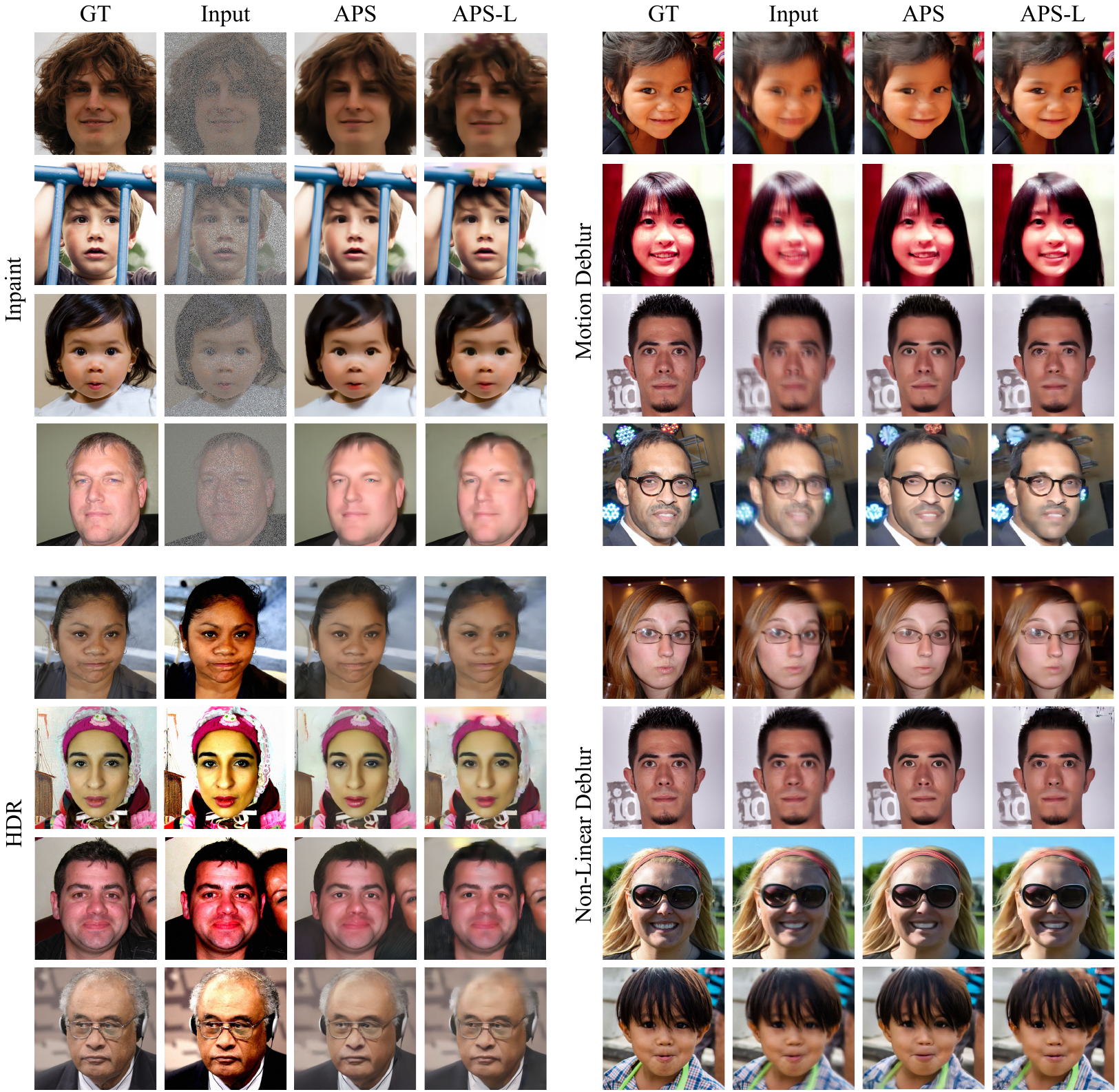}
  \vspace{-8pt}
  \caption{\textbf{Qualitative results on FFHQ}
    for linear (top 4 rows: inpaint and motion deblur) and nonlinear (bottom 4 rows: HDR and non-linear deblur) inverse problems .
    }
  \label{fig:ffhq_inverse_appendix}
\end{figure}

\subsubsection{Text-Guided Large Block Inpainting}
\label{sec-addn-exps-bip}
Large block inpainting is a particularly challenging setting for generative models, as it requires filling in large missing regions with semantically coherent and high-fidelity content guided by text descriptions. One interesting application of large block inpainting is virtual try-on~\citep{viton,zhu2024m}, where models must realistically generate clothing or accessories consistent with both a reference garment and the overall body pose.

In a typical real-world fashion catalog, the full body images naturally have rectangular aspect ratios. Since most existing multimodal foundation models are trained on square images (e.g., 512$\times$512 for MMaDA), we fine-tuned MMaDA on a collection of 1024$\times$512 full-body images curated from a fashion dataset ~\citep{zhu2024m}, following preprocessing with segmentation-based cropping and padding to standardize framing. This adaptation enables our base model to better handle rectangular image structures. For training, we have randomly selected 100K images from this dataset, providing a diverse and challenging testbed for inpainting at scale.

\paragraph{Comparison.}  
We compare APS against large-scale continuous diffusion baselines: Imagen3~\citep{imagen}, Flux~\citep{flux}, and HDPainter~\citep{hdpainter}. As illustrated in Figure~\ref{fig:large_block_inpainting}, our method generates realistic clothing textures, with stronger alignment to the reference prompts and fewer artifacts (red boxes highlight failure regions of competing methods). In contrast, Imagen3 and Flux often introduce distorted or inconsistent garment regions, while HD-painter produces less faithful completions with mismatched styles. APS leverages discrete diffusion’s ability to directly reweight categorical distributions under posterior guidance, yielding visually compelling and semantically accurate completions.

\begin{figure}[tbh]
    \centering
    \includegraphics[width=\textwidth]{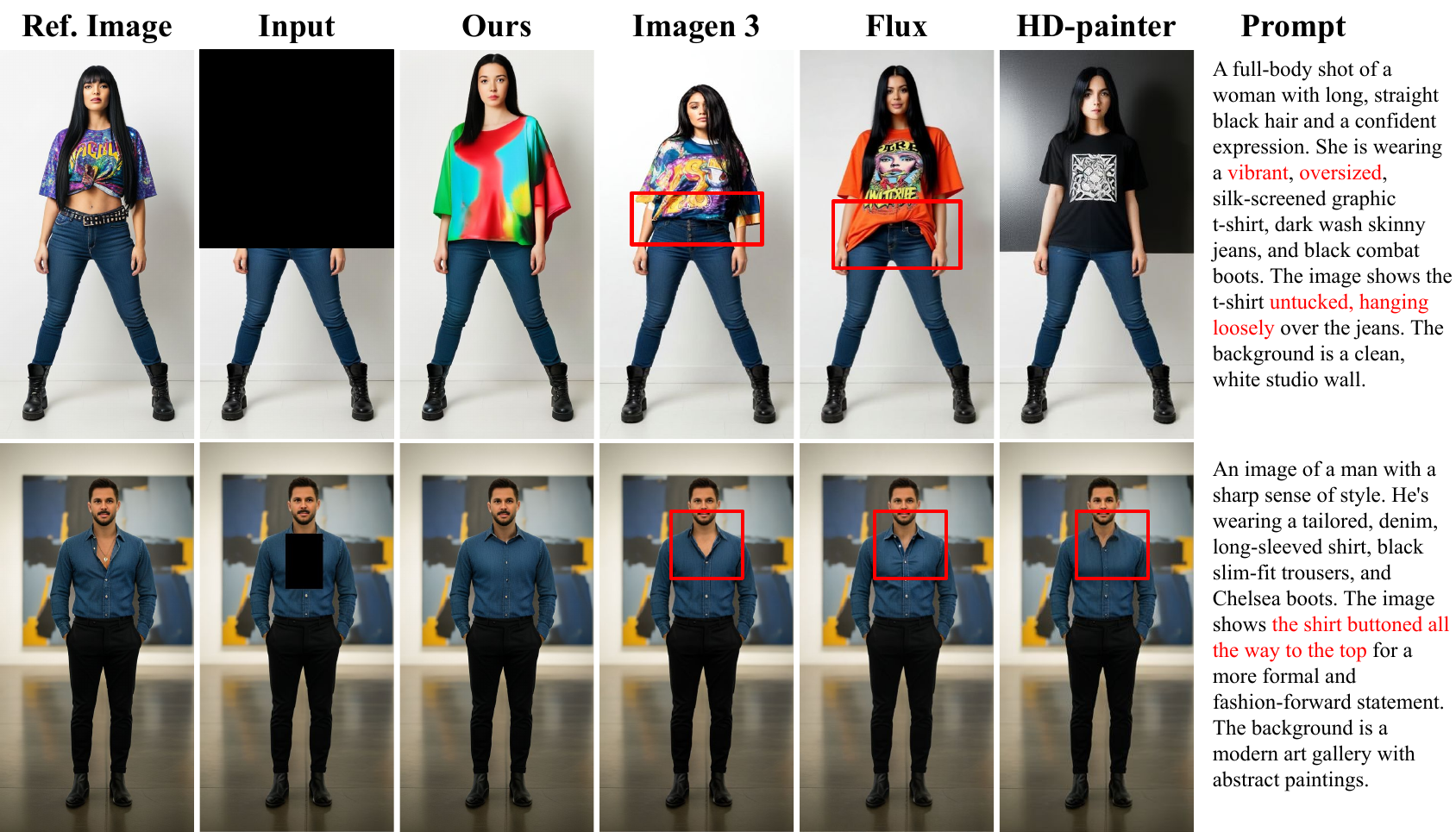}
    \caption{\textbf{Text-guided large-block inpainting on high-resolution (1024$\times$512) fashion images.}
APS generates coherent garment completions with stronger prompt alignment compared to prior methods (Imagen 3 \citep{imagen}, Flux \citep{flux}, and HD-Painter \citep{hdpainter}). Red boxes highlight incorrect or prompt-inconsistent synthesis in competing approaches. In the first row, the prompt specifies an untucked, oversized T-shirt with vibrant colors—details missed by Imagen 3, Flux, and HD-Painter, respectively. In the second row, our approach correctly buttons the shirt all the way to the top.
}
    \label{fig:large_block_inpainting}
\end{figure}

\subsubsection{Stylization}
\label{sec-addn-exps-style}
Figure~\ref{fig:style_appendix_long} presents additional qualitative results on reference-based stylization. In each case, our APS optimizer conditions on a single reference style image and a text prompt describing the desired content. The outputs demonstrate that APS effectively transfers diverse artistic styles---including tattoo art, steampunk mechanical, psychedelic art, and tribal tattoo---while preserving semantic fidelity to the target content. These results highlight the robustness and versatility of APS in handling a wide range of style-content combinations.

\begin{figure}[tbh]
  \centering
  \includegraphics[width=\linewidth, trim={0cm 0cm 0cm 0cm}]{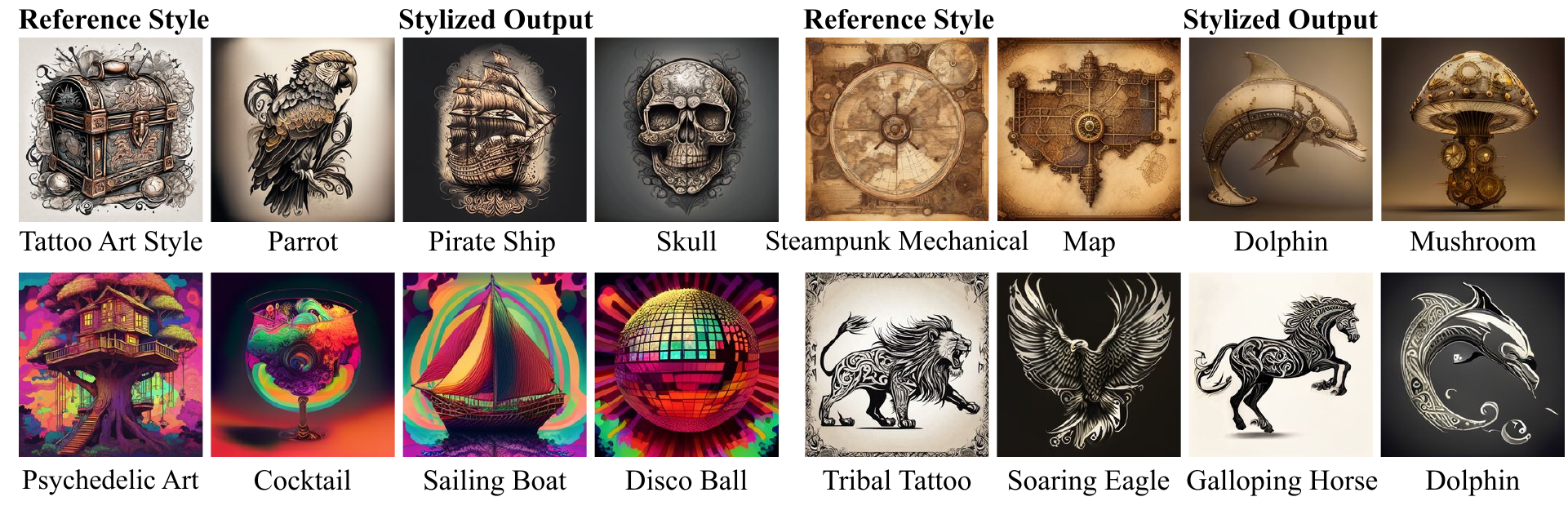}
  \vspace{-20pt}
  \caption{\textbf{Additional qualitative results on reference-based stylization.}
    We show four style-content combinations. For each, our APS optimizer conditions on a single Reference Style image and a text prompt describing the content to generate the Stylized Output images.
    }
  \label{fig:style_appendix_long}
\end{figure}

Figure~\ref{fig:style_sota} provides a qualitative comparison of our full method (MMaDA + APS) against the base model (MMaDA) and state-of-the-art continuous diffusion approaches. This experiment is designed to demonstrate the novel capability of discrete diffusion models for challenging nonlinear style transfer, not solely to outperform continuous alternatives.
We observe that competing methods struggle to balance style fidelity with content alignment.
Training-free methods like StyleAligned~\citep{stylealigned} and InstantStyle~\citep{instantstyle} often drift towards generic textures.
Conversely, the training-based StyleDrop~\citep{styledrop} tends to overfit to superficial color patterns, which compromises semantic coherence with the text prompt.
Our base model, MMaDA \citep{mmada}, maintains reasonable content fidelity but fails to transfer fine-grained style attributes, such as material textures or stroke-level details.
In contrast, our full method (MMaDA + APS) consistently produces outputs that preserve the reference style while maintaining strong semantic alignment.
For instance, our result for the ``letter'' prompt retains the intricate, flowing smoke design, while the ``milkshake'' example accurately captures the specified retro diner aesthetic.
These results highlight the effectiveness of anchored posterior sampling in discrete diffusion models for complex style transfer tasks.

\begin{figure}
  \centering
  \includegraphics[width=\linewidth, trim={0cm 0cm 0cm 0cm}]{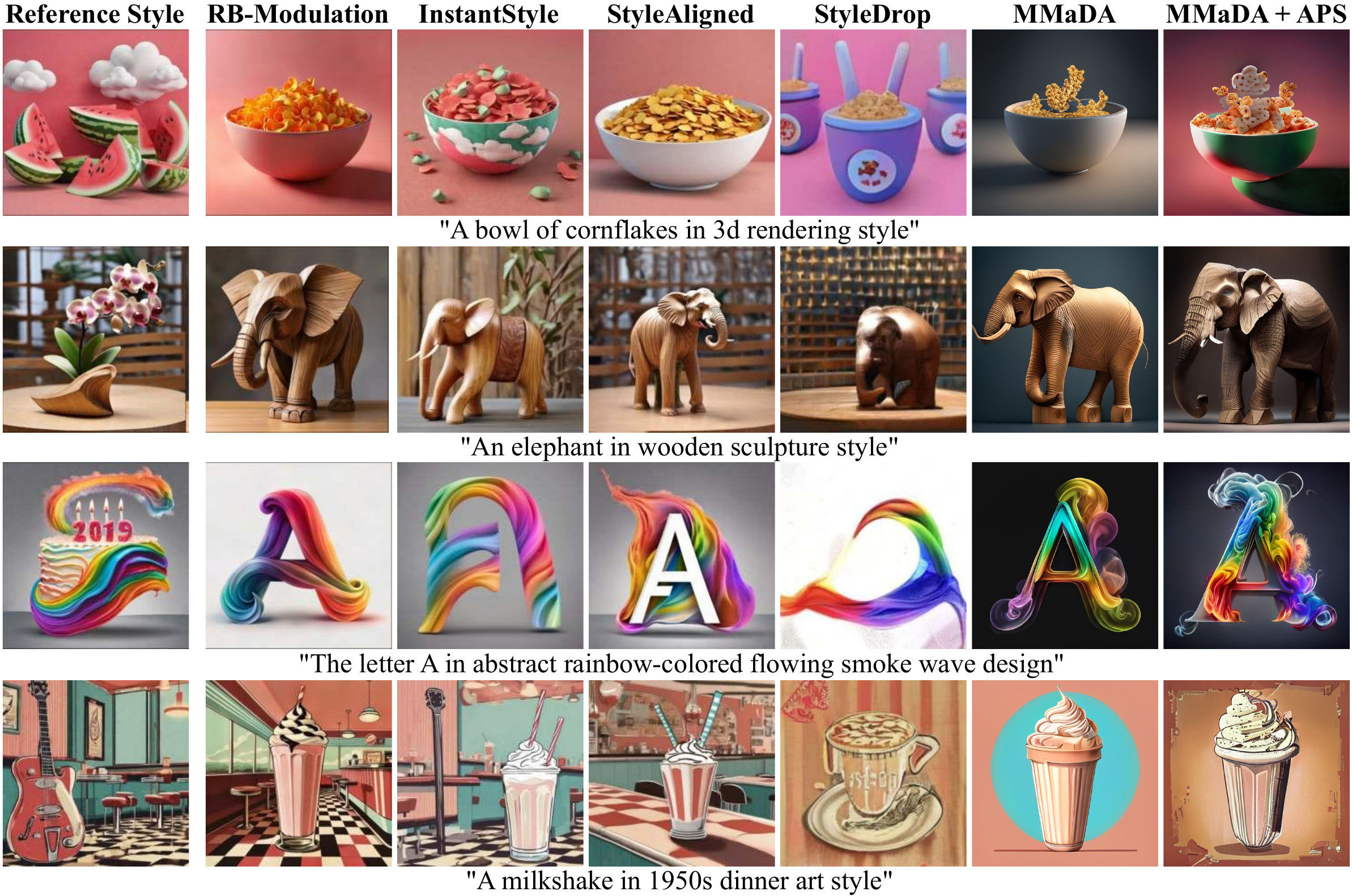}
  \vspace{-15pt}
  \caption{\textbf{Additional qualitative comparison on reference-based stylization.}
    We compare our full method (MMaDA + APS) with the base model (MMaDA) and several state-of-the-art continuous diffusion methods across four style-prompt pairs. For each row, all methods use the same Style Reference image (left) and text prompt (shown below the images).
    }
  \label{fig:style_sota}
\end{figure}

\subsubsection{Additional Quantitative Results}
\label{sec-addn-exps-large-eval}

We perform a larger-scale evaluation for $4\times$ super resolution on FFHQ, extending our analysis to 1000 samples. Our results are compared against the numbers reported for the same task in G2D2~\citep{g2d2}.
Importantly, our observation in the main draft extends to the larger-scale setting and our APS algorithm consistently outperforms G2D2 in all metrics, as given in Table~\ref{tab:sr-ffhq-final-ssim}.

\begin{table}[t]
\centering
\caption{\textbf{Super Resolution (4$\times$) on FFHQ.} Performance comparison of APS against prior works. Continuous methods are shaded {\color{gray!70}gray}.}
\label{tab:sr-ffhq-final-ssim}
\setlength{\tabcolsep}{6pt}
\begin{tabular}{l l ccc}
\toprule
Type & Method & LPIPS $\downarrow$ & PSNR $\uparrow$ & SSIM $\uparrow$ \\
\midrule
\rowcolor{gray!10}
Pixel-domain & DPS & 0.238 & 26.07 & 0.756 \\
\rowcolor{gray!10}
& DDRM & 0.252 & 28.09 & 0.804 \\
\midrule
\rowcolor{gray!10}
LDM & PSLD & 0.282 & 27.12 & 0.757 \\
\rowcolor{gray!10}
& ReSample & 0.508 & 23.07 & 0.445 \\
\midrule
Discrete & G2D2  & \underline{0.265} & \underline{27.29} & \underline{0.763} \\
& G2D2 w/ Markov noise process & 0.369 & 25.15 & 0.699 \\
\midrule
\rowcolor{orange!20}
Mask & \textbf{APS} & \textbf{0.232} & \textbf{27.81} & \textbf{0.808} \\
\bottomrule
\end{tabular}
\end{table}

\subsection{Limitations}
\label{sec-limitations}
Despite achieving state-of-the-art performance among discrete diffusion samplers on (1) complex (linear and nonlinear) inverse problems and (2) reference-based stylization tasks, APS exhibits the following limitations:

\begin{enumerate}[leftmargin=12pt]
    \item \textbf{Tokenizer quality.}
    MMaDA uses MagVIT-v2~\citep{lfq} tokenizer which has limited reconstruction quality compared to modern visual tokenizers~\citep{flux}. Therefore, future improvements in discrete visual tokenizers could directly benefit APS.
    
    \item \textbf{Base model performance.} Discrete diffusion backbones are still in an early stage of development and, at present, underperform large-scale continuous diffusion foundation models such as Flux~\citep{flux}, SD3.5~\citep{sd3}, and Imagen~\citep{imagen} in unconditional generative quality. Nevertheless, our theoretical and empirical results indicate that discrete diffusion shows promising potential for posterior sampling and could, with further advances, become a viable alternative to the continuous models that dominate current practice.
    
    \item \textbf{Stylization dependence.} The performance of APS in stylization tasks depends both on the pretrained discrete diffusion backbone and the quality of the style feature extractor (e.g., CSD). If the style extractor has not been trained on a particular style, our sampler struggles to transfer it faithfully, limiting its applicability to out-of-distribution styles.
    \item \textbf{Approximate posterior sampling.}
    APS optimizes a surrogate variational objective and does not guarantee asymptotically exact posterior samples.
    While this approximation is essential for tractability in high-dimensional discrete spaces,
    it may introduce bias relative to exact discrete MCMC methods, which remain computationally infeasible at scale.
\end{enumerate}

\textbf{Failure Cases.}
Figure~\ref{fig:failure_cases} illustrates failure cases of our approach in stylization. We observe that APS sometimes produces over-smoothed outputs when the reference style is out-of-distribution, or introduces artifacts when the measurement operator is poorly aligned with the pretrained backbone. These examples highlight opportunities for improving robustness and generalization.

\begin{figure}[t]
    \centering
    \includegraphics[width=\linewidth]{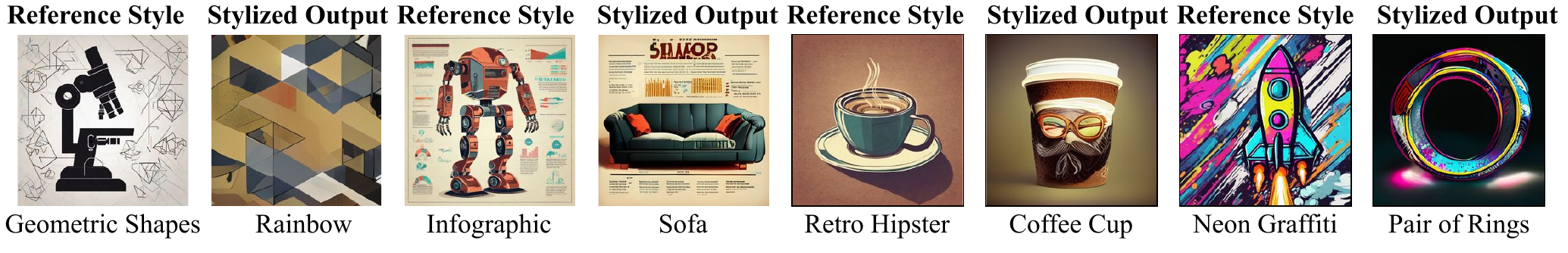}
    \caption{\textbf{Failure cases of APS.} Under extreme circumstances such as out-of-distribution styles or highly nonlinear measurement operators, our method can sometimes fail, producing over-smoothed reconstructions or noticeable artifacts.
}
    \label{fig:failure_cases}
\end{figure}

\subsection{Reproducibility Statement}
\label{sec-reproduce}

Our experiments are built upon the publicly available MMaDA codebase~\citep{mmada}. All modifications and implementation details are described in Appendix~\ref{sec-addn-impl}, which includes the pseudocode in Algorithm~\ref{alg:aps} and the specific parameters used for every experiment. Furthermore, Appendix~\ref{sec-addn-exps-algo} provides comprehensive ablation studies and hyperparameter sweeps (Table~\ref{tab:merged-sweep}). The experiments utilize the widely-used public datasets FFHQ~\citep{ffhq} and ImageNet~\citep{imagenet}. The combination of a public codebase and datasets, along with our detailed Algorithm~\ref{alg:aps} and parameters (\S\ref{sec-addn-exps-algo}), should ensure that our results are readily reproducible.

\end{document}